\documentclass{article}

\usepackage{microtype}
\usepackage{rotating}
\usepackage{graphicx}
\usepackage{subcaption}
\usepackage{booktabs} % for professional tables
\usepackage{listings}
\usepackage{pifont}
\usepackage{longtable}

\usepackage[utf8]{inputenc}
\usepackage[T1]{fontenc}

\usepackage{url}

\usepackage{bbm}

\usepackage{hyperref}

\usepackage[accepted]{icml2026}

\usepackage{amsmath}
\usepackage{amssymb}
\usepackage{mathtools}
\usepackage{amsthm}

\usepackage{wrapfig}

\usepackage{algorithm}
\usepackage{algorithmic}
\usepackage{float}

\usepackage{xcolor}
\usepackage[table]{xcolor}
\usepackage{multirow}
\usepackage{multicol}

\usepackage{enumitem}

\usepackage[capitalize,noabbrev]{cleveref}

\usepackage[most]{tcolorbox}
\definecolor{light-gray}{gray}{0.95} % Light background color

\newtcolorbox{promptbox}[2][]{%
  colback=light-gray,      % Background color
  colframe=black,          % Border color
  coltitle=white,          % Title text color
  title=\textbf{#2},       % Title (e.g., "System Prompt")
  fonttitle=\bfseries\sffamily,
  fontupper=\small\ttfamily, % Typewriter font for content
  boxrule=0.8pt,
  sharp corners,           % Sharp corners for a "tech" look
  enhanced,
  drop shadow,             % Adds a subtle 3D shadow (optional)
  #1
}

\theoremstyle{plain}
\newtheorem{theorem}{Theorem}[section]

\theoremstyle{definition}

\theoremstyle{remark}

\icmltitlerunning{Cupid in the Model Zoo}

\begin{document}

\twocolumn[
  \icmltitle{CUPID in the Model Zoo: Online Matchmaking for Selecting Your Dream LLM}

  % It is OKAY to include author information, even for blind submissions:
  % the style file will automatically hide it unless [accepted] is used.

  \begin{icmlauthorlist}
    \icmlauthor{Son Nguyen}{asu}
    \icmlauthor{Xinyuan Liu}{asu}
    \icmlauthor{Ransalu Senanayake}{asu}
  \end{icmlauthorlist}

  \icmlaffiliation{asu}{School of Computing and Augmented Intelligence, Arizona State University, Tempe, United States of America.}

  \icmlcorrespondingauthor{Son Nguyen}{snguye88@asu.edu}

  % You may provide any keywords that you find helpful for describing your
  % paper; these are used to populate the "keywords" metadata in the PDF but
  % will not be shown in the document
  \icmlkeywords{Large Language Models, Dueling Bandits, User Preference Matching, Personalization, Constrained, Latent}

  \vskip 0.3in
]

% This must go after the closing bracket ] following \twocolumn[ ...

% Use ONE of the following lines. DO NOT remove the command.
% If you have no special notice, KEEP empty braces:
\printAffiliationsAndNotice{}  % no special notice (required even if empty)
% Or, if applicable, use the standard equal contribution text:
% \printAffiliationsAndNotice{\icmlEqualContribution}

\begin{abstract}
Users increasingly face the challenge of selecting an appropriate LLM for a given task from a rapidly growing pool of LLMs, each with distinct but often opaque latent properties. Compounding this challenge, users may lack the vocabulary or awareness to explicitly articulate the characteristics they value in an LLM's responses or deployment. We propose an interaction-efficient active learning framework in which a dueling bandit algorithm iteratively selects pairs of LLMs, collects user feedback about their responses, and updates its belief about the user's latent preferences. We introduce a novel belief-aware upper confidence bound strategy that balances exploration of the model pool with exploitation of inferred preferences, enabling efficient alignment between user needs and LLM capabilities under user-specified cost and time budgets. Through diverse experiments on LLMs and human studies, we experimentally verify that our model can efficiently match well-aligned LLMs to users at a lower cost.
\end{abstract}

\section{Introduction}

\begin{figure}[ht]
    \centering
    \includegraphics[width=0.75\linewidth]{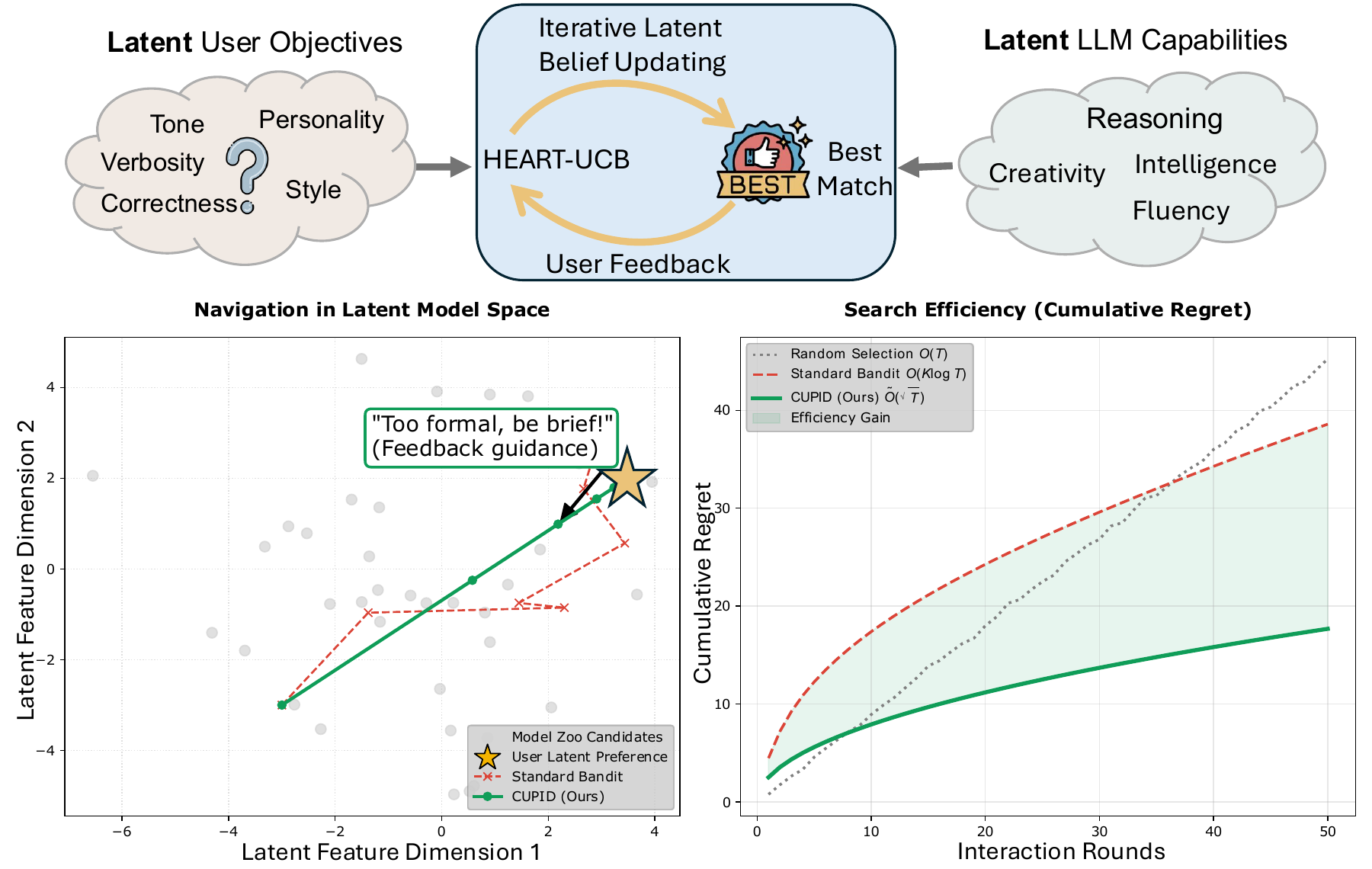}
    \caption{To help users with \emph{latent objectives} to find matching LLMs with \emph{matching capabilities}, CUPID efficiently navigates a latent preference space. This targeted navigation results in lower cumulative regret, allowing the system to quickly identify the optimal model at a lower cost.}
    \label{fig:intro}
\end{figure}

Large language models (LLMs) are rapidly proliferating into enterprise workflows and everyday user needs, where different applications demand distinct LLM capabilities. For example, an LLM supporting a small clinical practice must balance medical competence with administrative reasoning, financial literacy, and the ability to interact with insurance and billing tools as an agentic AI. As the number of LLM variants and providers continues to grow, selecting an appropriate LLM has become increasingly difficult for end users, raising challenges around trust, reliability, and deployment cost. Beyond raw accuracy, users must now consider multiple objectives---including domain specialization, cost, latency, safety, and response characteristics---which are often difficult to formalize upfront as a traditional software requirements specification (SRS). Moreover, unlike traditional machine learning models, LLMs function as dynamic, responsive, intelligent agents, and hence, users have to engage with them directly to determine whether they truly meet their needs.

We argue that an effective LLM matching system in this setting should satisfy the following desiderata:\\
\vspace{0pt} \textbf{Desideratum 1 (Application-centric)}: The system should interact with an application-specific user (or team) to infer individual preferences, rather than relying on generic preferences aggregated from large user populations.\\
\vspace{0pt} \textbf{Desideratum 2 (User-centric)}: The system should leverage cognitively lightweight forms of user feedback (e.g., clinicians and admins) to support expressive model comparisons over multiple, but minimal, interaction turns.\\
\vspace{0pt} \textbf{Desideratum 3 (Budget-aware)}: The system should explicitly respect user-specified time and cost budgets available for the LLM matching process. This requirement is expected to become more pronounced in expensive agentic AI systems with long horizon planning and multiple tools as well as long video generation and world models.

Current matching solutions primarily focus on query-level routing, in which individual prompts are dynamically assigned to models at inference time based on various trade-offs, a setting commonly referred to as LLM routing \citep{chen2024frugalgpt,ding2024hybridllm,chiang2025llmdueling}. This line of work has demonstrated practical impact in large-scale deployments, where providers must optimize per-query efficiency under tight latency and cost constraints. However, routing methods typically operate at the level of isolated queries and assume a fixed or LLM provider-defined notion of utility. Further, routing assumes that utility is either fixed or externally specified (e.g., accuracy, latency) and optimizes each query independently. As a result, routing systems do not model persistent, user- or application-specific latent preferences. In contrast, our goal is to identify one LLM for sustained deployment for an application, using limited interactive comparisons under budgets (e.g., for deployments such as in clinical settings where one must select a compliant LLM for sustained use). Our work targets this complementary regime by focusing on user- and application-level LLM matching, rather than per-query routing, enabling efficient discovery of LLMs that best satisfy individual user needs and preferences~\cite{hutagalung2025llmsatisfaction}.

As depicted in Figure~\ref{fig:intro}, we introduce \emph{a framework}, Cost-aware User-centric Preference Identification and Discovery (\textbf{CUPID}). Since the user's objectives when making a decision are unobserved, we model objectives as a probabilistic belief over a latent variable, which is iteratively updated via a derived update rule. To ensure that the users explore different LLMs sufficiently while prioritizing those that appear most promising, we maximize an upper confidence bound (UCB) over a pool of LLMs. Concretely, CUPID maintains a Gaussian process over latent preference utilities and optimizes a novel budget-aware UCB, derived in this work, using both pairwise preference feedback and optional natural language feedback on LLM responses. This design satisfies Desideratum 1 by learning preferences at the level of an individual user or application rather than relying on population-level aggregates, Desideratum 2 by requiring only lightweight pairwise comparisons and optional language feedback, and Desideratum 3 by explicitly incorporating cost and time constraints into model selection. 

To the best of our knowledge, this is the first attempt to (i) formulate latent dueling bandits using UCB and (ii) integrate language feedback into online interaction for LLM-user matchmaking. Our main contributions are as follows:
\begin{enumerate} 
    \item We formulate \emph{CUPID} as a general \emph{framework} for efficiently matching the user to their favored LLM for a particular application, while respecting the user's budget and time availability for the matchmaking process.
    \item To do this, we derive \emph{HEART-UCB}, a theoretically grounded online UCB \emph{algorithm} that accounts for latent objectives and language feedback. 
    \item We demonstrate the effectiveness of CUPID through extensive \emph{experiments} across baselines and human studies on language and vision modalities. 
\end{enumerate}

\section{Related Work} 

\paragraph{The problem.} A range of approaches have been proposed to route individual queries through low-cost LLMs. FrugalGPT \cite{chen2024frugalgpt} leverages an LLM router, an answer scorer, and a stop judger to learn a cascading strategy that only calls strong models when needed. Hybrid LLM \citep{ding2024hybridllm} uses an encoder model to route easy or hard queries. Other works rely on preference \citep{ong2025routellm} or observational data \citep{tsiourvas2025causal} to train routing policies that map queries to the most suitable models. Some approaches leverage ensembles \citep{jiang2023llm} or recommender systems that have access to model specifications \citep{zhang2025recsys}. These methods differ from ours in three key ways. First, they assume that model capabilities are known or can be reliably estimated from population-level data, whereas we treat them as latent and reveal them only through interaction. Second, they focus on \emph{online query-to-LLM routing}, a fundamentally different paradigm from our user-to-LLM matching problem over sustained interaction. Third, routing optimizes per-query cost–quality trade-offs, whereas our goal is to identify the best-matching LLM under explicit cost and interaction budgets, minimizing user effort during the matching process.

\textbf{The solution.} Multi-armed bandits (MAB) \citep{robbins1952mab} refer to sequential decision problems where a decision-maker chooses among multiple arms with unknown rewards. \citet{yue2012dueling} proposed a preference-based MAB called dueling bandits. \citet{dudik2015ctxdb} developed it into contextual dueling bandits, and \citet{di2025nearly} optimized contextual dueling bandits under adversarial feedback. \citet{sui2018duel} provides an extensive survey for advancements in dueling bandits. Much of this literature assume a known user objectives and aims to identify a winner under standard solution concepts (e.g., a von Neumann winner). In our application, the user objectives are not defined and are unknown \emph{a priori}; preferences arise from latent, hard-to-articulate objectives that must be inferred online. 

While there is past work on latent representations for single arms~\citep{zhou2016latent,maillard2014latent,hong2020latent} and on preference bandits based on Thompson sampling approaches with asymptotic guarantees~\citep{mwai2025latentpreferencebandits}, in this paper, we derive \emph{HEART-UCB}, a latent dueling bandits algorithm, which is well-suited to short-horizon settings such as LLM selection. Further, unlike prior work that uses language primarily for sentiment analysis or context summarization in bandits~\cite{bouneffouf2025multi}, we propose a principled way to incorporate language feedback into the UCB utility. To constrain cost, we reformulate knapsack-based bandit methods originally developed for non-dueling settings~\citep{immorlica2022pd,Badanidiyuru2018bwk} to accommodate dueling feedback with both cost and round budgets. In line with recent bandit research that advances the field through integration of complementary techniques~\citep{di2025nearly,wang2025online}, principled application to new domains~\citep{li2010contextual,agarwal2024mutliarmed,kong2024improved,nguyen2026laser}, or additive extension of classical methods~\citep{Badanidiyuru2018bwk,zhang2024piecewisestationary}, our work contributes along all three axes: formulating latent dueling bandits with UCB, applying them to user-level LLM matching, and introducing bias, penalty, and cooldown mechanisms tailored to the application at hand.

\section{Method}
\subsection{Problem setup}
\label{sec:oblem}

We study an interactive LLM-to-user matching problem under uncertainty and resource constraints. The setup consists of three components.

\paragraph{LLM pool.}
We are given a finite set of $M$ LLMs, $\mathcal{A} = \{a^{(m)}\}_{m=1}^M$. Each LLM possesses distinct latent capabilities (e.g., reasoning quality, verbosity, latency, cost efficiency, ability to process long context) that are not fully specified or observable \emph{a priori}. Hence, neither the user nor our proposed algorithm, CUPID, has direct knowledge of the capabilities  of an arbitrary LLM. This mirrors real-world deployment scenarios where practitioners typically lack a precise understanding of what an arbitrary LLM excels at and must discover its characteristics through use and comparison against alternatives.

\paragraph{User preferences.}
The user also has $K$ unknown, latent objectives, $\mathcal{Z}$, such as accuracy, cost, latency, and response style, when having preferences over LLMs. These preferences are subjective, application-dependent, and hard to explicitly articulate upfront. The user does not know which model best satisfies their objectives and can only express preferences implicitly through pairwise comparisons of model responses. We capture this interaction by sequentially collecting pairwise preference feedback over $T$ rounds, yielding a dataset, $\mathcal{D} = \{(a^{(u)}_t\succ a^{(v)}_t)\}_{t=1}^T$, where $a^{(u)}_t \succ a^{(v)}_t$ indicates that the user preferred the response produced by LLM $a^{(u)}_t$ over LLM $a^{(v)}_t$ at round $t$, when evaluated on the same prompt. 

We model the user preference via a latent function $f: \mathcal{A} \to \mathbb{R}$, where $f(a)$ represents the (unobserved) \emph{utility} assigned by the user to LLM $a$. This formulation allows user preferences to be inferred from comparative feedback, without requiring explicit numerical ratings or objective-specific annotations. Optionally, the user may also provide lightweight natural language feedback at each round, indicating salient objectives or reasons underlying a particular preference. Such feedback, when available, will also be incorporated to supplement preference signals (Section~\ref{sec:lang_feedback}).  

\paragraph{CUPID, the matchmaker.}
CUPID progressively ``matches'' a user to an LLM by mediating interactions and comparisons, using minimal feedback to uncover which model best satisfies the user’s implicit objectives. Its goal is to identify the LLM that best aligns with the user’s latent preferences while respecting explicit resource constraints: a total monetary budget, $\$B$, and an interaction budget, $T$ rounds. At each round, CUPID must balance exploration of uncertain LLM capabilities with exploitation of inferred user preferences, using only lightweight user feedback.

\begin{figure}[t]
    \centering
    \includegraphics[width=\linewidth]{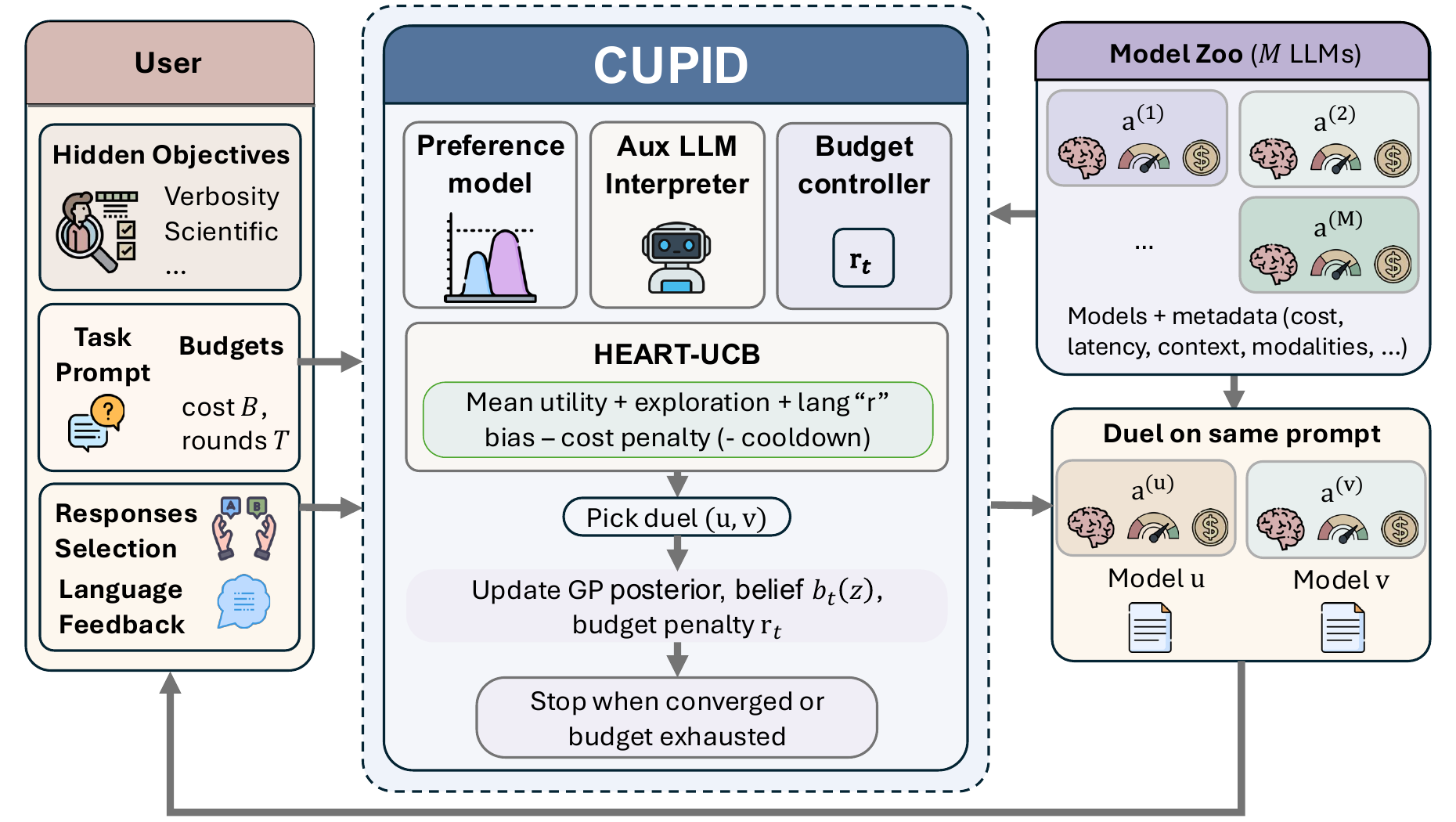}
    \caption{Based on \emph{pairwise comparisons} of LLM responses from the \emph{model zoo} and \emph{optional language feedback} provided by a user with \emph{hidden objectives}, a posterior distribution over model preferences and a latent belief is iteratively updated (red loop), while meeting user specified \emph{cost and round budget constraints}.}
    \label{fig:intro}
\end{figure}

\subsection{HEART-Upper Confidence Bound}

As the core algorithm of the CUPID framework, in this section, we develop a novel UCB, named Human-Elicited Adaptive Resource-aware Tradeoffs (HEART-UCB).

\subsubsection{Preliminaries: UCB} Consider a bandit problem with a finite action set $\mathcal{A}$, where each action $a \in \mathcal{A}$ has an unknown expected reward (we reuse the notation $a$, because, in CUPID, $a=\text{arm}=\text{LLM}$). At round $t$, the learner selects
$
a_t = \arg\max_{a \in \mathcal{A}} \mathrm{UCB}_a(t),
$
where
$
\mathrm{UCB}_a(t) = \hat{\mu}_a(t) + \beta\,\hat \sigma_a(t),
$ with the estimated mean reward, $\hat{\mu}_a(t)$, and an uncertainty measure, $\hat \sigma_a(t)$, and the exploration--exploitation trade-off parameter, $\beta > 0$. 

\subsubsection{Dueling Bandits for LLM Responses} 

Because user utilities $\{ f(a^{(m)}) \}_{m=1}^M$ are latent and only observed indirectly through noisy pairwise comparisons, we require a probabilistic model that (i) represents uncertainty over all LLMs jointly, and (ii) generalizes preference information from observed comparisons to unqueried LLMs. Gaussian processes provide a natural nonparametric prior for this purpose, enabling uncertainty-aware exploration.

We place a Gaussian process prior \citep{Rasmussen2005GPML} over user preference utility $f \sim \mathcal{GP}(0, k(\cdot,\cdot))$. Equivalently, letting $\mathbf{f} = [f(a^{(1)}), f(a^{(2)}), \dots, f(a^{(M)})]^\top$, we have the finite-dimensional Gaussian prior, $\mathbf{f} = \mathcal{N}(\mathbf{0},\mathbf{\Sigma})$, of preferences over all LLMs, where $\mathbf{\Sigma}_{uv} = k(a^{(u)},a^{(v)})$. The kernel, $k(\cdot,\cdot)$, captures similarity between LLMs in terms of their latent capability profiles, allowing preference information about one LLM to inform about others.

When two LLMs are queried and the user preference, $a_t^{(u)} \succ a_t^{(v)}$, at round $t$ is provided, the preference likelihood can be computed using a probit~\cite{chu2005preference},

\begin{equation}
    P(a_t^{(u)} \succ a_t^{(v)} \mid \mathbf{f}) = \Phi\!\left(\frac{f(a_t^{(u)})-f(a_t^{(v)})}{\sqrt{2}\sigma}\right)
    \label{eq:pref_likeli}
\end{equation}

where $\Phi(\cdot)$ is the standard Gaussian CDF and $\sigma^2$ captures observation noise (see Appendix~\ref{sec:app:probit} for the rationale). Given the preference dataset $\mathcal{D}$, \emph{our objective is to compute the posterior utility of each LLM so that, for any pair of LLMs, $a^{(r)}$ and $a^{(s)}$, both (i) the probability that $a^{(r)}$ is preferred over $a^{(s)}$ and (ii) our uncertainty in that estimate} (Appendix~\ref{sec:app:bayes}). With this probit likelihood and the Gaussian process prior, the posterior prediction is intractable, hence requiring an approximate posterior estimation technique. Following~\citet{williams1998bayesian} for Laplace approximation, we can obtain a closed-form, uncertainty-aware estimate of pairwise user preferences for any new pair of queries (Appendix~\ref{sec:app:posterior_predictive}). However, \emph{exhaustively comparing all $\mathcal{O}(M^2)$ LLM pairs is computationally and interaction-wise prohibitive} under limited user feedback and budget constraints. Instead, \textbf{the challenge is to strategically select which LLM comparisons to perform so as to efficiently identify the LLM that best aligns with the user’s preferences}. This naturally leads to a sequential decision making formulation.

We treat \emph{an LLM as an arm} with an unknown latent utility in a bandit setting. At each round $t$, we assign each LLM an upper confidence bound (UCB) score,
\begin{equation}
\mathrm{UCB}^{(m)}_t = \hat \mu^{(m)}_t + \beta \, \hat \sigma^{(m)}_t, \quad \forall m \in \{1,\dots, M \}
\end{equation}
where $\hat \mu^{(m)}_t$ and $\hat \sigma^{(m)}_t$ denote the posterior mean and standard deviation of latent utility of LLM $a^{(m)}$, respectively, obtained from the marginal of the GP posterior over utilities and, $\beta > 0$ trades off exploitation of high-utility LLMs with exploration of uncertain ones.

\subsubsection{Latent Dueling Bandits for Matchmaking} 
In practice, user preferences are rarely driven by a single, fixed objective. Instead, they often reflect a combination of multiple, implicit, hard-to-articulate \emph{objectives (e.g., accuracy, verbosity, latency, cost)}, whose relative importance is difficult to explicitly specify or articulate upfront and may vary across interactions. As a result, the latent utility of an LLM depends not only on the model itself, but also on an unobserved multi-objective state of the user. This requires (i) adapting the pairwise preference model to condition on latent objectives, and (ii) maintaining and updating a belief over these objectives during interaction.

\paragraph{Latent preference model.} Let $\mathcal{Z} = \{ z^{(k)}\}_{k=1}^K$ denote a set of latent user objectives. The elements in $\mathcal{Z}$ jointly represent user objectives such as cheap, verbose, etc., where each $z^{(k)}$ need not correspond to a physically interpretable objective in isolation. To model uncertainty over (unknown) these user objectives, at round $t$, we define a belief vector, $\mathbf{b}_t \in \Delta^{K-1}$, on the $K-1$ probability simplex, where $b_t(z)$ denotes the relative weight assigned to objective $z$, and $\sum_{z \in \mathcal{Z}} b_t(z)=1$. We define a latent utility function, $f: \mathcal{Z} \times \mathcal{A} \to \mathbb{R}$, where $f({z},a)$ represents the latent utility of LLM $a$ under objective ${z}$. By replacing $f(a_t)$ with $f(a_t,z_t)$ in (\ref{eq:pref_likeli}) to condition on the objective vector ${z}$, we derive a posterior predictive as summarized in Theorem~\ref{thm:gp_pref_predict_z}.

% the users preference between two LLMs $a_v$ and $a_u$ is modeled using a probit likelihood:
% \begin{equation}
% P(a_v \succ a_u \mid \mathbf{z}_t,f) = \Phi\left(\frac{f(\mathbf{z}_t, a_v) - f(\mathbf{z}_t, a_u)}{\sqrt{2}}\right).
% \end{equation}

\begin{theorem}[Posterior Predictive Preference under Latent Objectives]
\label{thm:gp_pref_predict_z}
Under the Gaussian process posterior over the latent utility function
$f(z, a)$, the posterior predictive probability that LLM $a^{(r)}$
is preferred over LLM $a^{(s)}$, conditioned on a latent objective
$z$ and preference data $\mathcal D_{1:t}$, satisfies,
\begin{equation}
P(a^{(r)} \succ a^{(s)} \mid z, \mathcal D_{1:t})
=
\Phi\!\left(
\frac{\hat{\mu}^{(r)}_{ z} - \hat{\mu}^{(s)}_{ z}}
{\hat{\sigma}_{ z}}
\right),
\end{equation}
where $\hat{\mu}_{ z}^{(m)}$ and
$
\hat{\sigma}_{ z}^2
=
2\sigma^2
+
\hat{\Sigma}^{(rr)}_{ z}
+
\hat{\Sigma}^{(ss)}_{ z}
-
2\hat{\Sigma}^{(rs)}_{ z}
$
denote the posterior mean and variance of the latent
utility $f( z, a)$. Here, $\hat\Sigma$ is the posterior covariance matrix of the latent utility vector $\mathbf{f}$, obtained under the Laplace approximation and $\hat\Sigma^{(ij)}$ denotes its $(i,j)$-th entry.
\end{theorem}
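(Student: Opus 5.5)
The plan is to fix the latent objective $z$ and treat the conditional model exactly as the single-objective probit GP model of \citet{chu2005preference}, with $f(z,\cdot)$ in place of $f(\cdot)$. The key modelling step is to write the probit likelihood (\ref{eq:pref_likeli}), conditioned on $z$, as a latent-noise model. Specifically, the user perceives $g(z,a) = f(z,a) + \epsilon_a$ with $\epsilon_a \sim \mathcal N(0,\sigma^2)$ drawn independently per evaluation, and reports $a^{(r)} \succ a^{(s)}$ iff $g(z,a^{(r)}) > g(z,a^{(s)})$. Integrating out $\epsilon_r - \epsilon_s \sim \mathcal N(0,2\sigma^2)$ recovers $\Phi\bigl((f(z,a^{(r)})-f(z,a^{(s)}))/(\sqrt2\sigma)\bigr)$. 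This representation is what allows the predictive probability to be computed as a Gaussian integral.

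The steps, in order, are as follows.
\begin{enumerate}
\item For fixed $z$, form the posterior $p(\mathbf f_z \mid \mathcal D_{1:t}) \propto \mathcal N(\mathbf f_z;\mathbf 0,\mathbf\Sigma_z)\prod_{t'}\Phi(\cdot)$ over the utility vector of the $M$ LLMs.
\item Invoke the Laplace approximation following \citet{williams1998bayesian}. Take the MAP $\hat{\mathbf f}_z$ as the mean, so that $\hat\mu^{(m)}_z = \hat f_z^{(m)}$. Take the covariance $\hat\Sigma_z = (\mathbf\Sigma_z^{-1} + W_z)^{-1}$, where $W_z$ is the negative Hessian of the log-likelihood at the MAP. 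The result is an approximate Gaussian posterior $\mathcal N(\hat{\boldsymbol\mu}_z,\hat\Sigma_z)$.
\item Project onto the pair. The difference $d = f(z,a^{(r)}) - f(z,a^{(s)}) = \mathbf e^\top \mathbf f_z$ with $\mathbf e = \mathbf e_r - \mathbf e_s$ is Gaussian, with mean $\hat\mu^{(r)}_z - \hat\mu^{(s)}_z$ and variance $\mathbf e^\top\hat\Sigma_z\mathbf e = \hat\Sigma^{(rr)}_z + \hat\Sigma^{(ss)}_z - 2\hat\Sigma^{(rs)}_z$.
\item Compute the predictive probability $\int \Phi(d/(\sqrt2\sigma))\,\mathcal N(d;m,v)\,\mathrm dd$.
\end{enumerate}

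For step 4 I would use the standard Gaussian--probit identity $\mathbb E[\Phi((d)/s)] = \Phi\bigl(m/\sqrt{s^2+v}\bigr)$. This is proved directly from the latent-noise representation: $P(d + \eta > 0)$ with $\eta \sim \mathcal N(0,s^2)$ independent of $d$ gives $d + \eta \sim \mathcal N(m, v + s^2)$. With $s^2 = 2\sigma^2$, this yields exactly $\hat\sigma_z^2 = 2\sigma^2 + \hat\Sigma^{(rr)}_z + \hat\Sigma^{(ss)}_z - 2\hat\Sigma^{(rs)}_z$, which is the claimed expression.

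The main obstacle is interpreting the ``$=$'' correctly. The identity is exact only for the Laplace-approximate Gaussian posterior, not for the true non-Gaussian posterior. I would therefore state explicitly that the theorem holds under the approximation named in the statement.

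A second subtlety is what ``conditioned on $z$'' means when $z$ is latent. I would handle it by treating the GP over $\mathcal Z \times \mathcal A$ with data attributed to $z$, i.e. using the $z$-slice of the joint posterior. Since the marginal of a joint Gaussian Laplace posterior on the $z$-slice is again Gaussian with the corresponding sub-blocks of mean and covariance, steps 3--4 go through unchanged. The only hypothesis needed is that the Hessian is computed at the MAP, so that $W_z$ is positive semidefinite; this holds because $-\log\Phi$ is convex.
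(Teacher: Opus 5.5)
Your proposal is correct and follows the paper's argument: fix $z$, use the Laplace-approximate Gaussian posterior $\mathcal N(\hat f_{t,z},\hat\Sigma_{t,z})$ for $a\mapsto f(z,a)$, form the Gaussian difference $\Delta_z = f(z,a^{(r)})-f(z,a^{(s)})$ with variance $\hat\Sigma^{(rr)}_z+\hat\Sigma^{(ss)}_z-2\hat\Sigma^{(rs)}_z$, and apply the Gaussian--probit convolution identity with scale $\sqrt{2}\sigma$. The only difference is that you derive that identity from the latent-noise representation ($P(d+\eta>0)$ with $\eta\sim\mathcal N(0,2\sigma^2)$), whereas the paper cites it as standard.
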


\begin{proof}[Proof sketch]
Redefine the likelihood in~(\ref{eq:pref_likeli}) in terms of the joint latent utility $f(z,a)$. Applying Bayes’ theorem yields the posterior distribution over $\mathbf f$, which is approximated using a Laplace approximation. See Appendix~\ref{sec:app:posterior_predictive}.
\end{proof}

\paragraph{Belief over latent objectives.} 
% Since the belief over objectives $\mathbf{b}_t$ at the $t$-th round is unobserved, we maintain a belief distribution, $b_t({z}) = P({z}_t={z} \mid \mathcal{D}_{1:t})$ at round $t$ over the objective space $\mathcal{Z}$.
After observing a comparison outcome $a^{(u)}_t \succ a^{(v)}_t$ at round $t$, we update the belief via Bayes' rule (in log domain for numerical stability):
\begin{equation}
\label{eq:belief_update_bayes}
    \log{b_{t+1}({z})} \propto \log{b_t({z})} + \log{P(a^{(u)}_t \succ a^{(v)}_t \mid {z},\mathcal{D}_{1:t})},
\end{equation}
then normalize with,\\ $\mathbbm{1}_{a^{(u)}_t \succ a^{(v)}_t} \cdot P(a^{(u)}_t \succ a^{(v)}_t \mid {z},\mathcal{D}_{1:t}) + \mathbbm{1}_{a^{(v)}_t \succ a^{(u)}_t} \cdot P(a^{(v)}_t \succ a^{(u)}_t \mid {z},\mathcal{D}_{1:t})$ with preference indicator $\mathbbm{1}$.

Departing from vanilla UCB that assumes a fixed reward distribution, to select LLMs under uncertainty over both latent utilities and user objectives, with $\hat \mu_z$ and $\hat \Sigma_z$ from Theorem~\ref{thm:gp_pref_predict_z}, we obtain a belief-aware UCB (bUCB) for each LLM by marginalizing over all latent belief on objectives:
\begin{equation}
\label{eq:belief_ucb}
\begin{split}
    \mathrm{bUCB}_t^{(m)} 
    &= \sum_{k=1}^K{b_t({z}^{(k)}) \left(\hat \mu^{(m)}_{z}+\beta \sqrt{\hat \Sigma^{(mm)}_{z}} \right)}. 
\end{split}
\end{equation}

This formulation generalizes dueling bandits to a latent multi-objective setting, where user preferences arise from an unknown and evolving combination of objectives. By jointly reasoning over uncertainty in both latent utilities and objective states, the algorithm adaptively selects informative comparisons and efficiently identifies LLMs that best align with the user’s preferences (see Algorithm~\ref{alg:main1}).

\subsubsection{Modulating Language Feedback}
\label{sec:lang_feedback}

In certain cases (e.g., LLM provider-published metadata, model cards, benchmarks, user studies, etc.), we have access to a finite set of potential objectives or attributes (e.g., cost, latency, verbosity, safety, domain specialization), even though the user's true preference over these objectives is unknown and difficult to articulate precisely. In such cases, \emph{natural language feedback can convey directional intent}, highlighting which subset of objectives should be emphasized. Our rationale is inspired by findings in psychology \citep{tversky1992choice}, which show that when alternatives trade off different attributes and are similarly attractive, decision makers often defer choice and seek additional information. In such settings, language feedback provides such directional information, allowing CUPID to bias exploration toward relevant LLMs while leveraging previously learned pairwise preferences.

%We incorporate such feedback as a bounded, short-term shaping signal on top of the belief UCB, without modifying the underlying preference model. 
At round $t$, the user may optionally provide a brief natural language cue (e.g., ``I need a cheap model''). As illustrated in Appendix~\ref{sec:app:lang}, an auxiliary LLM (we choose Grok 4.1 Fast for its semantics capabilities \citep{xai2025grok41}, see Appendix~\ref{app:aux-llm} for auxiliary LLM comparison) is prompted with this feedback together with model metadata and produces a set of arm-level indicator functions,
$
\{ \mathbbm{1}_{a_t^{(m)}} \in \{0,1\} \}_{m=1}^M
$,
that signals if the LLM $a_t^{(m)}$ aligns with the direction expressed in the user’s language feedback at round $t$. 

A scaled indicator function, $l_t^{(m)} \, \mathbbm{1}_{a_t^{(m)}}$, is then added as bias to $\text{bUCB}_t^{(m)}$ updates in (\ref{eq:belief_ucb}). We define this scaling factor as,
$
l_t^{(m)}
=
\max\!\big(\min(\lambda \text{SD}_t,\;\overline{\lambda} \text{SD}_t),\; \underline{\lambda} \big),
$
where $\text{SD}_t$ is the standard deviation of $\{\mathrm{bUCB}_t^{(m)}\}_{m=1}^M$ in (\ref{eq:belief_ucb}) that provides a robust measure of bias dispersion, ensuring that the influence of language feedback adapts to how well-separated models currently are. $\lambda$ controls the strength of language influence, $\overline{\lambda}$ enforces a safety cap, and $\underline{\lambda}$ ensures a minimum nonzero effect. The rationale accompanied by an illustration of this computation is provided in Appendix~\ref{sec:app:lang}.

\subsubsection{Enforcing Budget Constraints} 
\label{sec:heart_budget}
To strictly control resource consumption (maximum monetary budget $B$ and interaction rounds $T$) while maximizing utility, we adopt a time-varying resource penalty, $r_t$, which increases when the algorithm spends budget faster than a target rate. By adding a cost sensitivity parameter, $\tau$ to the per-round expenditure, $B/T$, the \emph{effective per-round expenditure} is defined as $c_\text{target} = B/{(T + \tau)}$,
where $T+\tau > 0$. The parameter $\tau$ allows the user to adjust the pacing of the budget: $\tau > 0$ tightens the budget to conserve resources for later rounds (conservative) and $\tau < 0$ encourages earlier exploration of expensive models to speed up convergence (aggressive). $\tau$ can also be a nonstationary variable. 

At each round $t$, after selecting a pair of models and incurring a cost $c_{t}$, we update the resource penalty as
$
    r_{t+1} = \max\left(0, r_t + \eta \left(c_t - c_\text{target} \right)\right)
$
for $\eta > 0$. This penalty $r_t$ acts as a conversion factor between cost and utility. When scoring a candidate pair of LLMs with provider given input-output cost, $C^{(m)}$, we subtract the penalty $C^{(m)} \cdot r_t$ from their bUCB scores. As a result, if the system overspends ($c_t > c_\text{target}$), $r_t$ grows, disproportionately penalizing expensive pairs. This forces the algorithm to shift exploration toward cheaper alternatives until the average spending realigns with $c_\text{target}$. Conversely, if spending is low, $r_t$ decays to zero, allowing the algorithm to freely select high-utility models regardless of cost.

\begin{algorithm}[t]
    \caption{CUPID}
    \label{alg:latent_dueling_bandit}
    \begin{algorithmic}[1]
        \REQUIRE LLM pool $\{a^{(m)}\}_{m=1}^M$, budget $\$B$, rounds $T$ \\
        Initialize latent belief $b_0(z)$
        \FOR{$t = 1$ {\bfseries to} $T$}
            \STATE $\hat \mu_t, \hat \Sigma_t \leftarrow$  GP posterior ($\mathcal{D}_{1:t-1}, b_{t-1}(z)$)
            \STATE $\text{HEART-UCB}_t^{(m)} \leftarrow $$\text{Update}\left(\hat \mu_t, \hat \Sigma_t,b_{t-1}(z), x_{t-1} \right)$%,  $\forall m$
            \STATE $a^{(u)}, a^{(v)} \leftarrow \text{TopArms}\left(\{\text{HEART-UCB}_t^{(m)} \}_{m=1}^M \right)$
            \STATE $y^{(u)} \leftarrow \text{LLM}(a^{(u)})$; $y^{(v)} \leftarrow \text{LLM}(a^{(v)})$
            \STATE $\mathcal{D}_t \leftarrow \mathcal{D}_{t-1} \cup Is\left(a^{(u)} \succ a^{(v)}; y^{(u)}, y^{(v)} \right)$
            \STATE $x_t \leftarrow$ UserLang($y^{(u)}, y^{(v)}$) (optional language)
            % \STATE Update the model's GP and Laplace approximation with the latest comparison
            \STATE $b_t(z) \leftarrow \text{BeliefUpdate}\left(b_{t-1}(z)\right)$
        \ENDFOR
    \end{algorithmic}
    \label{alg:main1}
\end{algorithm}

\subsubsection{HEART-UCB Update} 
By incorporating bias terms in Sections~\ref{sec:lang_feedback} and \ref{sec:heart_budget} into (\ref{eq:belief_ucb}) to \emph{jointly optimize} the latent utility, language alignment, and cost, we propose Human-Elicited Adaptive Resource-aware Tradeoffs $\text{HEART-UCB}_t^{(m)}$

\begin{equation}
\begin{aligned}
\label{eq:FLaRe-dbUCB}
% \text{HEART-UCB}_t^{(m)}
&=
\underbrace{
\sum_{k=1}^K
\underbrace{b_t(z^{(k)})}_{\text{belief}}
\Big(
\underbrace{\hat \mu_t(z)}_{\text{expected utility}}
+
\underbrace{\beta_0 \sqrt{\hat \Sigma^{(mm)}_z}}_{\text{exploration bonus}}
\Big)
}_{\text{user objective-aware UCB}}
\\[0.3em]
&\quad
+\underbrace{\;\beta_1 l_t^{(m)} \, \mathbbm{1}_{a_t^{(m)}}}_{\text{language-driven bias}}
\;
-\underbrace{\;\beta_2 C^{(m)} r_t}_{\text{resource penalty}}
\;
-\underbrace{\;\beta_3 w^{(m)}_t}_{\text{cooldown}},
\end{aligned}
\end{equation}

where $\beta_0, \beta_1, \beta_2$, and $\beta_3$ are scaling factors, and the optional cooldown term $w^{(m)}_t$ penalizes arms that have been selected frequently in recent rounds, encouraging diversity and preventing the algorithm from repeatedly querying the same LLM when several candidates have comparable utility. HEART-UCB is our main algorithmic contribution and, to the best of our knowledge, extends standard UCB in two key ways: by formulating latent dueling bandits and by incorporating language-driven UCB bias shaping.

\subsection{Effect of Language Feedback on Preference.}
\label{sec:theory}
Language feedback in HEART-UCB enters as an additive bias on the latent utility and induces a structured deformation of pairwise preference boundaries, rather than introducing uncontrolled nonstationarity (Figure~\ref{fig:bias}). For any context $z \in \mathcal{Z}$ and arms $u,v \in \mathcal{A}$, we define the language-shaped utility
$
g_\star(z,u) = f_\star(z,u) + \gamma B(u),
$
which modifies the probit comparison probability to
\[
\mathbb{P}(a_u \succ a_v \mid z)
=
\Phi\!\left(
\frac{f_\star(z,u)-f_\star(z,v)+\gamma \Delta B(u,v)}{\sqrt{2}}
\right),
\]
with $\Delta B(u,v)=B(u)-B(v)=\;\beta_1 l_t^{(m)} \, \mathbbm{1}_{a_t^{(m)}}$. As a result, the $\tfrac{1}{2}$-probability decision boundary is translated by $\gamma \Delta B(u,v)$ in latent utility space. This translation is monotone in the feedback bias and uniformly bounded, implying that language feedback shifts preferences in a predictable and controlled manner without overpowering the intrinsic utility signal.

When budget constraints are incorporated, the effective shaped utility becomes
$
h_\star(z,u)=f_\star(z,u)+\gamma B(u)-\kappa c(u),
$
leading to the tie condition
$
f_\star(z,u)-f_\star(z,v)+\gamma \Delta B(u,v)-\kappa \Delta c(u,v)=0,
$
where $\Delta c(u,v)=c(u)-c(v)$. This linear relation reveals an explicit exchange rate between qualitative guidance and quantitative cost: holding intrinsic utilities fixed, one unit of relative language bias offsets $\gamma/\kappa$ units of additional cost. Language feedback and budget constraints interact additively and transparently in the latent space. Language feedback functions as a bounded shaping signal that steers exploration toward user-preferred regions of the model space while preserving asymptotic efficiency and robustness. We provide a regret bound analysis in Appendix~\ref{app:regret}

% \newpage

\section{Experiments}
\begin{table*}[ht]
\centering
\caption{Total cost, convergence round (CR), and convergence percentage (CP) for multi-objective preference experiments on OpenAI models. Red boxes indicate failure to converge on all experiments, bold values indicates the best performance, and underlined values indicates the second best performance. $K$ is the number of latent objectives and $M^*$ is the number of LLMs that are equally good at satisfying the $K$ objectives.}
\resizebox{\textwidth}{!}{
\renewcommand{\arraystretch}{1.2}
\begin{tabular}{lcccccccccccccccc}
\toprule
\textbf{Algorithms} & & & & \multicolumn{3}{c}{ \textbf{Setup A1 ($K=5, M^*=2$)}} & \multicolumn{3}{c}{\textbf{Setup A2 ($K=7, M^*=10$)}} & \multicolumn{3}{c}{\textbf{Setup A3 ($K=9, M^*=2$)}} & \multicolumn{3}{c}{\textbf{Setup A4 ($K=5, M^*=5$)}} \\
\cmidrule(lr){5-7} \cmidrule(lr){8-10} \cmidrule(lr){11-13} \cmidrule(lr){14-16}
& & & & Cost (\$) $\downarrow$ & CR $\downarrow$ & CP $\uparrow$ & Cost (\$) $\downarrow$ & CR $\downarrow$ & CP $\uparrow$ & Cost (\$) $\downarrow$ & CR $\downarrow$ & CP $\uparrow$ & Cost (\$) $\downarrow$ & CR $\downarrow$ & CP $\uparrow$ \\
\midrule
Latent Dueling Bandits     & & & &
\$0.108 $\pm$ 0.016 & N/A & \cellcolor{red!12}0\%    &
\$0.049 $\pm$ 0.002 & 1 & 100\% &
\$0.125 $\pm$ 0.006 & N/A  & \cellcolor{red!12}0\% &
\$0.122 $\pm$ 0.018 & 6.75 & \cellcolor{red!12}80\% \\
RUCB     & & & &
\$0.161 $\pm$ 0.034 & N/A & \cellcolor{red!12}0\%    &
\$0.116 $\pm$ 0.033 & 1 & \cellcolor{red!12}33.3\% &
\$0.117 $\pm$ 0.036 & 1   & \cellcolor{red!12}33.3\% &
\$0.118 $\pm$ 0.038 & 1 & \cellcolor{red!12}33.3\% \\
LMA  & & & &
\$0.115 $\pm$ 0.057 & 5   & \cellcolor{red!12}60\%   &
\$0.046 $\pm$ 0.026 & 2.6 & 100\% &
\$0.125 $\pm$ 0.066 & 1  & \cellcolor{red!12}20\%  &
\$0.073 $\pm$ 0.061 & 3.5 & \cellcolor{red!12}80\%  \\
Cost-constrained LMA  & & & &
\$0.091 $\pm$ 0.055 & 5.75 & \cellcolor{red!12}80\%   &
\$0.042 $\pm$ 0.022 & 2.6 & 100\% &
\$0.097 $\pm$ 0.062 & 1  & \cellcolor{red!12}20\%  &
\$0.052 $\pm$ 0.044 & 3 & \cellcolor{red!12}80\%  \\
$\epsilon$-greedy     & & & &
\textbf{\$0.055 $\pm$ 0.062} & 2.5 & \cellcolor{red!12}80\%    &
\$0.053 $\pm$ 0.058  & 1.8 & 100\% &
\underline{\$0.058 $\pm$ 0.063} & 2.5  & \cellcolor{red!12}80\% &
\$0.075 $\pm$ 0.079  & 2  & \cellcolor{red!12}80\% \\
\midrule
CUPID without cost penalty  & & & &
\$0.081 $\pm$ 0.033 & 4.2 & 100\% &
\underline{\$0.037 $\pm$ 0.014} & 1 & 100\% &
\$0.066 $\pm$ 0.050 & 2 & 100\% &
\$0.091 $\pm$ 0.027 & 3   & 100\% \\
CUPID without language  & & & &
\$0.060 $\pm$ 0.005 & 4.2 & 100\% &
\$0.040 $\pm$ 0.020 & 1.6 & 100\% &
\textbf{\$0.022 $\pm$ 0.027} & 2   & 100\% &
\textbf{\$0.058 $\pm$ 0.013} & 3.4 & 100\% \\
\textbf{CUPID} & & & &
\underline{\$0.056 $\pm$ 0.016} & 2.2 & 100\% &
\textbf{\$0.035 $\pm$ 0.014} & 1  & 100\% &
\$0.070 $\pm$ 0.003 & 2   & 100\% &
\underline{\$0.063 $\pm$ 0.012} & 2.4 & 100\% \\
\bottomrule
\end{tabular}
}
\label{tab:exp1}
\end{table*}

In this section, we evaluate the performance of CUPID in terms of alignment with user objectives and cost effectiveness for both text and image generation, capping the maximum cost incurred by the authors for LLM API calls per experiment at \$75 (i.e., for image models). As \textbf{benchmarks}, we compare against a latent dueling bandit setting, $\epsilon$-greedy dueling bandits~\citep{yue2012dueling, yue2009irduel} and two Bradley–Terry–endowed dueling baselines: RUCB~\cite{zoghi2014relative} and LMA, a variant of LMArena in~\cite{chiang2024chatbot, lmarena2025policy} (see Appendix~\ref{app:data-proc}). Since the latter does not impose any budget constraints, it is expected to identify the LLM that best matches user preferences when cost is ignored.

\subsection{Automated Performance Validation}
\label{sec:automated-exp}

\paragraph{Setup.} We first consider a representative real-world scenario in which a user selects a desired LLM from a model zoo. A key requirement for benchmarking is access to a model zoo with pre-specified objectives. We therefore consider a collection of 25 LLMs from \citet{openai_model_compare}, each annotated with $K'=34$ objectives and a fixed budget (see Appendix~\ref{app:model-zoo}). We evaluate four experimental setups (A1–A4 in Table~\ref{tab:exp1}), corresponding to four user types that differ in the number of assigned objectives, $K \subseteq K'$, relevant to the user (Appendix~\ref{app:data-proc}). Each setup is repeated five times, yielding a total of 20 simulated users. To avoid information leakage, CUPID does not observe names of LLMs, and operates solely on objective-level metadata. Since human innate objectives are difficult to directly model, we rely on quantifiable objectives available in the dataset.

\paragraph{Simulating users.} To enable automated evaluation, we simulate human users using an LLM-as-a-judge framework. Specifically, by employing prompt-engineering techniques described in Appendix~\ref{app:llm-prompting} and Appendix~\ref{app:sim_user}, the judge has access to the ground-truth set of 25 LLM metadata and their full set of $K'$ objectives, as well as the subset of objectives $K$ associated with the simulated user type. At each round, the judge observes the LLM selected in the previous round, determines the preferred LLM in the current duel, and generates corresponding natural language feedback using another LLM (e.g., ``I want a cheaper model'') that reflects desired improvements over the previous choice.

\paragraph{Results.}

\begin{figure}[ht]
  \centering
  \includegraphics[
    width=\linewidth,
    trim=0.22cm 3.1cm 0.1cm 1.4cm,
    clip
  ]{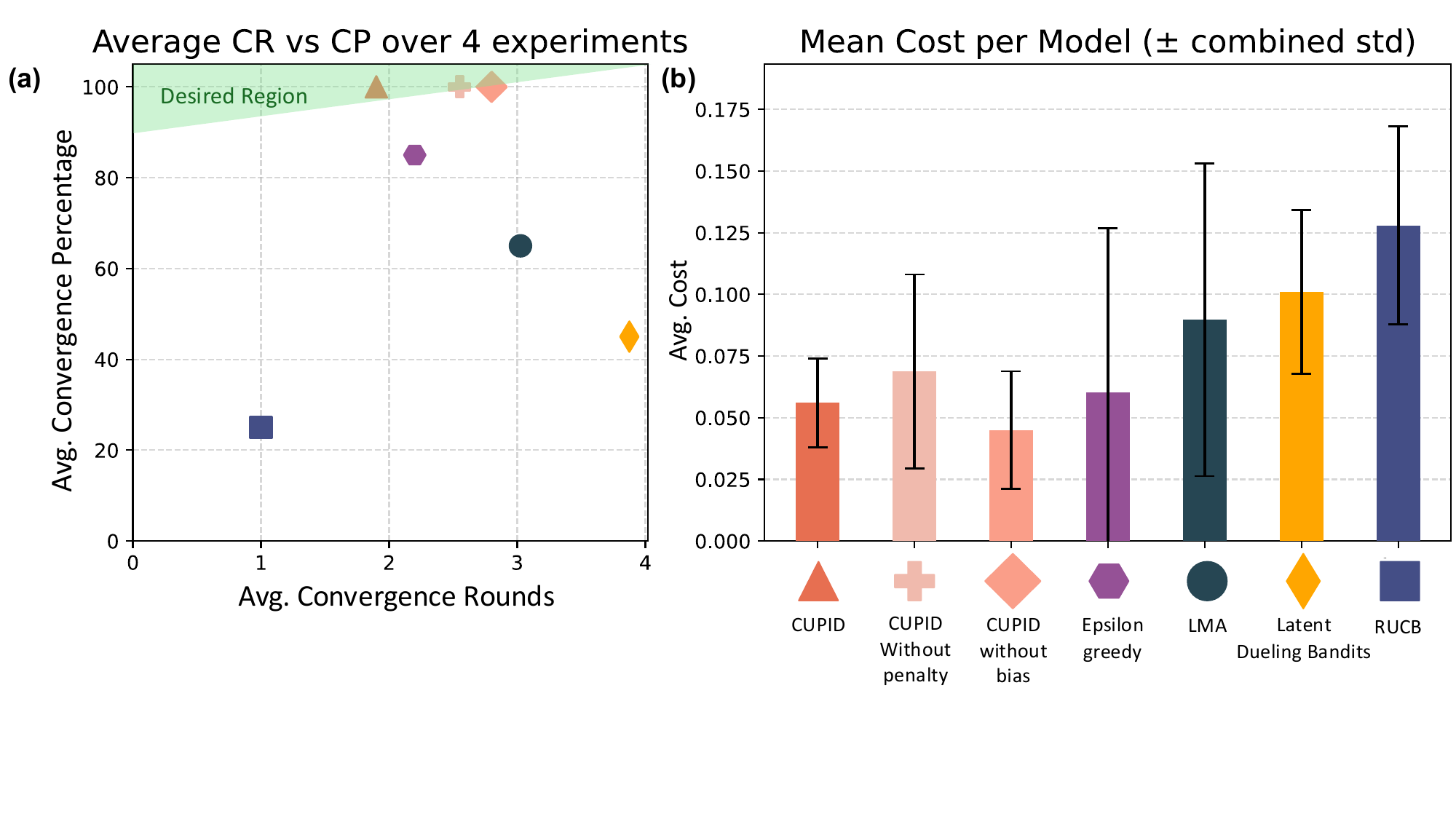}
  \caption{Convergence rounds and convergence percentages (left) and mean cost and standard deviation across all algorithms. All CUPID variants attained the maximum CP while converging at competitive cost and performance.}
  \label{fig:res}
\end{figure}

We make the following conclusions based on Table~\ref{tab:exp1} and Figure~\ref{fig:res}:\\
\noindent \emph{1. Ablations show incorporating a reasonable amount of language feedback can further enhance performance and cost efficiency.} In addition to CUPID, we also ablate the language and cost penalty components of (\ref{eq:FLaRe-dbUCB}). As shown in Figure \ref{fig:res}, CUPID without language attains lower cost, whereas CUPID without cost penalty results in faster convergence but higher cost. Overall, CUPID provides the strongest balance of cost and convergence across all experiments.\\
\noindent \emph{2. CUPID achieves reliable convergence across all experiments.} Across A1--A4 in Table~\ref{tab:exp1}, CUPID and both ablations converge in every run (CP = 100\%). In contrast, the baselines exhibit frequent non-convergence: RUCB and latent dueling bandits fail in most settings (CP = 0\%--33.3\%), while LMArena and $\epsilon$-greedy show inconsistency.\\
\noindent \emph{3. CUPID offers cost-efficiency.} CUPID is cost-efficient and stable with minimal cost variance across all setups, a property that is even more critical in high-cost LLM applications such as video generation, agentic systems with tools, and world models.\\

\subsection{Robustness and Cross-provider Analysis}
\textbf{Setup.} We consider a zoo of 37 LLMs from Artificial Analysis (LLM details in Appendix~\ref{app:robust-model-pool}) on the \textbf{MMLU-Pro} \cite{wang2024mmlupro} and \textbf{AIME 2025} benchmarks \cite{ArtificialAnalysisAIME2025, ArtificialAnalysisMMLUPro,HuggingFaceAIME2025}.

For our robustness experiments, the best performing LLM in our dataset from AIME 2025 is designated as the ground truth math expert (96.7\% accuracy). LLM performance on AIME 2025 is normalized to $[0, 100]$. For each benchmark algorithm, we conduct ten rounds using the first ten prompts from the AIME 2025 dataset. Performance is defined as the normalized score of the final selected model. The procedure terminates early if the best model is identified.

\begin{wrapfigure}{r}{0.48\linewidth}
    \vspace{-10pt} % adjust to pull figure up/down
    \centering
    \includegraphics[
        width=\linewidth,
        trim=0cm 1cm 0cm 0cm,
        clip
    ]{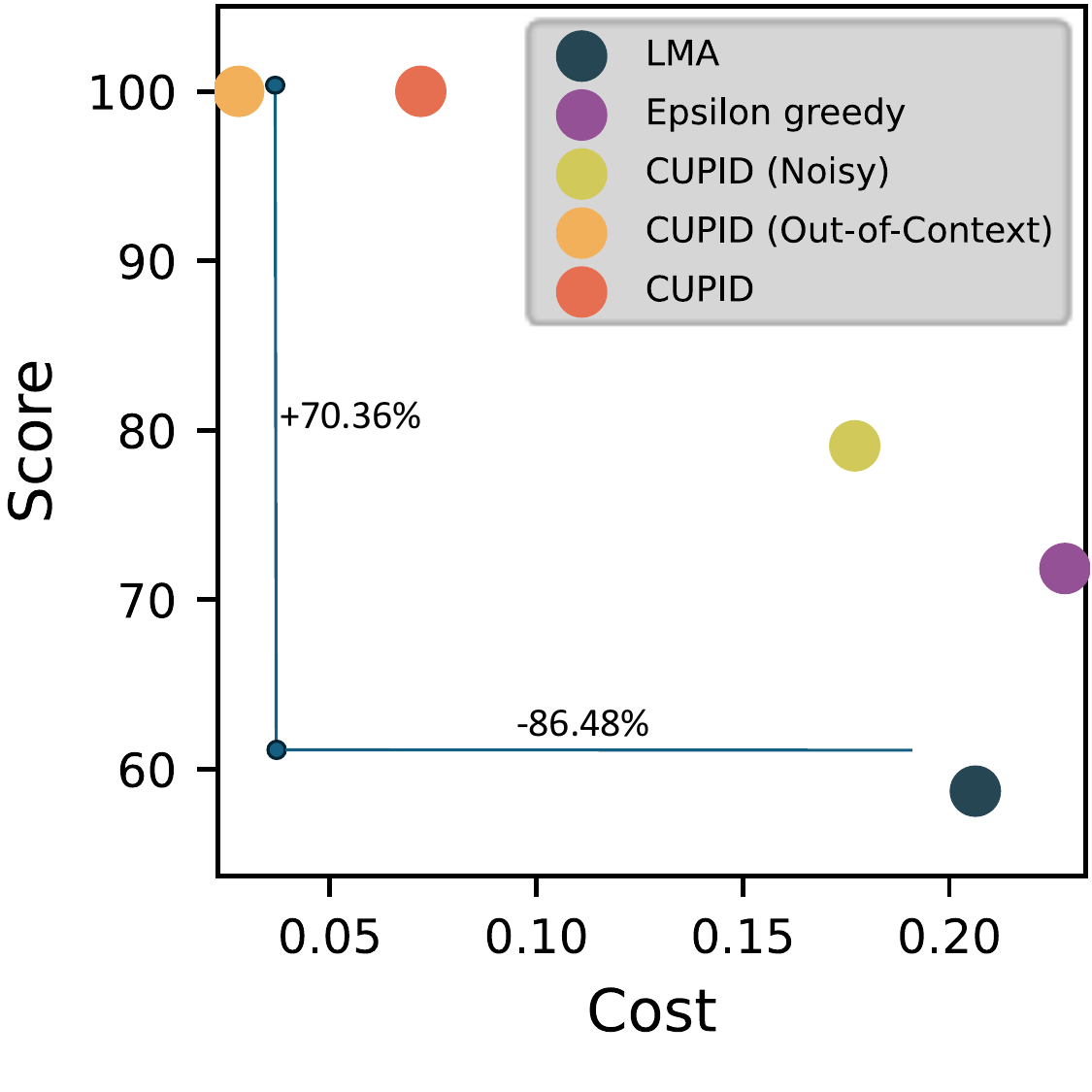}
    \caption{Robustness}
    \label{fig:table_and_scatter}
    \vspace{-10pt} % adjust spacing after figure
\end{wrapfigure}

To verify CUPID's performance over a range of diverse providers in a competitive setting, we further use the same protocol as Experiment~\ref{sec:automated-exp} over 17 models with the highest MMLU Pro score, and use the best one as the ground truth.

\paragraph{Robustness to noisy language feedback.}  As detailed in Appendix~\ref{app:robust-setup}, we consider three types of noisy feedback: (i) \emph{adversarial feedback}, (ii) \emph{static feedback}, (iii) \emph{gibberish feedback}. As shown in Figure~\ref{fig:table_and_scatter}, relative to LMArena, CUPID increases the score by $20.38$ ($+34.7\%$) while lowering cost by \$0.0292 ($-14.2\%$). $\epsilon$-greedy attains a higher score than LMArena ($71.84$) but does so at increased cost by $+10.5\%$. In contrast, CUPID with noisy feedback outperforms $\epsilon$-greedy on both objectives, improving score by $+7.24$ while being $22.3\%$ cheaper.

\paragraph{Robustness to out-of-context objectives.} To evaluate robustness under objective shift, we decouple training objectives from evaluation objectives. Specifically, we use MMLU-Pro to provide objective-level signals reflecting overall model quality across \emph{diverse} subjects, and then evaluate performance on mathematical reasoning \emph{specific} using AIME-2025. Despite this mismatch, CUPID achieves the highest score ($100.0$) at the lowest cost ($\$0.0279$). This corresponds to an $86.5\%$ and $87.8\%$ reduction in cost relative to LMArena and $\epsilon$-greedy, respectively, while simultaneously improving scores by $+41.3$ and $+28.2$ points, respectively.

\paragraph{Cross-provider experiment.} To simulate real world scenario, we run an experiment with a diverse providers model zoo with competitive-performance models.

\begin{table}[ht]
\centering
\caption{Total cost, convergence round (CR), and convergence percentage (CP) on 17 models over diverse providers.}
\label{tab:method-comparison}
\begin{tabular}{lccc}
\toprule
\textbf{Method} & \textbf{Cost (\$)} & \textbf{CR $\downarrow$} & \textbf{CP $\uparrow$} \\
\midrule
\textbf{CUPID}  & \$0.031 $\pm$ 0.005 & 3.2  & 100\% \\
LMA             & \$0.045 $\pm$ 0.023 & 1    & 20\%  \\
$\varepsilon$-greedy & \$0.026 $\pm$ 0.014 & 3.75 & 80\%  \\
\bottomrule
\end{tabular}
\end{table}

CUPID achieves the highest convergence percentage (100\%) with the lowest variance. The heightened cost can be accounted for by the routing model, which takes roughly 30\% of the total cost (Appendix~\ref{app:aux-llm}). Together with the robustness experiments, these results demonstrate CUPID cross-provider generalization capabilities.

\subsection{Scalability and practical runtime}

We expanded the model zoo to $M = 125$ models and shuffled it. Details at \ref{app:scalability}.

\begin{table}[ht]
\centering
\caption{Scalability with the number of models. LLM indicates the use of auxiliary LLM. V2 (M = 75) consists of the first 75 models, V3 (M = 25) uses the OpenAI model zoo, and V4 (M = 5) consists of the last 5 models.}
\label{tab:scalability}
\resizebox{\linewidth}{!}{%
\begin{tabular}{lcccc}
\toprule
\textbf{Practical time in seconds} & \textbf{V1 (125)} & \textbf{V2 (75)} & \textbf{V3 (25)} & \textbf{V4 (5)} \\
\midrule
CUPID overhead & 0.719  & 0.267 & 0.094  & 0.035  \\
CUPID overhead (LLM)    & 4.618  & 2.595 & 3.485  & 1.833  \\
Time-to-selection & 8.532  & 6.196 & 15.200 & 16.414 \\
Time-to-selection (LLM)    & 12.431 & 8.524 & 18.591 & 18.212 \\
\bottomrule
\end{tabular}%
}
\end{table}

\paragraph{Results.} (1) CUPID's algorithmic overhead (GP + UCB + belief) scales with $M$ but remains <1 second even at M = 125--negligible compared to LLM generation time. (2) Time-to-selection is dominated by LLM API latency and actually decreases at larger $M$ in our experiments (V1/V2 are faster than V3/V4), because the model pool happened to contain faster-responding models. (3) The auxiliary LLM adds 2 - 4 seconds per round regardless of $M$, confirming it does not introduce a scalability bottleneck.

\subsection{Validating Human Preference}

\newcommand{\NText}{\textbf{[N\_text]}}
\newcommand{\WinText}{\textbf{[X\%]}}
\newcommand{\WinTextCI}{\textbf{[[L\%, U\%]]}}
\newcommand{\DeltaQText}{\textbf{[$\Delta$]}}
\newcommand{\DeltaQTextCI}{\textbf{[[L, U]]}}
\newcommand{\DeltaCostText}{\textbf{[\$X]}}
\newcommand{\DeltaCostTextCI}{\textbf{[[L, U]]}}
\newcommand{\DeltaRoundsText}{\textbf{[X]}}

\newcommand{\NCons}{\textbf{[N\_cons]}}
\newcommand{\DeltaQCons}{\textbf{[$\Delta$]}}
\newcommand{\DeltaQConsCI}{\textbf{[[L, U]]}}
\newcommand{\NExp}{\textbf{[N\_exp]}}
\newcommand{\DeltaQExp}{\textbf{[$\Delta$]}}
\newcommand{\DeltaQExpCI}{\textbf{[[L, U]]}}
\newcommand{\NOpen}{\textbf{[N\_open]}}
\newcommand{\DeltaQOpen}{\textbf{[$\Delta$]}}
\newcommand{\DeltaQOpenCI}{\textbf{[[L, U]]}}

% Image study
\newcommand{\NImg}{\textbf{[N\_img]}}
\newcommand{\DeltaQImg}{\textbf{[$\Delta$]}}
\newcommand{\DeltaQImgCI}{\textbf{[[L, U]]}}
\newcommand{\WinQImg}{\textbf{[X\%]}}
\newcommand{\WinQImgCI}{\textbf{[[L\%, U\%]]}}
\newcommand{\DeltaBImg}{\textbf{[$\Delta$]}}
\newcommand{\DeltaBImgCI}{\textbf{[[L, U]]}}
\newcommand{\SaveCostImg}{\textbf{[\$X]}}
\newcommand{\SaveCostImgCI}{\textbf{[[L, U]]}}
\newcommand{\SaveRoundsImg}{\textbf{[X]}}
\newcommand{\SaveRoundsImgCI}{\textbf{[[L, U]]}}

\begin{figure*}[ht]
    \centering
    \includegraphics[width=\linewidth]{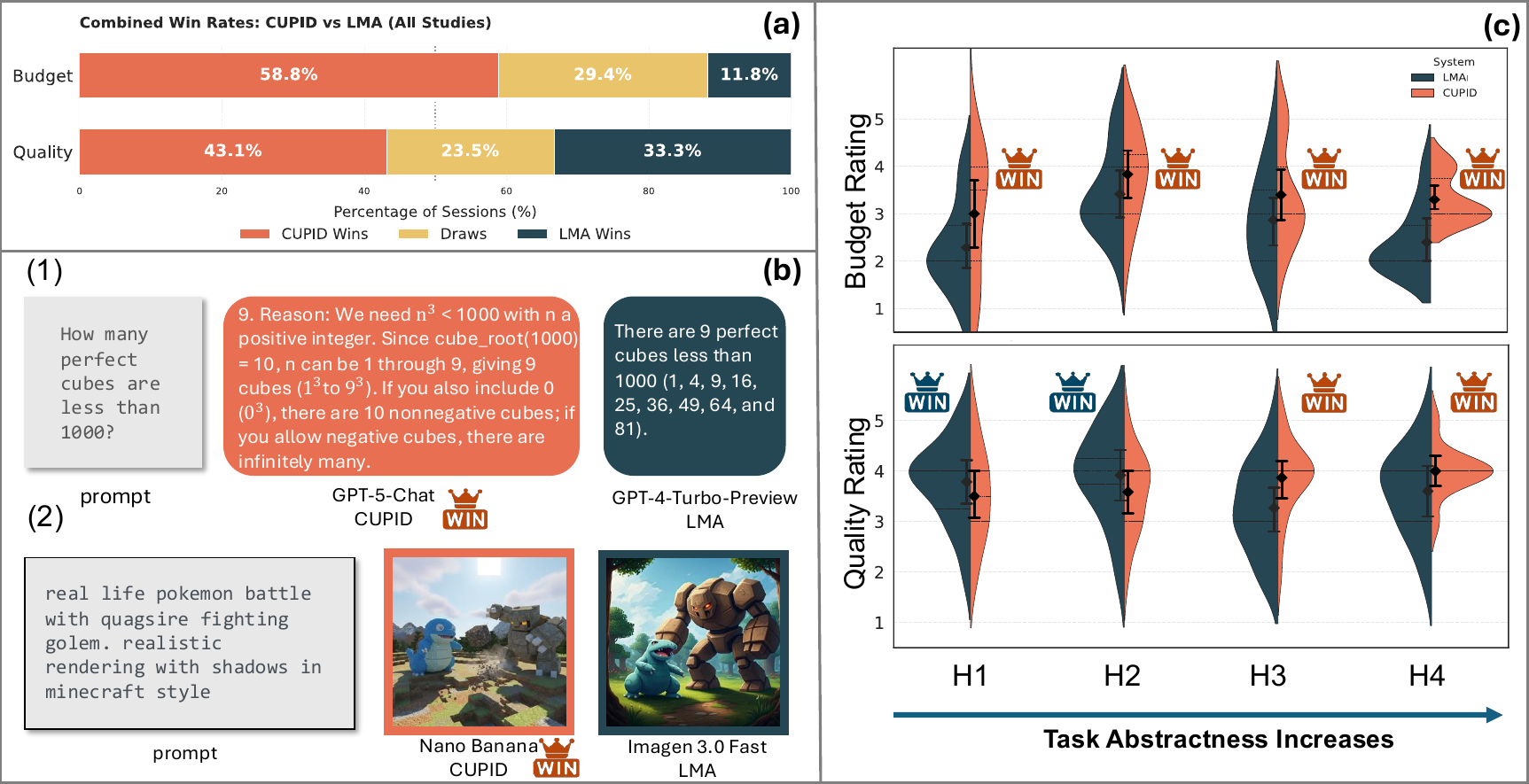}
    \caption{
    Human study results.
    (a) Combined win rates across all studies show CUPID outperforming the LMA baseline in both budget and quality.
    (b) Final matched model examples: (1) Text example; (2) Image example.
    (c) Rating results with different levels of abstractness.}
    \label{fig:human-text}
\end{figure*}

\paragraph{Study overview.}
We conduct two A/B-blinded, within-subject user studies for text generation and text-to-image generation to evaluate CUPID's ability to match users to an LLM under realistic, resource-limited selection. Each study consists of three phases: (i) \emph{Battle rounds.} In each round, participants interact in parallel with two LLMs suggested by CUPID and LMA, respectively (Appendix~\ref{app:human-study}). Participants submit a single prompt to both algorithms and indicate their preferred response within each LLM (A or B is preferred). After all battle rounds, CUPID and LMA each nominate their own winning LLM. (ii) \emph{Evaluation rounds.} Participants submit new prompts and observe the responses generated by the two winning LLMs. (iii) \emph{Rating phase.} Participants rate each winning LLM on a 1–5 Likert scale (higher is better) based on response quality (alignment) and budget efficiency. Throughout the study, all 51 \emph{participants were blinded} to both the underlying LLM identities and the selection algorithm (CUPID or LMA).

We consider four \emph{levels of abstractness}, where higher levels require a potentially larger number of latent objectives:
 \textbf{Setup H1.} Participants are provided with 2--4 explicit objectives and can view LLM specifications at each round.\\
\textbf{Setup H2.} Participants self-identify a domain (e.g., math) and aim to select a domain-appropriate LLM. While constrained to the chosen domain, they have greater flexibility than in H1.\\
 \textbf{Setup H3.} Participants freely determine their objectives, unknown to the selection algorithm, based purely on subjective latent objectives.\\
\textbf{Setup H4.} Similar to H3, but for text-to-image generation. 

\paragraph{Results.} By construction, LMA is expected to select the highest-quality model, as it ignores cost considerations. In contrast, CUPID explicitly trades off quality and cost and may therefore select lower-cost models. Nonetheless, as shown in Figure~\ref{fig:human-text}(a), user ratings indicate that \emph{CUPID achieves clear cost reductions while also maintaining comparable, or even better, model quality relative to LMA}. This effect becomes increasingly pronounced from H1 through H4. In simpler settings with only a few well-specified objectives (H1), CUPID can be redundant, although it still yields cost savings. As objective abstractness increases and user objectives become more latent (H2–H4), the need to infer hidden objectives grows, and CUPID's advantage becomes more evident, consistently delivering strong quality–cost trade-offs. Appendix~\ref{app:human-study} includes additional analysis.

\section{Discussions}
\subsection{The Cold Start Problem}
Bandit problems are generally susceptible to cold start problems. In round 1, the GP posterior equals the prior: all models have zero mean utility and equal variance under a stationary kernel. Selection is driven by (a) the language bias (if provided) and (b) the cost penalty. To mitigate this, we require an initial direction from the user, preventing over-exploration. Each pairwise outcome triggers two simultaneous updates: (1) the GP posterior and (2) the Bayesian belief over objectives. This dual update gives each sample a compounding effect, enabling faster convergence even from uninformative priors.

\subsection{Non-technical Users/High-stakes Domains}
The assumption that user preferences reliably reflect true model quality is a general limitation of any human- or AI-driven feedback system, though it remains practical and informative in the absence of ground truth:

(1) CUPID's interaction is intentionally lightweight. The core mechanism is pairwise comparison (``Which response do you prefer?''), requiring no technical vocabulary. Optional language feedback is free-form, not structured.

(2) CUPID's advantage grows with task abstractness. Figure~\ref{fig:human-text}(c) shows CUPID's quality advantage increases from H1 (explicit) to H3/H4 (latent), suggesting greater benefit for non-technical users who cannot manually navigate model specifications.

(3) Domain verification is an open challenge. In high-stakes domains, CUPID could be extended with expert-validated test prompts or content-aware GP kernels (Section 5). This limitation applies equally to all preference-based methods.

Stratifying by self-reported AI familiarity, lower-familiarity participants (scores 2–3,  where 1 = rarely used, 5 = daily use) achieved CUPID quality ratings of 3.67–3.63, comparable to or exceeding LMA ratings of 3.20–4.00 from higher-familiarity users (scores 4–5). This suggests that CUPID's guided exploration compensates for limited user expertise, enabling non-expert users to achieve competitive performance.

CUPID’s modular architecture (Figure 2) allows adding safeguards without modifying the core HEART-UCB algorithm: filtering fluent-but-inaccurate models prior to selection via a hard constraint in the budget controller; adding calibration accuracy $z_{\text{acc}}$ into the belief vector $b_t$; adding static expert-ranked weights $S=[s_1,\ldots,s_M]$ to the language-driven bias every round.

\section{Conclusions and Future Work}
We introduced CUPID, a user-centric, budget-aware framework for matching users to LLMs under latent and difficult-to-articulate preferences. To this end, we derived a novel upper confidence bound named HEART-UCB. While our results demonstrate strong performance, the current Gaussian process takes into account the pair-wise preference from the user but does not automatically analyze the content of the response. A promising direction for future work is to learn response content-aware embeddings and incorporate them into the GP kernel to further accelerate convergence. Another direction is to incorporate informative priors derived from continuously evolving benchmarks and evaluation suites to improve cold-start performance. Sensitivity to prior and kernel specification also remains an important direction for future work. Finally, scaling our interface to a larger online participant pool would enable the first benchmark for user-level LLM matching.

\newpage
\section*{Impact Statement}
This work aims to improve how users select LLMs by reducing unnecessary cost, interaction overhead, and reliance on poorly matched models. By enabling budget-aware and preference-driven model selection using lightweight feedback, CUPID can help individuals and organizations make more informed and efficient deployment decisions, potentially lowering financial and environmental costs associated with LLM usage.

CUPID does not introduce new generative capabilities or alter model outputs, but instead focuses on mediating access to existing models. As such, its primary societal impact lies in improving usability, transparency, and efficiency rather than amplifying risks related to content generation. Potential misuse, such as optimizing solely for cost at the expense of safety, can be mitigated through appropriate constraints and responsible deployment practices.

% (You would add references here eventually, using ICML style.)
\bibliography{cupid}
\bibliographystyle{icml2026}

\newpage
\appendix
\onecolumn

%%%%%%%%

% ==========================================
% APPENDIX A: THEORETICAL ANALYSIS
% ==========================================
\section{Theoretical Analysis and Derivations}
\label{sec:app:mainmethod}
% \section{Main method related}

\subsection{Why probit?}
\label{sec:app:probit}
Several likelihood models have been proposed for preference learning, including the Bradley–Terry (logistic) model, temperature-scaled variants thereof, and multi-way choice models such as Plackett–Luce. Bradley–Terry–type formulations are particularly effective in non-Bayesian or optimization-driven settings (e.g., gradient-based), but they primarily yield point estimates (e.g., MAP utilities) and provide limited support for principled posterior uncertainty estimation. Temperature-scaled variants introduce additional flexibility but rely on heuristic tuning and lack a clear probabilistic interpretation. Plackett–Luce models are well suited for multi-way ranking but impose stronger structural assumptions and are unnecessary in pairwise dueling settings. In contrast, the probit likelihood arises naturally from a Gaussian latent utility model and integrates cleanly with Gaussian process priors, enabling well-calibrated posterior uncertainty estimates that are critical for short-horizon exploration and reliable decision-making in our setting.

\subsection{Bayesian Update for GP Preference Learning}
\label{sec:app:bayes}

Let $\mathcal{A}=\{a^{(1)},\dots,a^{(M)}\}$ be the model set and $f\in\mathbb{R}^M$ be the latent utility vector with entries
$f_m := f(a^{(m)})$. We place a GP prior
$f \sim \mathcal{N}(0, \Sigma)$ where $\Sigma_{ij}=k(a^{(i)},a^{(j)})$.

At round $t$ we observe a comparison $(i_t,j_t,y_t)$ where $y_t=+1$ means $a^{(i_t)} \succ a^{(j_t)}$ and $y_t=-1$ means
the reverse. Under a probit preference model with noise standard deviation $\sigma$$>0$,
\begin{align}
\Pr(y_t=+1 \mid f) &= \Phi\!\left(\frac{f_{i_t}-f_{j_t}}{\sqrt{2}\,\sigma}\right),\\
\Pr(y_t=-1 \mid f) &= \Phi\!\left(\frac{f_{j_t}-f_{i_t}}{\sqrt{2}\,\sigma}\right),
\end{align}
where $\Phi$ is the standard Gaussian CDF.

Given data $D_t := \{(i_k,j_k,y_k)\}_{k=1}^t$, Bayes’ rule yields
\[
p(f\mid D_t) \propto p(f)\prod_{k=1}^t p(y_k\mid f).
\]

\subsection{Laplace Approximation and Marginals}
Because the probit likelihood is non-Gaussian, $p(f\mid \mathcal{D}_t)$ is intractable.
Let the (negative) log-posterior be
\begin{equation}
\mathcal{L}_t(f) \;:=\; -\sum_{k=1}^t \log p(y_k\mid f) \;+\; \frac{1}{2} f^\top K^{-1} f \;+\; \text{const}.
\end{equation}
Let $\hat f_t$ be the MAP estimator:
\begin{equation}
\hat f_t \;\in\; \arg\min_{f\in\mathbb{R}^M} \mathcal{L}_t(f).
\end{equation}
Define the Hessian at the MAP:
\begin{equation}
H_t \;:=\; \nabla^2 \mathcal{L}_t(\hat f_t) \;=\; K^{-1} + W_t,
\qquad
W_t := -\nabla^2 \Big(\sum_{k=1}^t \log p(y_k\mid f)\Big)\Big|_{f=\hat f_t}.
\end{equation}
The Laplace approximation is
\begin{equation}
p(f\mid \mathcal{D}_t) \;\approx\; \mathcal{N}(f \mid \hat f_t,\; \hat\Sigma_t),
\qquad
\hat\Sigma_t := H_t^{-1}.
\end{equation}
Therefore, the marginal posterior for arm $m$ is Gaussian with
\begin{equation}
\hat\mu_{t,m} := \mathbb{E}[f_m\mid \mathcal{D}_t]\approx (\hat f_t)_m,
\qquad
\hat\sigma^2_{t,m} := \mathrm{Var}(f_m\mid \mathcal{D}_t)\approx (\hat\Sigma_t)_{mm}.
\end{equation}

\paragraph{1. Laplace Approximation of the Posterior.}
Since the probit likelihood is non-Gaussian, the true posterior $p(\mathbf{f} \mid \mathcal{D})$ is analytically intractable. We apply the Laplace approximation to approximate the posterior as a multivariate Gaussian centered at the Maximum A Posteriori (MAP) estimate $\hat{\mathbf{f}}$:
\begin{equation}
    p(\mathbf{f} \mid \mathcal{D}) \approx \mathcal{N}(\mathbf{f} \mid \hat{\boldsymbol{\mu}}, \hat{\Sigma}),
\end{equation}
where $\hat{\boldsymbol{\mu}} = \hat{\mathbf{f}}_{\text{MAP}}$ and $\hat{\Sigma} = (\mathbf{K}^{-1} + \mathbf{W})^{-1}$ is the inverse Hessian of the negative log-posterior.

\paragraph{2. Distribution of the Utility Difference.}
Let $\delta = f(a^{(r)}) - f(a^{(s)})$ be the difference in latent utility between the two items. Since $\mathbf{f}$ is approximated as Gaussian, the linear transformation $\delta$ is also Gaussian distributed:
\begin{equation}
    p(\delta \mid \mathcal{D}) = \mathcal{N}(\delta \mid \mu_\delta, \sigma_\delta^2),
\end{equation}
where the mean and variance are given by:
\begin{align}
    \mu_\delta &= \hat{\mu}_r - \hat{\mu}_s, \\
    \sigma_\delta^2 &= \hat{\Sigma}_{rr} + \hat{\Sigma}_{ss} - 2\hat{\Sigma}_{rs}.
\end{align}

\paragraph{3. Solving the Convolution.}
Substituting the probit likelihood $P(a^{(r)} \succ a^{(s)} \mid \mathbf{f}) = \Phi\left(\frac{f_r - f_s}{\sqrt{2}\sigma}\right)$ into the integral, we seek to solve:
\begin{equation}
    P(a^{(r)} \succ a^{(s)} \mid \mathcal{D}) = \int_{-\infty}^{\infty} \Phi\left(\frac{\delta}{\sqrt{2}\sigma}\right) \mathcal{N}(\delta \mid \mu_\delta, \sigma_\delta^2) \, d\delta.
\end{equation}
We utilize the standard identity for the convolution of a Gaussian Cumulative Distribution Function (CDF) with a Gaussian Probability Density Function (PDF):
\begin{equation}
    \int \Phi\left(\frac{x}{\beta}\right) \mathcal{N}(x \mid \mu, \tau^2) \, dx = \Phi\left(\frac{\mu}{\sqrt{\beta^2 + \tau^2}}\right).
\end{equation}
Matching terms, we have $x = \delta$, $\mu = \mu_\delta$, $\tau^2 = \sigma_\delta^2$, and $\beta = \sqrt{2}\sigma$. Substituting these into the identity yields:
\begin{equation}
    P(a^{(r)} \succ a^{(s)} \mid \mathcal{D}) = \Phi\left( \frac{\hat{\mu}_r - \hat{\mu}_s}{\sqrt{ (\sqrt{2}\sigma)^2 + \sigma_\delta^2 }} \right).
\end{equation}
Finally, expanding $\sigma_\delta^2$ gives the result in Theorem~\ref{thm:gp_pref_predict}:
\begin{equation}
    P(a^{(r)} \succ a^{(s)} \mid \mathcal{D}) = \Phi\left( \frac{\hat{\mu}_r - \hat{\mu}_s}{\sqrt{ 2\sigma^2 + \hat{\Sigma}_{rr} + \hat{\Sigma}_{ss} - 2\hat{\Sigma}_{rs} }} \right).
\end{equation}

\subsection{Proof of Theorem 3.1}
\label{sec:app:posterior_predictive}

\begin{theorem}[GP preference prediction~\cite{chu2005preference}]
\label{thm:gp_pref_predict}
The GP prior and the probit pairwise likelihood defined above, the posterior predictive probability that LLM $a^{(r)}$ is preferred over LLM $a^{(s)}$ satisfies,
\begin{equation}
P(a^{(r)} \succ a^{(s)} \mid \mathcal{D})
=\Phi\!\left(\frac{\hat\mu^{(r)}-\hat\mu^{(s)}}{\hat\sigma}\right),
\end{equation}
where, for the LLM $a^{(m)}$, $\hat\mu_{m}$ denotes the posterior mean utility and $
\hat\sigma^2 = 2\sigma^2+\hat\Sigma^{(rr)}+\hat\Sigma^{(ss)}-2\hat\Sigma^{(rs)}$. Here, $\hat\Sigma$ is the posterior covariance matrix of the latent utility vector $\mathbf{f}$, obtained under the Laplace approximation and $\hat\Sigma^{(ij)}$ denotes its $(i,j)$-th entry.
\end{theorem}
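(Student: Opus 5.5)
We prove the statement by treating the Laplace posterior as exact Gaussian and integrating the probit likelihood against it. The predictive probability is defined as the posterior expectation of the likelihood,
\[
P(a^{(r)} \succ a^{(s)} \mid \mathcal{D}) = \int \Phi\!\left(\frac{f_r - f_s}{\sqrt{2}\sigma}\right) p(\mathbf{f}\mid\mathcal{D})\, d\mathbf{f}.
\]
We replace $p(\mathbf{f}\mid\mathcal{D})$ by its Laplace approximation $\mathcal{N}(\hat{\mathbf f}, \hat\Sigma)$ with $\hat\Sigma = (K^{-1}+W)^{-1}$, as derived above. The integrand depends on $\mathbf f$ only through $\delta = f_r - f_s = (e_r - e_s)^\top \mathbf f$, so we change variables to this scalar.

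\textbf{Step 1: reduce to one dimension.} A linear functional of a Gaussian vector is Gaussian. Hence $\delta \sim \mathcal{N}(\mu_\delta, \sigma_\delta^2)$, with $\mu_\delta = \hat\mu^{(r)} - \hat\mu^{(s)}$ and $\sigma_\delta^2 = (e_r-e_s)^\top \hat\Sigma (e_r - e_s) = \hat\Sigma^{(rr)} + \hat\Sigma^{(ss)} - 2\hat\Sigma^{(rs)}$. The $M$-dimensional integral therefore becomes $\int \Phi(\delta/(\sqrt2\sigma))\,\mathcal N(\delta\mid\mu_\delta,\sigma_\delta^2)\,d\delta$.

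\textbf{Step 2: evaluate the Gaussian--probit integral.} This is the step that carries the actual content; everything else is bookkeeping. We avoid quoting the identity and instead use a latent-variable argument. Write $\Phi(\delta/\beta) = P(\epsilon \le \delta)$ with $\epsilon \sim \mathcal N(0,\beta^2)$ independent of $\delta$, and take $\beta = \sqrt2\sigma$. Then the integral equals $P(\epsilon - \delta \le 0)$. Since $\epsilon - \delta \sim \mathcal N(-\mu_\delta, \beta^2 + \sigma_\delta^2)$, this probability is
\[
\Phi\!\left(\frac{\mu_\delta}{\sqrt{\beta^2+\sigma_\delta^2}}\right).
\]
Substituting $\beta^2 = 2\sigma^2$ gives $\hat\sigma^2 = 2\sigma^2 + \hat\Sigma^{(rr)} + \hat\Sigma^{(ss)} - 2\hat\Sigma^{(rs)}$, which is exactly the claimed expression.

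\textbf{Remaining points and the main subtlety.} We must check that $\hat\Sigma$ is a valid covariance matrix. It is positive definite because $K^{-1} \succ 0$ and $W \succeq 0$. The latter holds because $-\log\Phi$ is convex, so the negative log-likelihood of each comparison is convex in $\mathbf f$, being a convex function composed with a linear map. In particular $\hat\sigma^2 \ge 2\sigma^2 > 0$, so the expression is well defined. The main caveat is that the equality is exact only under the Laplace approximation; the statement inherits this approximation, and we make that explicit.

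Theorem~\ref{thm:gp_pref_predict_z} then follows by applying the same argument with $f(z,\cdot)$ and its $z$-conditioned Laplace posterior $(\hat\mu_z, \hat\Sigma_z)$ for each fixed $z$.
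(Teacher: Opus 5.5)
Your proof is correct and follows the paper's argument. As in the paper, you marginalize the probit likelihood against the Laplace posterior, reduce to the scalar Gaussian $f_r - f_s$ with mean $\hat\mu^{(r)}-\hat\mu^{(s)}$ and variance $\hat\Sigma^{(rr)}+\hat\Sigma^{(ss)}-2\hat\Sigma^{(rs)}$, and apply the Gaussian--probit convolution identity. The differences are minor and do not change the route: you derive that identity with the auxiliary-noise argument where the paper simply quotes it as standard, and you add a check that $\hat\Sigma \succ 0$, which gives $\hat\sigma^2 \ge 2\sigma^2 > 0$.
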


% \xin{We don't have theorem 3.1 anymore. I just mention we can obtain this and refer this appendix. Otherwise original Th,m 3.1 and 3.2 look very similar, and is a waste of space. - Rans}
\begin{theorem}[GP preference prediction; restatement]
Under the Laplace posterior approximation $p(f\mid \mathcal{D}_t)\approx \mathcal{N}(\hat f_t,\hat\Sigma_t)$
and the probit preference likelihood, the posterior predictive probability satisfies
\begin{equation}
\Pr(a^{(r)} \succ a^{(s)} \mid \mathcal{D}_t)
=
\Phi\!\left(
\frac{\hat\mu_{t,r}-\hat\mu_{t,s}}{\sqrt{2\omega^2 + \hat\Sigma_{t,rr}+\hat\Sigma_{t,ss}-2\hat\Sigma_{t,rs}}}
\right).
\end{equation}
\end{theorem}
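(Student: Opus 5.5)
The argument treats the posterior predictive as an average of the probit likelihood over the Laplace posterior, and reduces that average to a one-dimensional Gaussian integral in the utility difference. Here $\omega$ plays the role of the probit noise scale $\sigma$ from (\ref{eq:pref_likeli}); we write $\omega$ to avoid a clash with the marginal standard deviations $\hat\sigma_{t,m}$.

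\textbf{Step 1: set up the predictive integral.} Start from the definition
\[
\Pr(a^{(r)}\succ a^{(s)}\mid\mathcal D_t)=\int \Pr(a^{(r)}\succ a^{(s)}\mid f)\,p(f\mid\mathcal D_t)\,df .
\]
Replace the posterior by its Laplace surrogate $\mathcal N(\hat f_t,\hat\Sigma_t)$, with $\hat\Sigma_t=(K^{-1}+W_t)^{-1}$ as derived above. The equality in the statement is understood to hold under this approximation, since that is its hypothesis.

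\textbf{Step 2: project onto the difference.} The likelihood depends on $f$ only through $\delta=f_r-f_s=e^\top f$ with $e=e_r-e_s$. Hence the $M$-dimensional integral collapses to an expectation over $\delta\sim\mathcal N(\mu_\delta,\sigma_\delta^2)$, where
\[
\mu_\delta=\hat\mu_{t,r}-\hat\mu_{t,s},\qquad
\sigma_\delta^2=e^\top\hat\Sigma_t e=\hat\Sigma_{t,rr}+\hat\Sigma_{t,ss}-2\hat\Sigma_{t,rs}.
\]
This follows from the fact that linear images of Gaussians are Gaussian.

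\textbf{Step 3: evaluate the integral.} We need
\[
\mathbb E\bigl[\Phi(\delta/(\sqrt2\,\omega))\bigr]=\Phi\!\left(\frac{\mu_\delta}{\sqrt{2\omega^2+\sigma_\delta^2}}\right).
\]
This is the main step, and we prove it rather than cite it. Introduce an auxiliary variable $\epsilon\sim\mathcal N(0,2\omega^2)$ independent of $\delta$. Then $\Phi(\delta/(\sqrt2\omega))=\Pr(\epsilon\le\delta\mid\delta)$, so by the tower property the expectation equals $\Pr(\epsilon-\delta\le 0)$. Since $\epsilon-\delta\sim\mathcal N(-\mu_\delta,\,2\omega^2+\sigma_\delta^2)$, this probability is $\Phi\bigl(\mu_\delta/\sqrt{2\omega^2+\sigma_\delta^2}\bigr)$. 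Substituting $\mu_\delta$ and $\sigma_\delta^2$ from Step 2 gives exactly the displayed formula.

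\textbf{Remaining checks.} The only delicate point is the positivity of the denominator. Positive definiteness of $\hat\Sigma_t$ gives $\sigma_\delta^2\ge 0$, which requires $W_t\succeq 0$. This holds because the probit is log-concave, so the negative log-likelihood is convex. Together with $\omega>0$, the denominator is strictly positive. Finally, the conditional-on-$z$ version in Theorem~\ref{thm:gp_pref_predict_z} follows verbatim by indexing all quantities by $z$.
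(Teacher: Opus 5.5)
Your proof is correct and takes the same route as the paper. Both marginalize the probit likelihood over the Laplace posterior, reduce to the Gaussian utility difference $f_r - f_s$ with mean $\hat\mu_{t,r}-\hat\mu_{t,s}$ and variance $\hat\Sigma_{t,rr}+\hat\Sigma_{t,ss}-2\hat\Sigma_{t,rs}$, and apply the Gaussian-CDF/Gaussian-PDF convolution identity with scale $\sqrt{2}\omega$; the only difference is that you prove that identity (via independent noise $\epsilon\sim\mathcal N(0,2\omega^2)$ and the tower property) where the paper cites it as a standard lemma, which makes your version self-contained.
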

\begin{proof}
By marginalization,
\begin{equation}
\Pr(a^{(r)} \succ a^{(s)} \mid \mathcal{D}_t)
=
\int \Pr(a^{(r)} \succ a^{(s)} \mid f)\; p(f\mid \mathcal{D}_t)\, df.
\end{equation}
Define the utility difference $\Delta := f_r - f_s$. Under the Gaussian approximation,
$\Delta$ is Gaussian with
\[
\Delta \mid \mathcal{D}_t \sim \mathcal{N}\Big(\hat\mu_{t,r}-\hat\mu_{t,s},\;
\hat\Sigma_{t,rr}+\hat\Sigma_{t,ss}-2\hat\Sigma_{t,rs}\Big).
\]
Also, the probit likelihood gives
$\Pr(a^{(r)} \succ a^{(s)} \mid f) = \Phi\big(\Delta/(\sqrt{2}\omega)\big)$.
Therefore,
\[
\Pr(a^{(r)} \succ a^{(s)} \mid \mathcal{D}_t)
=
\mathbb{E}\Big[\Phi\Big(\frac{\Delta}{\sqrt{2}\omega}\Big)\Big].
\]
Apply Lemma A.1 with $X=\Delta$, $\mu=\hat\mu_{t,r}-\hat\mu_{t,s}$,
$\sigma^2=\hat\Sigma_{t,rr}+\hat\Sigma_{t,ss}-2\hat\Sigma_{t,rs}$, and $\tau=\sqrt{2}\omega$.
This yields the stated closed form.
\end{proof}

\subsection{Proof of Theorem 3.2 (Laplace Approximation)}
\label{sec:app:thm3_2}
\begin{theorem}[Posterior predictive preference under latent objectives; restatement]
Fix a latent objective $z$.
Assume the Laplace posterior for the vector $(f(z,a^{(1)}),\dots,f(z,a^{(M)}))$ is
$\mathcal{N}(\hat f_{t,z},\hat\Sigma_{t,z})$.
Then for any $r,s$,
\begin{equation}
\Pr(a^{(r)} \succ a^{(s)} \mid z,\mathcal{D}_t)
=
\Phi\!\left(
\frac{\hat\mu_t(z,a^{(r)})-\hat\mu_t(z,a^{(s)})}{
\sqrt{2\omega^2 + \hat\Sigma_{t,z}(r,r)+\hat\Sigma_{t,z}(s,s)-2\hat\Sigma_{t,z}(r,s)}
}
\right),
\end{equation}
where $\hat\mu_t(z,a^{(m)}) := (\hat f_{t,z})_m$.
\end{theorem}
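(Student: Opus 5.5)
The plan is to reduce the statement to the unconditioned case already handled by the restatement of Theorem~\ref{thm:gp_pref_predict}, treating the fixed latent objective $z$ as a mere index. Once $z$ is held fixed, the vector $\mathbf f_z := (f(z,a^{(1)}),\dots,f(z,a^{(M)}))$ plays exactly the role of $\mathbf f$ in the earlier analysis. By assumption its Laplace posterior is $\mathcal N(\hat f_{t,z},\hat\Sigma_{t,z})$. The conditional likelihood is obtained from (\ref{eq:pref_likeli}) by replacing $f(a)$ with $f(z,a)$, so that
\[
\Pr(a^{(r)}\succ a^{(s)}\mid z,\mathbf f_z)=\Phi\!\left(\frac{f(z,a^{(r)})-f(z,a^{(s)})}{\sqrt2\,\omega}\right).
\]

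First, I will write the predictive probability by marginalizing over $\mathbf f_z$:
\[
\Pr(a^{(r)}\succ a^{(s)}\mid z,\mathcal D_t)=\int \Phi\!\left(\frac{e_{rs}^\top \mathbf f_z}{\sqrt2\,\omega}\right)\mathcal N(\mathbf f_z\mid \hat f_{t,z},\hat\Sigma_{t,z})\,d\mathbf f_z,
\qquad e_{rs}:=e_r-e_s .
\]
Here I use that, given $z$ and $\mathbf f_z$, the new comparison is conditionally independent of $\mathcal D_t$.

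Second, I will push the integral forward along the linear map $\mathbf f_z\mapsto \Delta_z:=e_{rs}^\top\mathbf f_z$. Since linear images of Gaussians are Gaussian, $\Delta_z\mid z,\mathcal D_t\sim\mathcal N(\mu_\Delta,s_\Delta^2)$ with
\[
\mu_\Delta=\hat\mu_t(z,a^{(r)})-\hat\mu_t(z,a^{(s)}),\qquad
s_\Delta^2=e_{rs}^\top\hat\Sigma_{t,z}e_{rs}=\hat\Sigma_{t,z}(r,r)+\hat\Sigma_{t,z}(s,s)-2\hat\Sigma_{t,z}(r,s).
\]
The $M$-dimensional integral therefore reduces to the one-dimensional quantity $\mathbb E[\Phi(\Delta_z/(\sqrt2\,\omega))]$.

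Third, I will apply the Gaussian--probit convolution identity (Lemma A.1): for $X\sim\mathcal N(\mu,\sigma^2)$ one has $\mathbb E[\Phi(X/\tau)]=\Phi(\mu/\sqrt{\tau^2+\sigma^2})$. I take $\tau=\sqrt2\,\omega$, so $\tau^2=2\omega^2$, and combining this with $\mu_\Delta$ and $s_\Delta^2$ gives the stated denominator directly. If the identity needs justification, I will write $\Phi(X/\tau)=\Pr(\epsilon\le X/\tau)$ with $\epsilon\sim\mathcal N(0,1)$ independent of $X$. Then $X-\tau\epsilon\sim\mathcal N(\mu,\sigma^2+\tau^2)$, so $\Pr(X-\tau\epsilon\ge0)=\Phi(\mu/\sqrt{\sigma^2+\tau^2})$.

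The calculus here is routine. I expect the main obstacle to be the modeling step of making the conditioning on $z$ legitimate, namely that the joint GP posterior over $f(z,a)$, when restricted to the slice at the fixed $z$, is exactly the assumed Laplace Gaussian with conditional likelihood in the probit form. I will handle this by taking the hypothesis of the statement at face value: the Laplace posterior for the $z$-slice is given, and the equality is asserted under that approximation, so it is not an exact statement about the true posterior. I will also note that the predictive is computed with the approximate posterior, which is why the result is an identity only under the Laplace approximation.
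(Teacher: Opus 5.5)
Your proposal is correct and matches the paper's proof essentially step for step. The paper likewise fixes $z$ and reduces to the unconditioned probit--GP argument for $a\mapsto f(z,a)$, then forms $\Delta_z=f(z,a^{(r)})-f(z,a^{(s)})$, which is Gaussian under the Laplace approximation, and applies the Gaussian--probit convolution identity with $\tau=\sqrt{2}\,\omega$; your auxiliary-variable derivation of that identity is a useful addition, since the paper cites it as ``Lemma A.1'' but never states or proves it separately.
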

\begin{proof}
Conditioning on $z$ reduces the model to the same probit GP-preference model
applied to the function $a\mapsto f(z,a)$.
The proof is identical to Theorem 3.1, using the Gaussian approximation
$p(f_z\mid \mathcal{D}_t)\approx \mathcal{N}(\hat f_{t,z},\hat\Sigma_{t,z})$ and forming
$\Delta_z := f(z,a^{(r)})-f(z,a^{(s)})$.
\end{proof}

\subsection{Bayesian Update for the Objective Belief}
Assume $Z$ is a finite set (or a discretization) and $b_t(z):=\Pr(z\mid \mathcal{D}_t)$.
Let $y_t$ be the observed comparison outcome at round $t$.
Then
\begin{equation}
b_{t+1}(z)
=
\Pr(z\mid \mathcal{D}_{t+1})
=
\frac{\Pr(y_t\mid z,\mathcal{D}_t)\, b_t(z)}{\sum_{z'\in Z}\Pr(y_t\mid z',\mathcal{D}_t)\, b_t(z')}.
\end{equation}
Taking logs gives
\begin{equation}
\log b_{t+1}(z) = \log b_t(z) + \log \Pr(y_t\mid z,\mathcal{D}_t) - \log \Big(\sum_{z'}\Pr(y_t\mid z',\mathcal{D}_t)b_t(z')\Big),
\end{equation}
which matches Eq.\ (\ref{eq:belief_update_bayes}) up to the explicit normalization term.

\subsection{Language Feedback Modulation}
\label{sec:app:lang}

Figure~\ref{fig:bias} illustrates how language is computed. 

\begin{figure}[H]
  \centering
  \includegraphics[
    width=0.6\linewidth,
    trim={0cm 6cm 0cm 0cm},
    clip
  ]{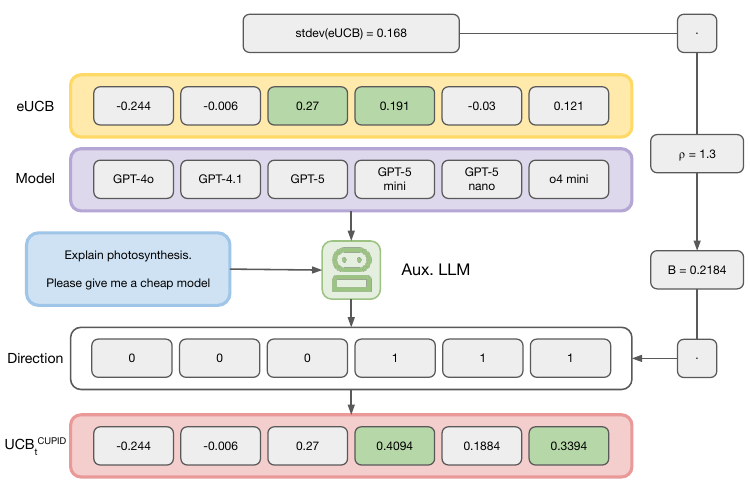}
  \caption{Language Feedback Modulation. The auxiliary model processes user inputs (blue) and the model information (purple). The bias is calculated and added to the initial eUCB (yellow), yielding an adjusted immediate bias for the subsequent turn (red). o4-mini, initially being excluded, attains higher score than the next GPT-5, an initial winner under the unbiased eUCB. In contrast, GPT-5 nano performs substantially weaker in prior rounds, resulting in its biased eUCB remaining lower than that of GPT-5.}
  \label{fig:bias}
\end{figure}

\subsection{Regret Analysis}
\label{app:regret}

Assume $L_t(a) \leq f(a) \leq U_t(a)$ and selection $a_t = \arg\max_a [U_t(a) + \phi_t(a)]$. Define $\kappa_t = \phi_t(a^{\star}) - \max_{a \neq a^{\star}} \phi_t(a)$. Then

$$r_t \leq \left[ U_t(a_t) - L_t(a_t) - \kappa_t \right]_+.$$

For GP-UCB,

$$r_t \leq \left[ 2\sqrt{\beta_t}\sigma_{t-1}(a_t) - \kappa_t \right]_+.$$

Thus, $\kappa_t > 0$ (informative) reduces regret, $\kappa_t = 0$ recovers standard UCB, and $\kappa_t < 0$ (misleading) increases regret only additively.

In the latent dueling setting, this becomes

$$r_t \leq \left[ 2\sqrt{\beta_t}\,\sigma_{t-1}(z^{\star}, a_t) - \kappa_t \right]_+ + 2\nu_t,$$

where

$$\nu_t = \sup_a \left| \sum_z b_t(z)\,U_t(z, a) - U_t(z^{\star}, a) \right|$$

(See Equation~\ref{eq:belief_ucb} notation).

Thus, informative language decreases regret, while latent-objective uncertainty contributes an additional vanishing term as the posterior over objectives concentrates.

This also yields the requested ``exchange rate'': the language margin $\kappa_t$ trades off directly against the exploration width $2\sqrt{\beta_t}\sigma_{t-1}(\cdot)$ with coefficient 1. Hence, one unit of informative language reduces the regret upper bound by one unit, while misleading language increases it by at most one unit (up to the positive-part clipping). In the latent setting, this holds up to the additional $2\nu_t$ uncertainty term.

Under a fixed-objective surrogate, language theoretically introduces an additive margin $\kappa_t$, improving cumulative regret by $-\sum_t \kappa_t$ while preserving the standard GP-UCB rate, and reducing sample complexity by $O\left(\frac{1}{\epsilon} \sum_{t=1}^T \kappa_t\right)$, as informative language eliminates suboptimal comparisons and accelerates learning, consistent with our empirical findings (Fig. 4, Table 1).

\paragraph{Cumulative regret.} Starting from the instantaneous bound introduced in the rebuttal above, summing over $t = 1, \dots, T$ gives

$$R_T = \sum_{t=1}^T r_t \leq \sum_{t=1}^T \left[ 2\sqrt{\beta_t} \, \sigma_{t-1}(\cdot) - \kappa_t \right]\_+ + 2\sum_{t=1}^T \nu_t.$$

Using standard GP-UCB analysis \citep{Srinivas2010}, and $\kappa_t^+ \leq 2\beta_t \sigma_{t-1}(\cdot) \forall t$, this yields

$$R_T = \tilde{O}(\sqrt{T\gamma_T}) - \sum_{t=1}^T \kappa_t + 2\sum_{t=1}^T \nu_t$$

Thus, informative language ($\kappa_t > 0$) reduces regret cumulatively, while misleading language ($\kappa_t < 0$) only affects it additively. $\kappa_t = 0$ recovers GP-UCB.

\paragraph{Sample complexity/convergence.} The cumulative improvement directly implies faster convergence. In standard GP-UCB \citep{Srinivas2010},

$$T_\text{UCB}(\epsilon) \sim \tilde{O}\left(\frac{\gamma_T}{\epsilon^2}\right).$$

With language,

$$T_\text{HEART}(\epsilon) \leq T_\text{UCB}(\epsilon) - O\left(\frac{1}{\epsilon} \sum_{t=1}^T \kappa_t\right),$$

showing that language feedback reduces the number of required interaction rounds by accelerating posterior concentration.

\section{Implementation Details}
% for reproducibility

\subsection{LLM Prompting Strategy}
\label{app:llm-prompting}

We employ a multi-agent LLM architecture to facilitate user simulation and model selection. The prompts used for the "LLM-as-a-Judge," the "User Simulator," and the "Routing Model" are detailed below. To ensure consistent behavior, we use a temperature of 0.0 for all agents. Variable content injected at runtime is denoted by square brackets (e.g., \texttt{[Constraints]}).

\subsubsection{LLM-as-a-Judge}
This agent evaluates which of two candidate models better satisfies the user's constraints. It is instructed to weigh trade-offs (e.g., speed vs. cost) similar to a human decision-maker.

\begin{promptbox}[title=System Prompt: Judge Agent]

You are choosing between two language models for a user.
You are given:
\begin{itemize}
    \item A text description of the user's constraints.
    \item A CSV with exactly two model rows (arm\_A and arm\_B).
\end{itemize}

Your task:
1) Read the constraints and the model metadata.
2) Decide which model aligns better with the constraints. All constraints are treated equally, and when you choose a model, think about of the trade offs in that case before making the decision. For example trading 1 point of speed for \$0.5 reduce in cost. Try your best to represent a human choosing between two options considering the trade offs.
3) Output exactly one of: 'arm\_A' or 'arm\_B'.
   Output no other words.

The two candidate models (in CSV form) are:
[Candidate Models CSV]

The constraint is:
[User Constraints]
\end{promptbox}

\subsubsection{Routing Model}
\label{app:routing-mod}
This agent filters the global pool of models based on the user's directional feedback, creating a candidate set for the bandit algorithm.

\begin{promptbox}[title=System Prompt: Routing Agent]

IMPORTANT: Your role is to choose which language models align with the user's preferences and DIRECTION.
\newline\newline
You will receive:

- A free-form prompt that may contain a line 'DIRECTION: ...' describing the user's current preference direction (e.g. wanting a cheaper model).

- Metadata for the last chosen model, so you can compare (e.g. find models that are cheaper, faster, etc.).

- A CSV table of all available models. 
\newline\newline
There are [N] models (rows) in the CSV, in fixed order.
\newline\newline
Your only task:

1) For each model (each row), decide if it matches the user's preferences and DIRECTION, possibly relative to the last chosen model.

2) Output exactly [N] integers (space-separated), one per row in the CSV, in the same order. Use '1' if the model is acceptable, '0' if it is not.
   
3) Output no other words.
\newline\newline
Last chosen model (CSV; may be '(none yet'):
[Last Model Metadata]

CSV of ALL models:
[Full Model Zoo CSV]
\end{promptbox}

\subsection{Hyperparameter Settings}

\begin{table}[ht]
\centering
\caption{Hyperparameter Settings}
\label{tab:hyperparameter}
\begin{tabular}{lccc}
\toprule
Hyperparameter & Feature & Setting \\
\midrule
$\beta$ & Exploration Weight & 1.5 \\
$\eta$ & Budget Penalty Learning Rate & 0.5 \\
$\tau$ & Penalty Equilibration parameter & Adaptive to current round \\
$\lambda$ & Language-driven bias weight & 1.3 \\
$\overline{\lambda}$ & Language-driven bias cap & 1.6 \\
$\underline{\lambda}$ & Minimum language-driven bias & 0.01 \\
\bottomrule
\end{tabular}
\end{table}

\subsection{Compute + software environment}

botorch==0.10.0

Google Colab, Python 3.12.12 (main, Oct 10 2025, 08:52:57) [GCC 11.4.0]

%%%%%%%
\section{Experiment - Baseline Performance}
\label{sec:app:exp}

%===
\subsection{The "Model Zoo" Dataset}
\label{app:model-zoo}

The 25 OpenAI LLM and all the objective-level metadata are detailed in Table~\ref{tab:openai-model-pool}.

\begin{sidewaystable}
\centering
\tiny % Use tiny font to fit 37 columns
\setlength{\tabcolsep}{1.5pt} % Minimal padding between columns
\renewcommand{\arraystretch}{1.2} % Slight vertical padding for readability
\caption{OpenAI Model Pool}
\label{tab:openai-model-pool}
% Define a helper command to rotate headers 90 degrees
\newcommand{\rot}[1]{\rotatebox{90}{\raggedleft #1}}

\begin{tabular}{l l l *{34}{c}} % 3 left-aligned info cols + 34 centered data cols
\toprule
% HEADERS: Rotated to save horizontal space
\textbf{id} & 
\textbf{model-id} & 
\textbf{model} & 
\rot{Intelligence} & 
\rot{Speed} & 
\rot{Text In} & 
\rot{Image In} & 
\rot{Voice In} & 
\rot{Video In} & 
\rot{Text Out} & 
\rot{Image Out} & 
\rot{Audio Out} & 
\rot{Video Out} & 
\rot{Reasoning} & 
\rot{Input Price} & 
\rot{Cached Price} & 
\rot{Output Price} & 
\rot{Context Window} & 
\rot{Max Output} & 
\rot{Know. Cutoff} & 
\rot{Comp. Endpt} & 
\rot{Resp. Endpt} & 
\rot{Assist. Endpt} & 
\rot{Batch Endpt} & 
\rot{Fine-Tune Endpt} & 
\rot{Streaming} & 
\rot{Func. Calling} & 
\rot{Struct. Output} & 
\rot{Fine-Tuning} & 
\rot{Distillation} & 
\rot{Pred. Outputs} & 
\rot{Rate Lim (Free)} & 
\rot{Rate Lim (T1)} & 
\rot{Rate Lim (T2)} & 
\rot{Rate Lim (T3)} & 
\rot{Rate Lim (T4)} & 
\rot{Rate Lim (T5)} \\
\midrule
1 & \path{openai/chatgpt-4o-latest} & ChatGPT-4o & 3 & 3 & 1 & 1 & 0 & 0 & 1 & 0 & 0 & 0 & 0 & 5 & - & 15 & 128k & 16k & 23-Oct & 1 & 1 & 0 & 0 & 0 & 1 & 0 & 0 & 0 & 0 & 1 & - & 30k & 450k & 800k & 2M & 30M \\
2 & \path{openai/gpt-4.1} & GPT-4.1 & 4 & 3 & 1 & 1 & 0 & 0 & 1 & 0 & 0 & 0 & 0 & 2 & 0.5 & 8 & 1M & 32k & 24-Jun & 1 & 1 & 1 & 1 & 1 & 1 & 1 & 1 & 1 & 1 & 1 & - & 30k & 450k & 800k & 2M & 30M \\
3 & \path{openai/o4-mini} & o4-mini & 4 & 3 & 1 & 1 & 0 & 0 & 1 & 0 & 0 & 0 & 1 & 1.1 & 0.28 & 4.4 & 200k & 100k & 24-Jun & 1 & 1 & 0 & 1 & 1 & 1 & 1 & 1 & 1 & 0 & 0 & - & 100k & 2M & 4M & 10M & 150M \\
4 & \path{openai/gpt-3.5-turbo-16k} & GPT-3.5-16k & 1 & 2 & 1 & 0 & 0 & 0 & 1 & 0 & 0 & 0 & 0 & 3 & - & 4 & 16k & 4k & 21-Sep & 1 & 1 & 0 & 1 & 0 & 0 & 0 & 0 & 1 & 0 & 0 & - & 200k & 2M & 800k & 10M & 50M \\
5 & \path{gpt-3.5-turbo-instruct} & GPT-3.5-Inst & 1 & 2 & 1 & 0 & 0 & 0 & 1 & 0 & 0 & 0 & 0 & 1.5 & - & 2 & 4k & 4k & 21-Sep & 1 & 1 & 0 & 0 & 0 & 0 & 0 & 0 & 1 & 0 & 0 & - & 200k & 2M & 800k & 10M & 50M \\
6 & \path{openai/gpt-3.5-turbo} & GPT-3.5-Trb & 1 & 2 & 1 & 0 & 0 & 0 & 1 & 0 & 0 & 0 & 0 & 0.5 & - & 1.5 & 16k & 4k & 21-Sep & 1 & 1 & 0 & 1 & 1 & 0 & 0 & 0 & 1 & 0 & 0 & - & 200k & 2M & 4M & 10M & 50M \\
7 & \path{openai/gpt-4-turbo-prev} & GPT-4-Prev & 2 & 3 & 1 & 0 & 0 & 0 & 1 & 0 & 0 & 0 & 0 & 10 & - & 30 & 128k & 4k & 23-Dec & 1 & 1 & 1 & 0 & 0 & 0 & 0 & 0 & 1 & 0 & 0 & - & 30k & 450k & 600k & 800k & 2M \\
8 & \path{openai/gpt-4-turbo} & GPT-4-Turbo & 2 & 3 & 1 & 1 & 0 & 0 & 1 & 0 & 0 & 0 & 0 & 10 & - & 30 & 128k & 4k & 23-Dec & 1 & 1 & 1 & 1 & 0 & 1 & 1 & 0 & 0 & 0 & 0 & - & 30k & 450k & 600k & 800k & 2M \\
9 & \path{openai/gpt-4.1-mini} & GPT-4.1-Mini & 3 & 4 & 1 & 1 & 0 & 0 & 1 & 0 & 0 & 0 & 0 & 0.4 & 0.1 & 1.6 & 1M & 32k & 24-Jun & 1 & 1 & 1 & 1 & 1 & 1 & 1 & 1 & 1 & 0 & 1 & 40k & 200k & 2M & 4M & 10M & 150M \\
10 & \path{openai/gpt-4.1-nano} & GPT-4.1-Nano & 2 & 5 & 1 & 1 & 0 & 0 & 1 & 0 & 0 & 0 & 0 & 0.1 & 0.03 & 0.4 & 1M & 32k & 24-Jun & 1 & 1 & 1 & 1 & 1 & 1 & 1 & 1 & 1 & 0 & 1 & 40k & 200k & 2M & 4M & 10M & 150M \\
11 & \path{openai/gpt-4} & GPT-4 & 2 & 3 & 1 & 0 & 0 & 0 & 1 & 0 & 0 & 0 & 0 & 30 & 0 & 60 & 8k & 8k & 23-Dec & 1 & 1 & 1 & 1 & 1 & 1 & 0 & 0 & 1 & 0 & 0 & - & 10k & 40k & 80k & 300k & 1M \\
12 & \path{openai/gpt-5.1-codex} & GPT-5.1-Code & 4 & 3 & 1 & 1 & 0 & 0 & 1 & 0 & 0 & 0 & 1 & 1.25 & 0.13 & 10 & 400k & 128k & 24-Sep & 0 & 1 & 0 & 0 & 0 & 1 & 1 & 1 & 0 & 0 & 0 & - & 500k & 1M & 2M & 4M & 40M \\
13 & \path{openai/gpt-4o-mini-search} & GPT-4o-Search & 2 & 4 & 1 & 0 & 0 & 0 & 1 & 0 & 0 & 0 & 0 & 0.15 & - & 0.6 & 128k & 16k & 23-Oct & 1 & 0 & 0 & 0 & 0 & 1 & 0 & 1 & 0 & 0 & 0 & 40k & 200k & 2M & 4M & 10M & 150M \\
14 & \path{openai/gpt-4o-mini} & GPT-4o-Mini & 2 & 4 & 1 & 1 & 0 & 0 & 1 & 0 & 0 & 0 & 0 & 0.15 & 0.08 & 0.6 & 128k & 16k & 23-Oct & 1 & 1 & 1 & 1 & 1 & 1 & 1 & 1 & 1 & 0 & 1 & 40k & 200k & 2M & 4M & 10M & 150M \\
15 & \path{openai/gpt-4o} & GPT-4o & 3 & 3 & 1 & 1 & 0 & 0 & 1 & 0 & 0 & 0 & 0 & 2.5 & 1.25 & 10 & 128k & 16k & 23-Oct & 1 & 1 & 1 & 1 & 1 & 1 & 1 & 1 & 1 & 1 & 1 & - & 30k & 450k & 800k & 2M & 30M \\
16 & \path{openai/gpt-5-chat} & GPT-5-Chat & 3 & 3 & 1 & 1 & 0 & 0 & 1 & 0 & 0 & 0 & 0 & 1.25 & 0.13 & 10 & 128k & 16k & 24-Sep & 1 & 1 & 0 & 0 & 0 & 1 & 1 & 1 & 0 & 0 & 0 & - & 30k & 450k & 800k & 2M & 30M \\
17 & \path{openai/gpt-5-mini} & GPT-5-Mini & 3 & 4 & 1 & 1 & 0 & 0 & 1 & 0 & 0 & 0 & 1 & 0.25 & 0.03 & 2 & 400k & 128k & 24-May & 1 & 1 & 0 & 1 & 0 & 1 & 1 & 1 & 0 & 0 & 0 & - & 500k & 2M & 4M & 10M & 180M \\
18 & \path{openai/gpt-5-nano} & GPT-5-Nano & 2 & 5 & 1 & 1 & 0 & 0 & 1 & 0 & 0 & 0 & 1 & 0.05 & 0.01 & 0.4 & 400k & 128k & 24-May & 1 & 1 & 0 & 1 & 0 & 1 & 1 & 1 & 0 & 0 & 0 & - & 200k & 2M & 4M & 10M & 180M \\
19 & \path{openai/gpt-5.1-chat} & GPT-5.1-Chat & 3 & 3 & 1 & 1 & 0 & 0 & 1 & 0 & 0 & 0 & 0 & 1.25 & 0.13 & 10 & 128k & 16k & 24-Sep & 1 & 1 & 0 & 0 & 0 & 1 & 1 & 1 & 0 & 0 & 0 & - & 30k & 450k & 800k & 2M & 30M \\
20 & \path{openai/gpt-5.1} & GPT-5.1 & 4 & 4 & 1 & 1 & 0 & 0 & 1 & 0 & 0 & 0 & 1 & 1.25 & 0.13 & 10 & 400k & 128k & 24-Sep & 1 & 1 & 0 & 0 & 0 & 1 & 1 & 1 & 0 & 1 & 0 & - & 500k & 1M & 2M & 4M & 40M \\
21 & \path{openai/gpt-5} & GPT-5 & 4 & 3 & 1 & 1 & 0 & 0 & 1 & 0 & 0 & 0 & 1 & 1.25 & 0.13 & 10 & 400k & 128k & 24-Sep & 1 & 1 & 0 & 1 & 0 & 1 & 1 & 1 & 0 & 1 & 0 & - & 500k & 1M & 2M & 4M & 40M \\
22 & \path{openai/o3-mini} & o3-mini & 4 & 3 & 1 & 0 & 0 & 0 & 1 & 0 & 0 & 0 & 1 & 1.1 & 0.55 & 4.4 & 200k & 100k & 23-Oct & 1 & 1 & 1 & 1 & 0 & 1 & 1 & 1 & 0 & 0 & 0 & - & 100k & 200k & 4M & 10M & 150M \\
23 & \path{openai/o3} & o3 & 5 & 1 & 1 & 1 & 0 & 0 & 1 & 0 & 0 & 0 & 1 & 2 & 0.5 & 8 & 200k & 100k & 24-Jun & 1 & 1 & 0 & 1 & 0 & 1 & 1 & 1 & 0 & 0 & 0 & - & 30k & 450k & 800k & 2M & 30M \\
24 & \path{openai/gpt-5.2} & GPT-5.2 & 4 & 4 & 1 & 1 & 0 & 0 & 1 & 0 & 0 & 0 & 1 & 1.75 & 0.18 & 14 & 400k & 128k & 25-Aug & 1 & 1 & 0 & 0 & 0 & 1 & 1 & 1 & 0 & 1 & 0 & - & 500k & 1M & 2M & 4M & 40M \\
25 & \path{openai/gpt-5.2-chat} & GPT-5.2-Chat & 3 & 3 & 1 & 1 & 0 & 0 & 1 & 0 & 0 & 0 & 0 & 1.75 & 0.18 & 14 & 128k & 16k & 25-Aug & 1 & 1 & 0 & 0 & 0 & 1 & 1 & 1 & 0 & 0 & 0 & - & 30k & 450k & 800k & 2M & 40M \\
\bottomrule
\end{tabular}
\end{sidewaystable}

\subsection{Auxiliary LLM}
\label{app:aux-llm}
We reran experiment A1 and A2, observed the behavior and recorded the cost of the auxiliary model. The results are detailed in Table~\ref{tab:aux-llm-comparison}.

\begin{table}[ht]
\centering
\caption{Comparison of auxiliary LLMs for CUPID.  Instructions Followed: the LLM consistently generated a string of binary tokens in the correct format. Constraints Followed: the generated string abides to the constraints of the direction text. Cost/Round: the average cost per round for this model. Accuracy: the accuracy of CUPID with the auxiliary LLM.}
\label{tab:aux-llm-comparison}
\resizebox{\textwidth}{!}{%
\begin{tabular}{lcccccc}
\toprule
\textbf{Attribute} & \textbf{Llama 3.3 70B} & \textbf{Qwen3 235B} & \textbf{Gemini 2.5 FL} & \textbf{Gemini 3 Flash Preview} & \textbf{o3} & \textbf{Grok 4.1 Fast} \\
\midrule
Instructions followed & \ding{55} & \ding{55} & \ding{55} & \ding{51} & \ding{51} & \ding{51} \\
Constraints followed  & \ding{55} & \ding{55} & \ding{55} & \ding{51} & \ding{51} & \ding{51} \\
Cost/round            & Open-source & Open-source & \$0.00033 & \$0.00201 & \$0.00871 & \$0.00191 \\
CUPID accuracy        & 100\% & 100\% & 100\% & 100\% & 100\% & 100\% \\
\bottomrule
\end{tabular}%
}
\end{table}

\subsection{Scalability and Practical Runtime Experiment}
\label{app:scalability}

\paragraph{Dataset.} We extend the model zoo in Table~\ref{tab:openai-model-pool} to 125 models and remove the metadata (Table~\ref{tab:scaled-model-zoo}). We run twenty five rounds over seeds 42--46 for each experiment.

\begin{longtable}{r l l}
\caption{Scaled model zoo}
\label{tab:scaled-model-zoo} \\
\textbf{id} & \textbf{model-id} & \textbf{model} \\
\hline
\endfirsthead

\hline
\textbf{id} & \textbf{model-id} & \textbf{model} \\
\hline
\endhead

1 & deepseek/deepseek-v3.2-speciale & DeepSeek V3.2 Speciale \\
2 & z-ai/glm-4.7 & GLM-4.7 \\
3 & moonshotai/kimi-k2-thinking & Kimi K2 Thinking \\
4 & deepseek/deepseek-v3.2 & DeepSeek V3.2 \\
5 & x-ai/grok-4-fast & Grok 4 Fast \\
6 & x-ai/grok-4.1-fast & Grok 4.1 Fast \\
7 & z-ai/glm-4.6 & GLM-4.6 \\
8 & z-ai/glm-4.6v & GLM-4.6V \\
9 & anthropic/claude-haiku-4.5 & Claude 4.5 Haiku \\
10 & minimax/minimax-m2.1 & MiniMax-M2.1 \\
11 & z-ai/glm-4.5-air & GLM-4.5-Air \\
12 & minimax/minimax-m2 & MiniMax-M2 \\
13 & z-ai/glm-4.5 & GLM-4.5 \\
14 & z-ai/glm-4.5v & GLM-4.5V \\
15 & google/gemini-2.5-flash & Gemini 2.5 Flash \\
16 & moonshotai/kimi-k2-0905 & Kimi K2 0905 \\
17 & google/gemini-2.5-flash-lite & Gemini 2.5 Flash-Lite \\
18 & openai/gpt-4.1 & GPT-4.1 \\
19 & openai/o4-mini & o4-mini \\
20 & openai/gpt-3.5-turbo-16k & gpt-3.5-turbo-16k-0613 \\
21 & openai/gpt-3.5-turbo-instruct & gpt-3.5-turbo-instruct \\
22 & openai/gpt-3.5-turbo & GPT-3.5-turbo \\
23 & openai/gpt-4-turbo-preview & GPT-4-Turbo-Preview \\
24 & openai/gpt-4-turbo & GPT-4-Turbo \\
25 & openai/gpt-4.1-mini & GPT-4.1-mini \\
26 & openai/gpt-4.1-nano & GPT-4.1-nano \\
27 & openai/gpt-4 & GPT-4 \\
28 & openai/gpt-5.1-codex & GPT-5.1 Codex \\
29 & openai/gpt-4o-mini-search-preview & GPT-4o-mini-Search-Preview \\
30 & openai/gpt-4o-mini & GPT-4o-mini \\
31 & openai/gpt-4o & GPT-4o \\
32 & openai/gpt-5-chat & GPT-5-Chat \\
33 & openai/gpt-5-mini & GPT-5-mini \\
34 & openai/gpt-5-nano & GPT-5-nano \\
35 & openai/gpt-5.1-chat & GPT-5.1-Chat \\
36 & openai/gpt-5.1 & GPT-5.1 \\
37 & openai/gpt-5 & GPT-5 \\
38 & openai/o3-mini & o3-mini \\
39 & openai/o3 & o3 \\
40 & openai/gpt-5.2 & GPT-5.2 \\
41 & openai/gpt-5.2-chat & GPT-5.2-Chat \\
42 & mistralai/mistral-large-2512 & Mistral Large 3 \\
43 & mistralai/devstral-2512 & Devstral 2 \\
44 & mistralai/ministral-8b-2512 & Ministral 3 8B \\
45 & mistralai/ministral-14b-2512 & Ministral 3 14B \\
46 & google/gemma-3-27b-it & Gemma 3 27B Instruct \\
47 & meta-llama/llama-4-maverick & Llama 4 Maverick \\
48 & google/gemma-3-12b-it & Gemma 3 12B \\
49 & meta-llama/llama-4-scout & Llama 4 Scout \\
50 & google/gemma-3-4b-it & Gemma 3 4B \\
51 & meta-llama/llama-3.3-70b-instruct & Llama 3.3 70B Instruct \\
52 & meta-llama/llama-3.1-8b-instruct & Llama 3.1 8B Instruct \\
53 & meta-llama/llama-3.1-70b-instruct & Llama 3.1 70B Instruct \\
54 & meta-llama/llama-3.2-3b-instruct & Llama 3.2 3B Instruct \\
55 & meta-llama/llama-3.2-1b-instruct & Llama 3.2 1B Instruct \\
56 & qwen/qwen3.6-plus:free & Qwen3.6 Plus \\
57 & xiaomi/mimo-v2-pro & MiMo-V2-Pro \\
58 & z-ai/glm-5-turbo & GLM 5 Turbo \\
59 & moonshotai/kimi-k2.5 & Kimi K2.5 \\
60 & nvidia/nemotron-3-super-120b-a12b & Nemotron 3 Super \\
61 & xiaomi/mimo-v2-omni & MiMo-V2-Omni \\
62 & z-ai/glm-5 & GLM 5 \\
63 & openai/gpt-5.4 & GPT-5.4 \\
64 & anthropic/claude-sonnet-4.5 & Claude Sonnet 4.5 \\
65 & xiaomi/mimo-v2-flash & MiMo-V2-Flash \\
66 & qwen/qwen3-235b-a22b-2507 & Qwen3 235B A22B Instruct 2507 \\
67 & mistralai/mistral-nemo & Mistral Nemo \\
68 & deepseek/deepseek-chat-v3-0324 & DeepSeek V3 0324 \\
69 & qwen/qwen3.5-flash-02-23 & Qwen3.5-Flash \\
70 & openai/gpt-5.4-mini & GPT-5.4 Mini \\
71 & deepseek/deepseek-chat-v3.1 & DeepSeek V3.1 \\
72 & qwen/qwen3.5-9b & Qwen3.5-9B \\
73 & anthropic/claude-sonnet-4 & Claude Sonnet 4 \\
74 & stepfun/step-3.5-flash & Step 3.5 Flash \\
75 & qwen/qwen3.5-397b-a17b & Qwen3.5 397B A17B \\
76 & qwen/qwen3.5-plus-02-15 & Qwen3.5 Plus 2026-02-15 \\
77 & mistralai/mistral-small-3.2-24b-instruct & Mistral Small 3.2 24B Instruct \\
78 & z-ai/glm-4.7-flash & Z.ai: GLM 4.7 Flash \\
79 & deepseek/deepseek-v3.1-terminus & DeepSeek V3.1 Terminus \\
80 & deepseek/deepseek-chat & DeepSeek V3 \\
81 & x-ai/grok-4.20 & Grok 4.20 \\
82 & anthropic/claude-3.5-haiku & Claude 3.5 Haiku \\
83 & nvidia/nemotron-3-nano-30b-a3b & Nemotron 3 Nano 30B A3B \\
84 & minimax/minimax-m2.7 & MiniMax M2.7 \\
85 & minimax/minimax-m2.5 & MiniMax M2.5 \\
86 & minimax/minimax-m2-her & MiniMax M2-her \\
87 & minimax/minimax-m1 & MiniMax M1 \\
88 & minimax/minimax-01 & MiniMax-01 \\
89 & mistralai/mistral-small-2603 & Mistral Small 4 \\
90 & qwen/qwen3-32b & Qwen3 32B \\
91 & moonshotai/kimi-k2 & Kimi K2 0711 \\
92 & x-ai/grok-4 & Grok 4 \\
93 & inception/mercury-2 & Mercury 2 \\
94 & amazon/nova-lite-v1 & Nova Lite 1.0 \\
95 & x-ai/grok-3-mini & Grok 3 Mini \\
96 & qwen/qwen3-235b-a22b & Qwen3 235B A22B \\
97 & qwen/qwen3-next-80b-a3b-instruct & Qwen3 Next 80B A3B Instruct \\
98 & amazon/nova-micro-v1 & Nova Micro 1.0 \\
99 & mistralai/mistral-medium-3.1 & Mistral Medium 3.1 \\
100 & mistralai/devstral-small & Devstral Small 1.1 \\
101 & x-ai/grok-3 & Grok 3 \\
102 & mistralai/mistral-small-3.1-24b-instruct & Mistral Small 3.1 24B \\
103 & amazon/nova-2-lite-v1 & Nova 2 Lite \\
104 & mistralai/ministral-8b-2512 & Ministral 3 8B 2512 \\
105 & qwen/qwen3-14b & Qwen3 14B \\
106 & qwen/qwen3-max & Qwen3 Max \\
107 & meta-llama/llama-guard-4-12b & Llama Guard 4 12B \\
108 & qwen/qwen3-30b-a3b-thinking-2507 & Qwen3 30B A3B Thinking 2507 \\
109 & qwen/qwen-plus-2025-07-28 & Qwen Plus 0728 \\
110 & microsoft/phi-4 & Phi 4 \\
111 & mistralai/mistral-large-2411 & Mistral Large 2411 \\
112 & qwen/qwen3-max-thinking & Qwen3 Max Thinking \\
113 & perplexity/sonar-pro & Sonar Pro \\
114 & perplexity/sonar & Sonar \\
115 & openai/o4-mini-high & o4 Mini High \\
116 & nvidia/nemotron-nano-9b-v2 & Nemotron Nano 9B V2 \\
117 & mistralai/mistral-large & Mistral Large \\
118 & mistralai/mistral-7b-instruct-v0.1 & Mistral 7B Instruct v0.1 \\
119 & mistralai/mistral-medium-3 & Mistral Medium 3 \\
120 & mistralai/devstral-medium & Devstral Medium \\
121 & deepseek/deepseek-r1-distill-qwen-32b & R1 Distill Qwen 32B \\
122 & nvidia/llama-3.1-nemotron-70b-instruct & Llama 3.1 Nemotron 70B Instruct \\
123 & mistralai/mixtral-8x22b-instruct & Mixtral 8x22B Instruct \\
124 & mistralai/pixtral-large-2411 & Pixtral Large 2411 \\
125 & nvidia/llama-3.1-nemotron-ultra-253b-v1 & Llama 3.1 Nemotron Ultra 253B v1 \\
\hline
\end{longtable}

\paragraph{Results.} We detail the results of each CUPID components in Table~\ref{tab:breakdown}.

\begin{table}[t]
\centering
\caption{Detailed breakdown of each component of CUPID.}
\label{tab:breakdown}
\begin{tabular}{lcccc}
\toprule
\textbf{Stage} & \textbf{V1 (M = 125)} & \textbf{V2 (M = 75)} & \textbf{V3 (M = 25)} & \textbf{V4 (M = 5)} \\
\midrule
GP update          & 0.172    & 0.084    & 0.052    & 0.023    \\
UCB scoring        & 0.547    & 0.183    & 0.042    & 0.012    \\
Belief update      & $<$0.0001 & $<$0.0001 & $<$0.0001 & $<$0.0001 \\
LLM duel calls     & 7.813    & 5.929    & 15.106   & 16.379   \\
Auxiliary LLM (Gemini 3 Flash Preview) & 3.899 & 2.328 & 3.391 & 1.798 \\
\bottomrule
\end{tabular}%
\end{table}

\subsection{Baseline Algorithms}
\label{app:baselines}

\subsubsection{LMA}
\label{app:baselines:LMA}
We use the \textbf{LMArena} \citep{chiang2024chatbot, lmarena2025policy} methodology as the first baseline for this experiment.
The setup can be represented through three steps.

\paragraph{Anchor model sampling.}
Let $\theta_t \in \mathbb{R}^K$ denote the current Bradley--Terry (BT) scores, and let $n_k$ be the number of comparisons involving model $k$.
We sample an anchor model $i$ from the distribution
\[
p_i \propto \exp\!\left(\alpha_{\text{score}} \, \theta_{t,i}\right)
\left(1 + \frac{\alpha_{\text{unc}}}{n_i + 1}\right),
\]
so that higher-scoring and more uncertain models are selected more frequently.
This mirrors the sampling strategy used in Chatbot Arena, where models with high scores or high uncertainty are surfaced more often to accelerate convergence of the leaderboard.

\paragraph{Opponent sampling.}
Conditional on the anchor model $i$, we sample an opponent $j \neq i$ from the distribution
\[
\begin{aligned}
q_j \propto\;&
\exp\!\left(\alpha_{\text{score}} \, \theta_{t,j}\right)
\left(1 + \frac{\alpha_{\text{unc}}}{n_j + 1}\right) \\
&\times
\exp\!\left(
-\frac{(\theta_{t,i} - \theta_{t,j})^2}{\tau^2}
\right),
\end{aligned}
\]
which biases comparisons toward similarly rated models while still favoring strong and under-explored arms.

\paragraph{Bradley--Terry ranking and convergence.}
Given the accumulated pairwise outcomes, LMA fits a Bradley--Terry (BT) model over the $K$ models.
In the BT model, each model $k$ has a latent strength parameter $\theta_k$, and the probability that model $i$ is preferred over model $j$ is
\[
\Pr(i \succ j)
= \sigma(\theta_i - \theta_j)
= \frac{1}{1 + \exp\!\left(-(\theta_i - \theta_j)\right)}.
\]
We estimate $\boldsymbol{\theta}$ via maximum likelihood using iterative gradient ascent, re-fitting the model every 2 rounds (rounds 1, 2, 4, 6, etc.).
A model is \emph{feasible} if it satisfies all routing constraints (e.g., price and speed) according to its metadata.
The system is declared to have \emph{converged} if the best feasible model passes all constraints for two consecutive rankings (e.g., rankings at rounds 1--2, 2--4, 4--6, or 10--12).
This gives an advantage to this baseline since it can converge in 3 iterations minimum and 4 iterations maximum.

\subsubsection{Cost-constrained LMA}
\label{app:cost-constrained-lma}

We consider a finite pool of $K$ candidate language models, indexed by $i \in \{1,\dots,K\}$. Each model is associated with static metadata from the model pool, including input and output price estimates. Let
\[
c_i^{\mathrm{static}} = c_i^{\mathrm{in}} + c_i^{\mathrm{out}}
\]
denote the static per-model cost proxy, where $c_i^{\mathrm{in}}$ and $c_i^{\mathrm{out}}$ are taken from the model-pool table. These static costs are used for cost-sensitive selection, while realized API costs are logged separately during execution. We incorporate parts of HEART-UCB's budget enforcement to LMA, making it cost-constrained.

\paragraph{Pairwise preference model.}
Let $w_{ij}$ denote the number of times model $i$ defeats model $j$, and let
$$
N_{ij} = w_{ij} + w_{ji}
$$
denote the total number of comparisons between $i$ and $j$. The algorithm fits a Bradley--Terry (BT) model with score vector $\theta \in \mathbb{R}^K$, where
$$
\Pr(i \succ j) = \sigma(\theta_i - \theta_j)
= \frac{1}{1 + \exp\!\left(-(\theta_i - \theta_j)\right)}.
$$
A gradient ascent is then performed on the BT log-likelihood. The per-coordinate gradient is
$$
g_i(\theta)
=
\sum_{j \neq i}
\left(
w_{ij} - N_{ij}\,\sigma(\theta_i - \theta_j)
\right),
$$
followed by recentering to enforce $\sum_i \theta_i = 0$.

\paragraph{Cost-sensitive pair selection.}
At round $t$, let
\[
n_i = \sum_{j=1}^K N_{ij}
\qquad\text{and}\qquad
s_i = \frac{1}{\sqrt{\max(n_i,1)}}.
\]
Thus, $s_i$ is an uncertainty bonus that is larger for under-compared models.

The algorithm forms a cost-adjusted score
$$
\tilde{\theta}_i^{(t)} = \theta_i^{(t)} - \beta_2 \, c_i^{\mathrm{static}} \, \rho_t,
$$
where $\beta_2 \ge 0$ is a cost weight and $\rho_t \ge 0$ is a dynamically updated resource penalty. The anchor distribution is then
$$
p_i^{(t)}
\propto
\exp\!\big(\alpha_{\mathrm{score}} \, \bar{\theta}_i^{(t)}\big)
\left(1 + \alpha_{\mathrm{unc}} s_i\right),
$$
where
$$
\bar{\theta}_i^{(t)} = \tilde{\theta}_i^{(t)} - \frac{1}{K}\sum_{k=1}^K \tilde{\theta}_k^{(t)}.
$$
With probability $\varepsilon_{\mathrm{rand}}$, the anchor is chosen uniformly at random; otherwise it is sampled from $p^{(t)}$.

Conditioned on anchor $i$, the opponent distribution is
\[
q_{j \mid i}^{(t)}
\propto
\mathbf{1}[j \neq i]\,
\exp\!\big(\alpha_{\mathrm{score}} \, \bar{\theta}_j^{(t)}\big)
\left(1 + \alpha_{\mathrm{unc}} s_j\right)
\exp\!\left(
-\frac{(\tilde{\theta}_i^{(t)} - \tilde{\theta}_j^{(t)})^2}{\tau^2}
\right),
\]
where $\tau > 0$ biases comparisons toward similarly scored models.

\paragraph{Observed pairwise outcome.}
For a sampled pair $(i,j)$, if the left model is preferred, then $i$ is recorded as the winner; otherwise $j$ is recorded as the winner. The sufficient statistics are updated as
\[
w_{ij} \leftarrow w_{ij} + 1
\quad\text{if } i \text{ wins over } j,
\]
and
\[
N_{ij} \leftarrow N_{ij} + 1,
\qquad
N_{ji} \leftarrow N_{ji} + 1.
\]

\vspace{0.5em}
\noindent\textbf{Runtime cost accounting.}
Let $r_t^{(L)}$ and $r_t^{(R)}$ denote the realized API costs of the left and right model calls at round $t$. The round cost logged by the implementation is
\[
r_t = r_t^{(L)} + r_t^{(R)},
\]
and cumulative runtime spend is tracked as
\[
S_t = \sum_{u=1}^t r_u.
\]
The implementation reports budget compliance ex post via
\[
S_T \le B,
\]
where $B$ is the user-specified total budget.

\paragraph{Penalty-mode pacing.}
The target per-round spend is defined as
\[
c_{\mathrm{target}} = \frac{B}{T + \tau_s},
\]
where $T$ is the number of prompts and $\tau_s \ge 0$ is a sensitivity parameter. The resource penalty is updated online as
\[
\rho_{t+1}
=
\max\!\left\{
0,\;
\rho_t + \eta\,(r_t - c_{\mathrm{target}})
\right\},
\]
where $\eta > 0$ is a step size. Hence, if realized spend exceeds the target, future expensive models receive a larger penalty through $\tilde{\theta}_i^{(t)}$.

\noindent\textbf{Best-model selection.}
The current best model is selected as:

$$
i^\star
=
\arg\max_i
\left(
\theta_i - \beta_2\, c_i^{\mathrm{static}}\, \rho_t
\right).
$$

\subsubsection{Relative Upper Confidence Bound (RUCB)}

Our second baseline is \textbf{Relative Upper Confidence Bound (RUCB)} \citep{zoghi2014relative}.
RUCB maintains a pairwise win matrix $W(t) \in \mathbb{N}^{K \times K}$, where $w_{ij}(t)$ is the number of times model $i$ has beaten model $j$ up to round $t$, and $N_{ij}(t) = w_{ij}(t) + w_{ji}(t)$ is the number of comparisons between $i$ and $j$.

\paragraph{Optimistic confidence bounds and champion set.}
For each pair $(i,j)$ we define the empirical win probability
\[
    \widehat{p}_{ij}(t)
    \;=\;
    \begin{cases}
    \dfrac{w_{ij}(t)}{N_{ij}(t)}, & \text{if } N_{ij}(t) > 0,\\[6pt]
    \tfrac{1}{2}, & \text{otherwise},
    \end{cases}
\]
and construct RUCB-style upper confidence bounds
\[
    u_{ij}(t)
    \;=\;
    \widehat{p}_{ij}(t)
    +
    \sqrt{\frac{\alpha \log t}{N_{ij}(t)}},
\]
with the convention $u_{ij}(t) = 1$ when $N_{ij}(t) = 0$, and $u_{ii}(t) = \tfrac{1}{2}$ on the diagonal.
Intuitively, $u_{ij}(t)$ is an optimistic estimate of the probability that $i$ beats $j$, which shrinks as we collect more comparisons for $(i,j)$.
RUCB then forms the \emph{champion set}
\[
    \mathcal{C}_t
    \;=\;
    \Bigl\{
        i \in [K] : u_{ij}(t) \,\ge\, \tfrac{1}{2}
        \;\; \text{for all } j \in [K]
    \Bigr\},
\]
i.e., arms that are not currently ruled out as Condorcet winners under the optimistic bounds.

We also maintain a \emph{hypothesized best arm} $\mathcal{B}_t \subseteq [K]$, which in our implementation either contains a single index or is empty.
If $\mathcal{C}_t = \emptyset$, RUCB chooses a champion uniformly at random from all arms.
Otherwise, any hypothesized best arm that is no longer in $\mathcal{C}_t$ is discarded,
\[
  \mathcal{B}_t \leftarrow \mathcal{B}_t \cap \mathcal{C}_t.
\]
If $\lvert \mathcal{C}_t \rvert = 1$, RUCB sets that arm as both the champion and the hypothesized best, $c_t \in \mathcal{C}_t$ and $\mathcal{B}_t = \{c_t\}$.
If $\lvert \mathcal{C}_t \rvert > 1$ and $\mathcal{B}_t$ is non-empty, RUCB selects
\[
    c_t \;=\;
    \begin{cases}
    \text{the arm in } \mathcal{B}_t, & \text{with probability } \tfrac{1}{2},\\
    \text{a uniform arm in } \mathcal{C}_t \setminus \mathcal{B}_t, & \text{with probability } \tfrac{1}{2},
    \end{cases}
\]
and if $\mathcal{B}_t$ is empty it samples $c_t$ uniformly from $\mathcal{C}_t$.

\paragraph{Relative UCB opponent selection.}
Given the champion $c_t$, RUCB treats it as a benchmark and applies a UCB rule over the column of $c_t$.
Specifically, it selects an opponent
\[
    d_t
    \;\in\;
    \arg\max_{j \in [K] \setminus \{c_t\}} u_{j c_t}(t),
\]
breaking ties uniformly at random.
The resulting pair $(c_t, d_t)$ is then evaluated on the current HelpSteer2 prompt: both models are invoked via the API (incurring per-round cost equal to the sum of their runtime costs), and the LLM judge compares their metadata against the constraints to declare a winner.
If model $i$ is judged preferable to model $j$, we update $w_{ij}(t+1) = w_{ij}(t) + 1$ and leave all other entries of $W$ unchanged.

\paragraph{Copeland ranking and convergence.}
RUCB is theoretically analyzed in terms of Condorcet regret, but for evaluation we derive a ranking from the empirical win matrix $W(t)$ using normalized Copeland scores.
After each round $t$ we compute
\[
    \widehat{\zeta}_i(t)
    \;=\;
    \frac{1}{K-1}
    \sum_{j \neq i}
    \mathbb{I}\!\biggl(
        \frac{w_{ij}(t)}{
            w_{ij}(t) + w_{ji}(t)
            + \mathbb{I}[w_{ij}(t) + w_{ji}(t) = 0]
        }
        > \tfrac{1}{2}
    \biggr),
\]
the fraction of opponents that arm $i$ empirically beats more than half the time.
We define the current RUCB-best model as the empirical Copeland winner
\[
    i_t^\star \in \arg\max_{i \in [K]} \widehat{\zeta}_i(t).
\]
As in the other baselines, a model is \emph{feasible} if it satisfies all routing constraints (e.g., price and speed) according to its metadata.
We declare that RUCB has \emph{converged} once the RUCB-best model $i_t^\star$ is feasible for $5$ consecutive rounds.
We report both the final RUCB-best model and the round at the first occurrence of this constraint-met streak, together with the total runtime cost accumulated over all comparisons up until the final round of the streak.

All the runs will be from seed 42 to 46.

\subsubsection{Epsilon greedy}

We consider the stochastic dueling bandit protocol \citep{yue2009irduel} with $K$ models (arms).
At each round $t$, the algorithm selects an ordered pair $(c_t, d_t)$ and observes a binary outcome indicating which model is preferred.

\paragraph{Pairwise win statistics.}
We maintain a win matrix $W(t)\in\mathbb{N}^{K\times K}$, where $w_{ij}(t)$ is the number of times model $i$ has beaten model $j$ up to (and including) round $t$.
Define the total number of comparisons between $i$ and $j$ as
\[
N_{ij}(t) \;=\; w_{ij}(t) + w_{ji}(t).
\]
The empirical win probability estimate is
\[
\widehat{p}_{ij}(t)
\;=\;
\begin{cases}
\dfrac{w_{ij}(t)}{N_{ij}(t)}, & \text{if } N_{ij}(t)>0,\\[6pt]
\tfrac{1}{2}, & \text{if } N_{ij}(t)=0,
\end{cases}
\qquad\text{with}\qquad
\widehat{p}_{ii}(t) = \tfrac{1}{2}.
\]
(Thus, unseen pairs are treated as neutral ties in expectation.)

\paragraph{Mean-score aggregation.}
From the empirical pairwise estimates, we compute a per-model mean score $\mu_i(t)$ as the average probability of beating a uniformly random opponent:
\[
\mu_i(t)
\;=\;
\frac{1}{K-1}\sum_{j\neq i}\widehat{p}_{ij}(t).
\]
This is the empirical \emph{Borda score} objective used frequently in dueling/voting bandit variants \citep{urvoy2013voting, jamieson2015sparse}.

\paragraph{$\epsilon$-greedy champion selection.}
Given $\mu(t)\in\mathbb{R}^K$, the champion index $c_t$ is drawn by an $\epsilon$-greedy rule \citep{sutton2018eg, auer2002UCB}:
\[
c_t \;=\;
\begin{cases}
\arg\max_{i\in[K]} \mu_i(t), & \text{with prob. } 1-\epsilon,\\
\text{Uniform}([K]), & \text{with prob. } \epsilon,
\end{cases}
\]
breaking ties uniformly at random among maximizers.

\paragraph{Challenger selection.}
The challenger is deterministically selected as the best remaining arm under the same mean score:
\[
d_t \in \arg\max_{j\in[K]\setminus\{c_t\}} \mu_j(t),
\]
again breaking ties uniformly at random.

\paragraph{Update rule.}
After playing the duel $(c_t,d_t)$ we observe the winner $w_t\in\{c_t,d_t\}$.
If $w_t=c_t$, we update $w_{c_t d_t}(t+1)=w_{c_t d_t}(t)+1$; otherwise $w_{d_t c_t}(t+1)=w_{d_t c_t}(t)+1$.
All other entries of $W$ remain unchanged.
We then recompute $\widehat{p}_{ij}(t+1)$ and $\mu(t+1)$ from the updated counts.

\paragraph{Ranking and feasibility.}
At any round $t$, we rank models by $\mu_i(t)$ in descending order and define the current best model as
\[
i_t^\star \in \arg\max_{i\in[K]} \mu_i(t).
\]
As in the other baselines, a model is feasible if it satisfies all routing constraints based on its metadata.
We declare convergence once $i_t^\star$ is feasible for $5$ consecutive rounds.

\section{Setup}

We assume that hidden objectives like helpfulness of an LLM is up to the user's decision in each duel. Instead, we focus on tangible objectives, e.g. input cost, image input feature support, or a large window context. 

\subsection{Procedure}

We assume a user with a hidden preference that is either unknown or undefined at the beginning of the selection process. Over the course of the interaction, the user gradually refines their choices and ultimately selects the LLM that best aligns with this latent preference.

To simulate a real user,  We use prompt engineering techniques such as few-shot prompting \citep{brown2020prompting} on LLM-as-the-judge and LLM-as-the-user to increase accuracy. We attempt to mitigate known problems of LLM-as-the-judge such as positional bias or inconsistency \citep{zheng2023judging} by encouraging the LLM to only consider model metadata and user constraints and assess trade offs between them. We also use HelpSteer2 \citep{wang2025helpsteerpreference} to avoid prompting bias.

We assume a very low cost budget, calculated by the median of cost input divided by the number of models: \$1.25/25 = \$0.05, to observe how the systems react to unrealistic budgets. We also assume a round budget of 10, as most conversations conclude within 10 turns \citep{deng2023turns}.

We designed the following setup to assess the limits of CUPID:
\begin{itemize}
    \item We utilize the routing model at every turn. After each round, the user requests exactly \emph{one} change (e.g. I want a cheaper model).
    \item We do not perform hyperparameters tuning; instead, we use the configurations from Table~\ref{tab:hyperparameter}
    \item We dropped all columns containing the model name. This disables possible beneficial external knowledge for the routing model.
\end{itemize}

\begin{figure}[t]
  \centering
  \includegraphics[
    width=\linewidth,
  ]{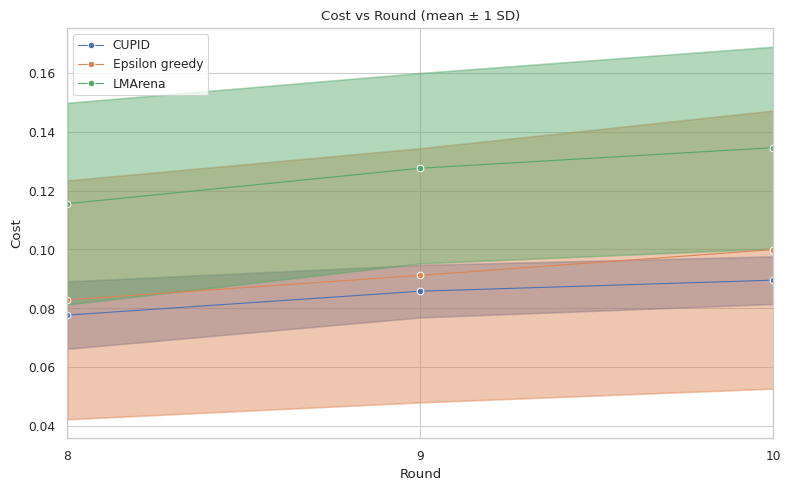}
  \caption{Mean cost and standard deviation at final rounds.}
  \label{fig:comparison}
\end{figure}

\label{sec:app:exp1:setup}
\subsection{Simulation Environment}
\subsubsection{User Simulator}
\label{app:sim_user}
This agent simulates a human user who refines their requirements over time. It receives the metadata of the previously selected model and generates a directional feedback signal (e.g., "I need a cheaper model").

\begin{promptbox}[title=System Prompt: User Simulator]

You are simulating how a human user refines their preferences over language models.

You will be given:

- A list of formal constraints / objectives they have (possibly empty).

- Metadata for the model that was most recently chosen (or '(none yet)').

Your job:
1) Produce a short natural-language DIRECTION for what the user wants next.
   Examples: 'I want a cheaper model.', 'I need a model with image input.', 'I want faster responses.', 'I want a model with a newer knowledge cutoff.'.
   
2) From the list below, focus on at most ONE main change or preference that the constraint did not meet. Do not combine multiple items.
[List of Attributes: intelligence, speed, input\_price, etc...]

3) You SHOULD consider the last chosen model's metadata and any explicit constraints. For example, if the last model is expensive AND slow, the direction might ask for a cheaper model. And if the next iteration it is still slow the direction might ask for a faster model. Only give comparison style direction, 'cheaper' model, not cost<=5 model.

4) Output ONLY the DIRECTION sentence, without quotes, without any prefix like 'DIRECTION:'. No explanations.

5) If all constraints are met, output 'NONE'
\end{promptbox}

\subsection{User Types (A1--A4) and Constraint/Budget Construction}
\label{app:data-proc} 

\paragraph{Budget and round limits.}
We simulate a strict interaction budget with a maximum number of rounds $T$ and a monetary budget $B$, and use the main paper pacing rule $c_{\text{target}}=B/(T+\tau)$ for the cost controller.

\paragraph{Constraint Construction.}
We sample the $n$ constraints, then for each constraint we sample $k$ values from it. We then get the constraints by setting the minimum or maximum value to the sampled value using the rules in \ref{tab:comparison-rules}

\begin{table}[ht]
\caption{Constraints Rules.}
\label{tab:comparison-rules}
\centering
\small
\begin{tabular}{lcc}
\hline
Objective & Min/Max & Comparison Type \\
\hline
intelligence            & Max & $\ge$ \\
speed                   & Max & $\ge$ \\
text-input              & Max & $\ge$ \\
image-input             & Max & $\ge$ \\
voice-input             & Max & $\ge$ \\
video-input             & Max & $\ge$ \\
text-output             & Max & $\ge$ \\
image-output            & Max & $\ge$ \\
audio-output            & Max & $\ge$ \\
video-output            & Max & $\ge$ \\
reasoning               & Max & $\ge$ \\
input-price             & Min & $\le$ \\
cached-input            & Max & $\le$ \\
output-price            & Min & $\le$ \\
window-context          & Max & $\ge$ \\
max-output              & Max & $\ge$ \\
completion-endpoint     & Max & $\ge$ \\
responses-endpoint      & Max & $\ge$ \\
assistants-endpoint     & Max & $\ge$ \\
batch-endpoint          & Max & $\ge$ \\
fine-tuning-endpoint    & Max & $\ge$ \\
streaming               & Max & $\ge$ \\
function-calling        & Max & $\ge$ \\
structured-output       & Max & $\ge$ \\
fine-tuning             & Max & $\ge$ \\
distillation            & Max & $\ge$ \\
predicted-outputs       & Max & $\ge$ \\
rate-limit-tier-one     & Max & $\ge$ \\
rate-limit-tier-two     & Max & $\ge$ \\
rate-limit-tier-three   & Max & $\ge$ \\
rate-limit-tier-four    & Max & $\ge$ \\
rate-limit-tier-five    & Max & $\ge$ \\
\hline
\end{tabular}
\end{table}

For example, if algorithm samples the objective "intelligence" and a set of values $\{3, 2, 5, 1\}$ then the constraint will be $\ge 5$.

\paragraph{User types (A1--A4).}
In the OpenAI model-zoo simulations (Table~\ref{tab:exp1}), we evaluate four experimental setups (A1--A4) corresponding to four simulated user types that differ in the number of relevant objectives $K\subseteq K'$ (where $K'$ is the full objective set
annotated for each model in the zoo). For each setup, $M^\star$ denotes the number of models that are \emph{tied} for best performance under the simulated user’s objective set (i.e., multiple models may satisfy the objectives equally well).

\textbf{Experiment 1: $n$ = 5, $k$ = 5, seed = 42}
In this experiment, the user hidden preferences are: a model with output price <= 0.4, tier five rate limit >= 150000000, function calling feature, exists a batch endpoint, and text input feature. 

\textbf{Experiment 2: $n$ = 7, $k$ = 5, seed = 44}
In this experiment, the user hidden preferences are: a model with cached input price <= 1.25; supports image input, function calling, text input; does not have image output feature; and exists a batch endpoint and a responses endpoint.

\textbf{Experiment 3: $n$ = 9, $k$ = 5, seed = 46}
In this experiment, the user hidden preferences are: a model with cached input price <= 0.13; supports function calling and text input; does not support video input, image output, and voice input; exists a batch endpoint; and has a max token output >= 100000 and tier three rate limit >= 4000000.

\textbf{Experiment 4: Constraints correlation}
This experiment allows us to observe whether making a condition weaker accelerate the convergence speed or save more cost. The constraints are the same, but the max token output is reduced to >= 10000 instead of 100000.

% \subsubsection{Additional Results}
% \label{sec:app:exp1:results}

%===
\section{Experiment - Human Study}
\label{app:human-study}

\subsection{User Study Interface}
\label{sec:app:exp2:setup}
\paragraph{Interface.}
Participants interact with two systems in parallel (CUPID vs.\ baseline), masked as ``System A'' and ``System B''.
Each interaction round consists of: (i) a single participant prompt, (ii) two candidate outputs produced \emph{within each system} for that prompt, (iii) the participant selecting the preferred output \emph{within each system} (i.e., two independent within-system comparisons per round), and (iv) the participant can provide optional feedback \emph{for each system}.
Figure~\ref{fig:ui-duel} shows the interaction-phase UI used for entering prompts, viewing side-by-side outputs, and selecting within-system preferences.
Figure~\ref{fig:ui-final} shows the final evaluation UI.

\begin{figure}[t]
  \centering
  \includegraphics[height=0.8\textheight]{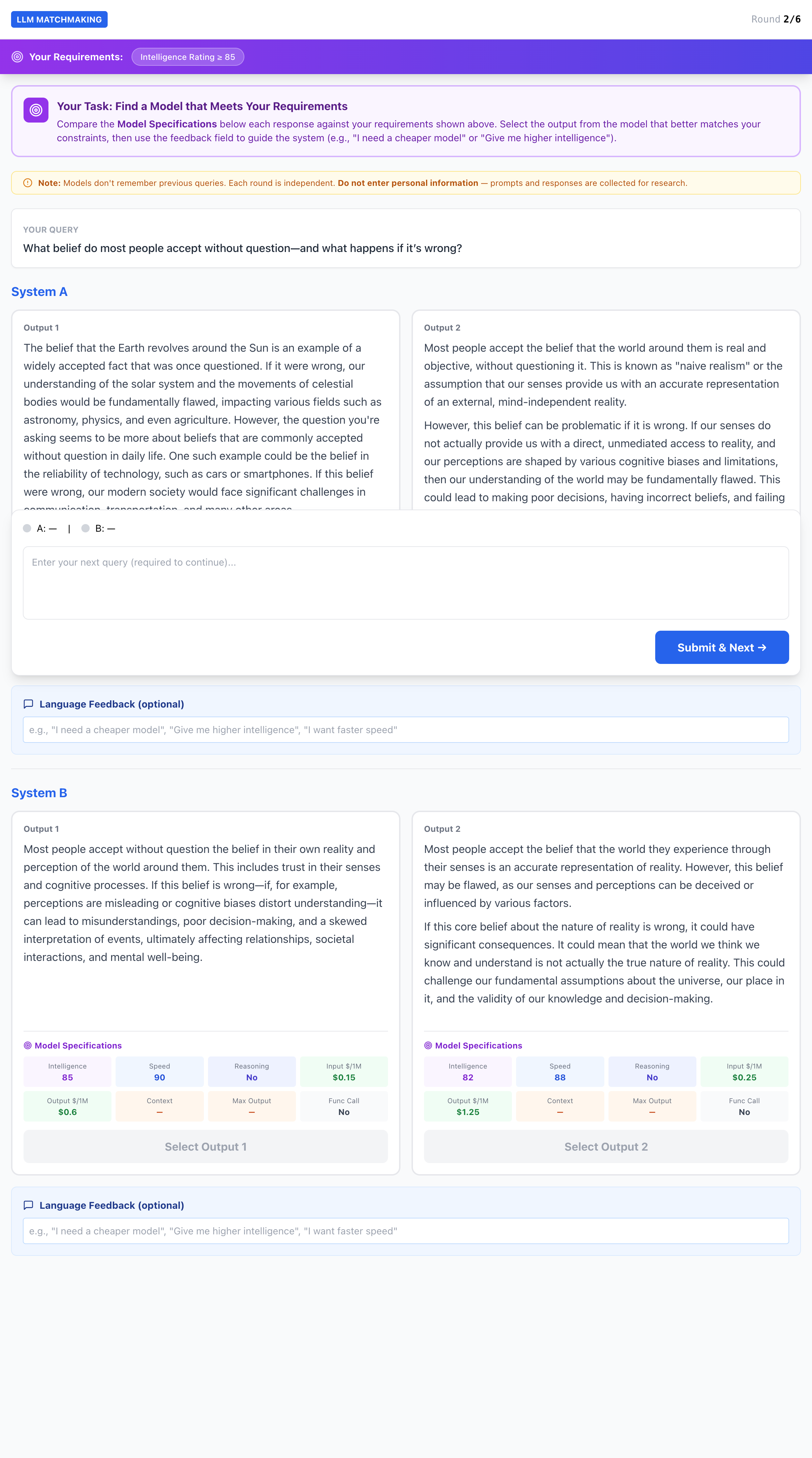}

  \caption{User study interface for the interaction phase (within-system duels).}
  \label{fig:ui-duel}
\end{figure}

\begin{figure}[t]
  \centering
  \includegraphics[width=\linewidth]{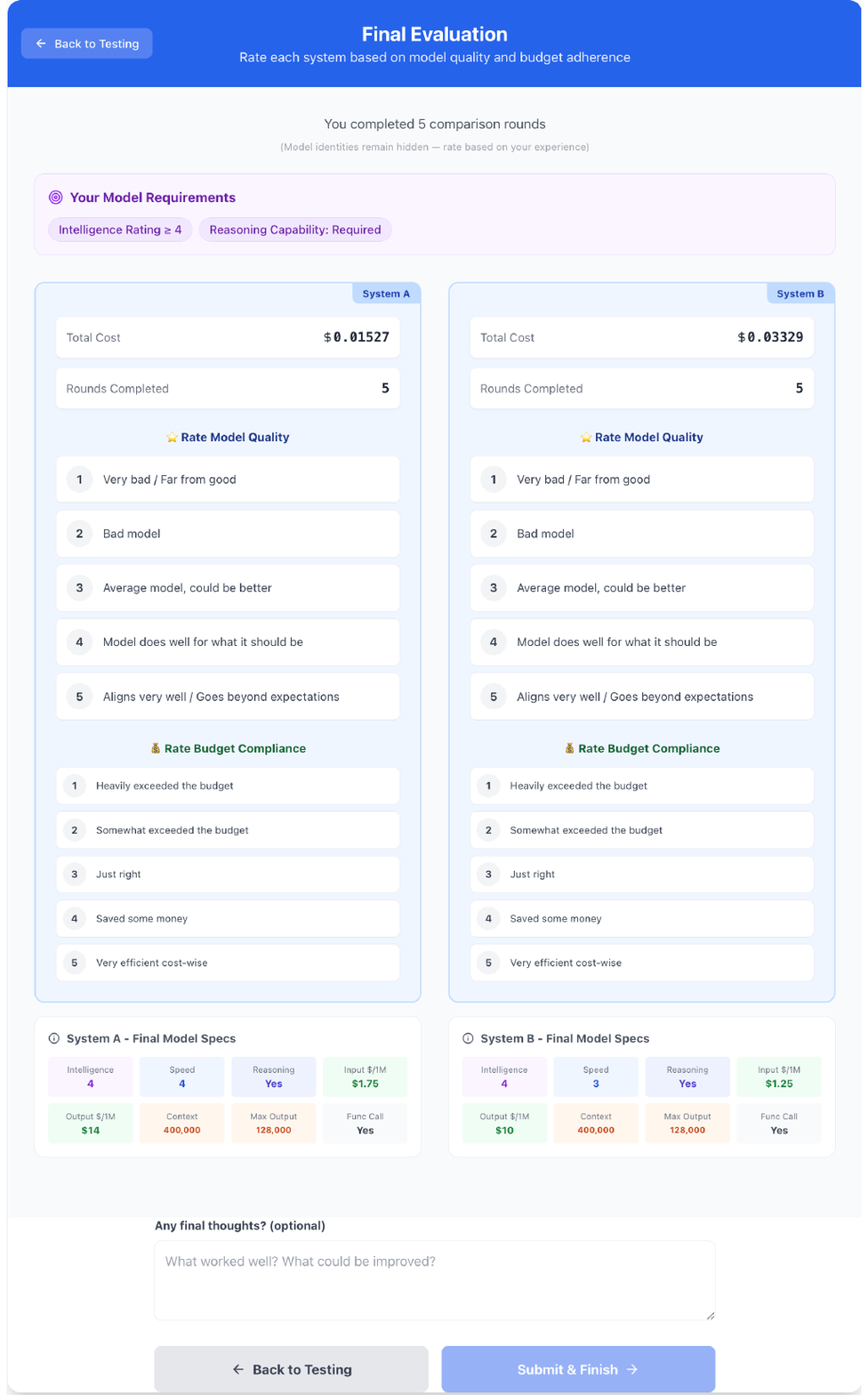}
  \caption{Final-model evaluation interface. Participants rate each system's final matched model.}
  \label{fig:ui-final}
\end{figure}

\subsection{Protocol and Participants}
\label{sec:app:exp2:results}
\paragraph{Study design.}
We used an A/B-blinded, within-subject design: each participant experienced both CUPID and a strong adaptive dueling-bandit baseline (\textbf{LMA / Arena-style sampling}; see Appendix~\ref{app:baselines:LMA} for implementation details).
System identity was masked (A/B), and model identities were hidden throughout the study.

Each session has three stages:
\begin{enumerate}[leftmargin=*]
    \item \textbf{Battle phase:} participants submitted prompts and provided within-system pairwise preferences for each system at each round.
    \item \textbf{Final model Evaluation phase:} after the interaction phase ended (budget exhausted or rounds completed), each system nominated its final matched model. Participants then evaluated the two final matched models by playing with it and seeing their responses side by side.
    \item \textbf{User Rating phase:} participants rated each system's final matched model on two 5-point Likert scales (shown in Table~\ref{tab:likert-anchors}).
\end{enumerate}

\begin{table}[t]
\centering
\small
\begin{tabular}{ll}
\toprule
\textbf{Scale} & \textbf{Ratings (1 = lowest, 5 = highest)} \\
\midrule
\textbf{Model Quality (1--5)} &
1 = ``Very bad / Far from good''; \\
& 2 = ``Bad model''; \\
& 3 = ``Average model, could be better''; \\
& 4 = ``Model does well for what it should be''; \\
& 5 = ``Aligns very well / Goes beyond expectations''. \\
\addlinespace
\textbf{Budget Compliance (1--5)} &
1 = ``Heavily exceeded the budget''; \\
& 2 = ``Somewhat exceeded the budget''; \\
& 3 = ``Just right''; \\
& 4 = ``Saved some money''; \\
& 5 = ``Very efficient cost-wise''. \\
\bottomrule
\end{tabular}
\caption{Likert anchors shown to participants for final-model evaluation.}
\label{tab:likert-anchors}
\end{table}

Participants also saw each system's \textbf{total accrued cost}, \textbf{rounds completed} and the final matched model details before submitting ratings.

\paragraph{Participants, ethics, and exclusions.}
We conducted two distinct user evaluations: a text user study and an image user study.
Participants were recruited via convenience sampling through professional and academic networks.
Participation was voluntary and uncompensated; the study was open to eligible adults (18+).

Text study sample size was \(N_{\text{text}} = 41\) across H1--H3; image study sample size was \(N_{\text{img}} = 10\) (H4).
Per-paradigm sample sizes were: H1 \(N=14\), H2 \(N=12\), H3 \(N=15\), H4 \(N=10\). The population was predominantly young, with a mean age of 22.32 ($SD=4.59$). The sample was highly educated: 48.78\% held a Bachelor's degree, while 41.46\% held a graduate degree. For expert group where major is asked, 100\% of participants identified Computer Science or IT as their major. Participants generally reported moderate-to-high AI familiarity.

\subsection{Outcomes and Statistical Analysis}
\label{app:human-study:analysis}
\subsubsection{Endpoints}
\label{app:human-study:endpoints}
Our primary subjective endpoints were the final-model Likert ratings:
(i) \textbf{Quality rating} and (ii) \textbf{Budget compliance rating}, both on 1--5 scales.
Our objective logs included: (i) \textbf{total accrued API cost} and (ii) \textbf{number of interaction rounds used}

\subsubsection{Win / draw / loss breakdown}
\label{app:human-study:winrate}
For each participant-session and each endpoint (Quality or Budget Compliance), we compare CUPID against the baseline using the final-model Likert ratings.
Let $d_i = r_i^{\text{CUPID}} - r_i^{\text{base}}$ denote the paired difference for participant-session $i$.

We categorize outcomes as:
\begin{itemize}[leftmargin=*]
    \item \textbf{Win} if $d_i > 0$,
    \item \textbf{Draw} if $d_i = 0$,
    \item \textbf{Loss} if $d_i < 0$.
\end{itemize}

We report the \textbf{percentage} of sessions that fall into each category (Win/Draw/Loss) for each paradigm (H1--H4) and pooled cohorts.
The overall percentage is visualized as stacked-bar plot in the main text (e.g., Figure~\ref{fig:human-text}(a)), and summarized numerically in Table~\ref{tab:F3-humanstats}.

\subsubsection{Paired tests and effect sizes}
\label{app:human-study:tests}
To quantify paired differences beyond the Win/Draw/Loss breakdown, we also analyze the paired Likert differences $d_i$ for each endpoint.
Because Likert ratings are ordinal, we use a \textbf{paired Wilcoxon signed-rank test} (two-sided) for statistical significance.
When ties are prevalent, we additionally report a \textbf{paired sign test} as a robustness check.

We report a rank-based effect size, \textbf{rank-biserial correlation} $r_{\mathrm{rb}}$ (positive values favor CUPID).

\subsubsection{Summary table}
\label{app:human-study:table}
Table~\ref{tab:F3-humanstats} summarizes results by paradigm (H1--H4) and pooled cohorts (text-only and all studies).

\begin{table*}[t]
\centering
\small
\setlength{\tabcolsep}{4pt}
\begin{tabular}{l c c c c c c c c c}
\toprule
& &
\multicolumn{4}{c}{\textbf{Quality rating (Likert 1--5)}} &
\multicolumn{4}{c}{\textbf{Budget compliance (Likert 1--5)}} \\
\cmidrule(lr){3-6} \cmidrule(lr){7-10}
\textbf{Setup} & \textbf{N} &
\textbf{W/D/L (\%)} & \textbf{Median $\Delta$} & \textbf{$p$} & \textbf{$r_{\mathrm{rb}}$} &
\textbf{W/D/L (\%)} & \textbf{Median $\Delta$} & \textbf{$p$} & \textbf{$r_{\mathrm{rb}}$} \\
\midrule
H1: constraints (text)        & 14 & 35.7/7.1/57.1 & -1.0 & 0.537 & -0.187 & 50.0/42.9/7.1 & 0.5 & 0.031 & 0.833 \\
H2: domain expert (text)      & 12 & 16.7/33.3/50.0	 & -0.5 & 0.289 & -0.500 & 58.3/25.0/16.7 & 1.0 & 0.180 & 0.556 \\
H3: open preference (text)    & 15 & 66.7/20.0/13.3 & 1.0 & 0.058 & 0.590 & 60.0/26.7/13.3 & 1.0 & 0.033 & 0.667 \\
H4: open preference (image)   & 10 & 50.0/40.0/10.0 & 0.5 & 0.406 & 0.476 & 70.0/20.0/10.0 & 1.0 & 0.047 & 0.833 \\
\midrule
Text pooled (H1--H3)          & 41 & 41.5/19.5/39.0 & 0.0 & 0.805 & 0.046 & 56.1/31.7/12.2 & 1.0 & 0.001 & 0.692 \\
All pooled (H1--H4)           & 51 & 43.1/23.5/33.3 & 0.0 & 0.496 & 0.118 & 58.8/29.4/11.8 & 1.0 & 0.000 & 0.730 \\
\bottomrule
\end{tabular}
\caption{\textbf{Human-study results by paradigm and pooled cohorts.}
For each participant-session and endpoint, let $d_i = r_i^{\text{CUPID}} - r_i^{\text{base}}$ be the paired difference in Likert ratings.
\textbf{W/D/L (\%)} reports the percentage of sessions where $d_i>0$ (win), $d_i=0$ (draw), and $d_i<0$ (loss), respectively.
\textbf{Median $\Delta$} is the median of $d_i$.
$p$ is from a paired Wilcoxon signed-rank test (two-sided).
$r_{\mathrm{rb}}$ is rank-biserial correlation (positive favors CUPID).}
\label{tab:F3-humanstats}
\end{table*}

% (If not already in your preamble)
% \usepackage{subcaption}
\subsection{Additional Diagnostics}
\label{app:human-study:diagnostics}

\subsubsection{Cumulative cost growth over interaction rounds}
To complement aggregate cost endpoints, we plot the cumulative spend trajectory over rounds for both modalities (Figure~\ref{fig:app_cumulative_cost_growth}).
For each participant session and system, we compute cumulative cost up to round \(t\) by summing all API costs incurred by the system through the end of round \(t\).

\begin{figure*}[t]
  \centering
  \begin{subfigure}{0.49\linewidth}
    \centering
    \includegraphics[width=\linewidth]{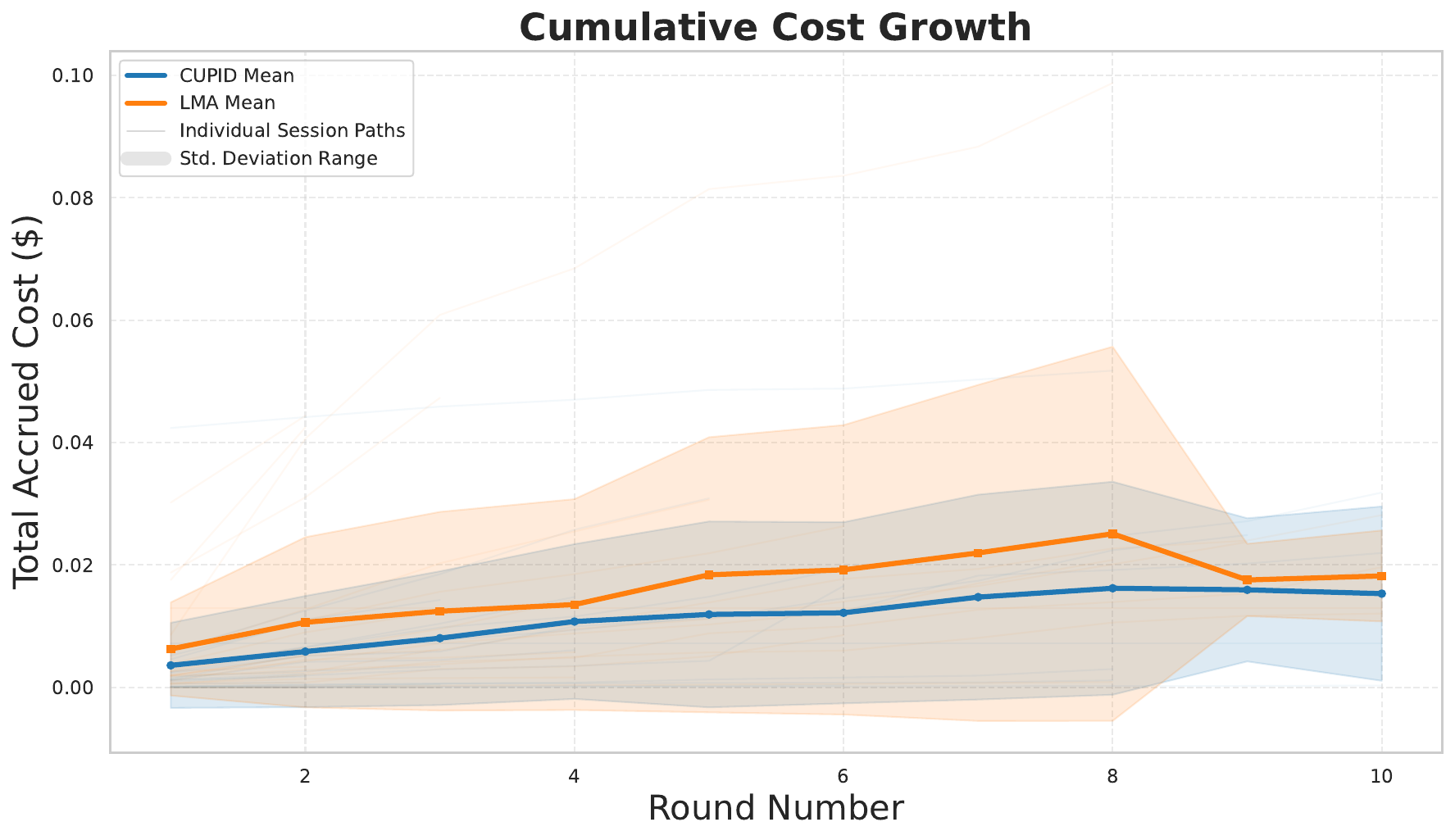}
    \caption{Text generation study.}
    \label{fig:app_cum_cost_growth_text}
  \end{subfigure}
  \hfill
  \begin{subfigure}{0.49\linewidth}
    \centering
    \includegraphics[width=\linewidth]{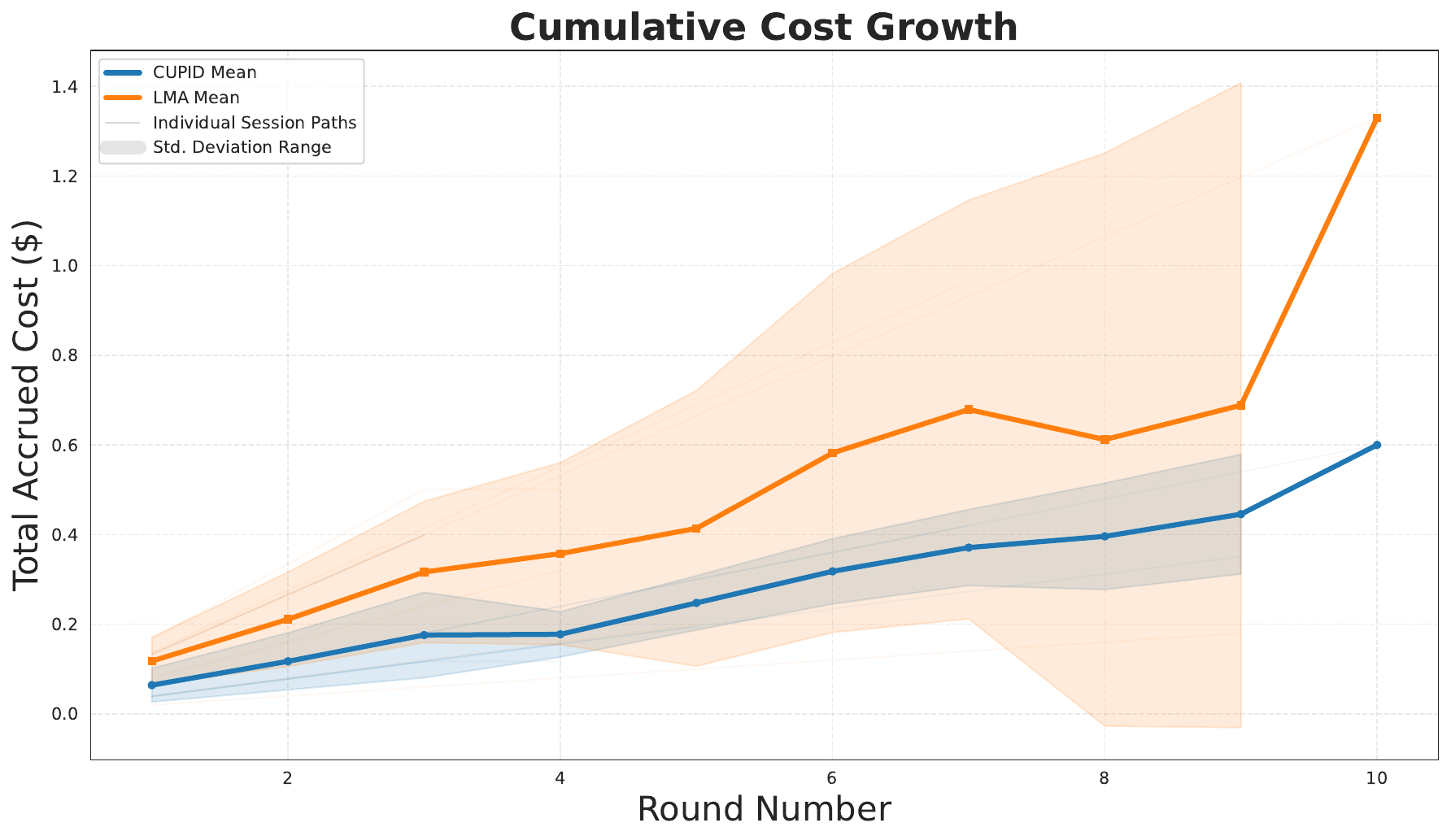}
    \caption{Image generation study.}
    \label{fig:app_cum_cost_growth_image}
  \end{subfigure}
  \caption{\textbf{Cumulative cost growth during the interaction phase.} The y-axis reports cumulative spend. The x-axis is interaction round. This diagnostic provides a time-resolved view of budget consumption, complementing aggregate cost outcomes.}
  \label{fig:app_cumulative_cost_growth}
\end{figure*}

\subsubsection{Total cost vs. subjective budget-compliance rating}
To relate subjective budget ratings to objective spending, we plot each session's total cost against the corresponding budget rating (Figure~\ref{fig:app_total_cost_vs_budget_rating}).

\begin{figure*}[t]
  \centering
  \begin{subfigure}{0.49\linewidth}
    \centering
    \includegraphics[width=\linewidth]{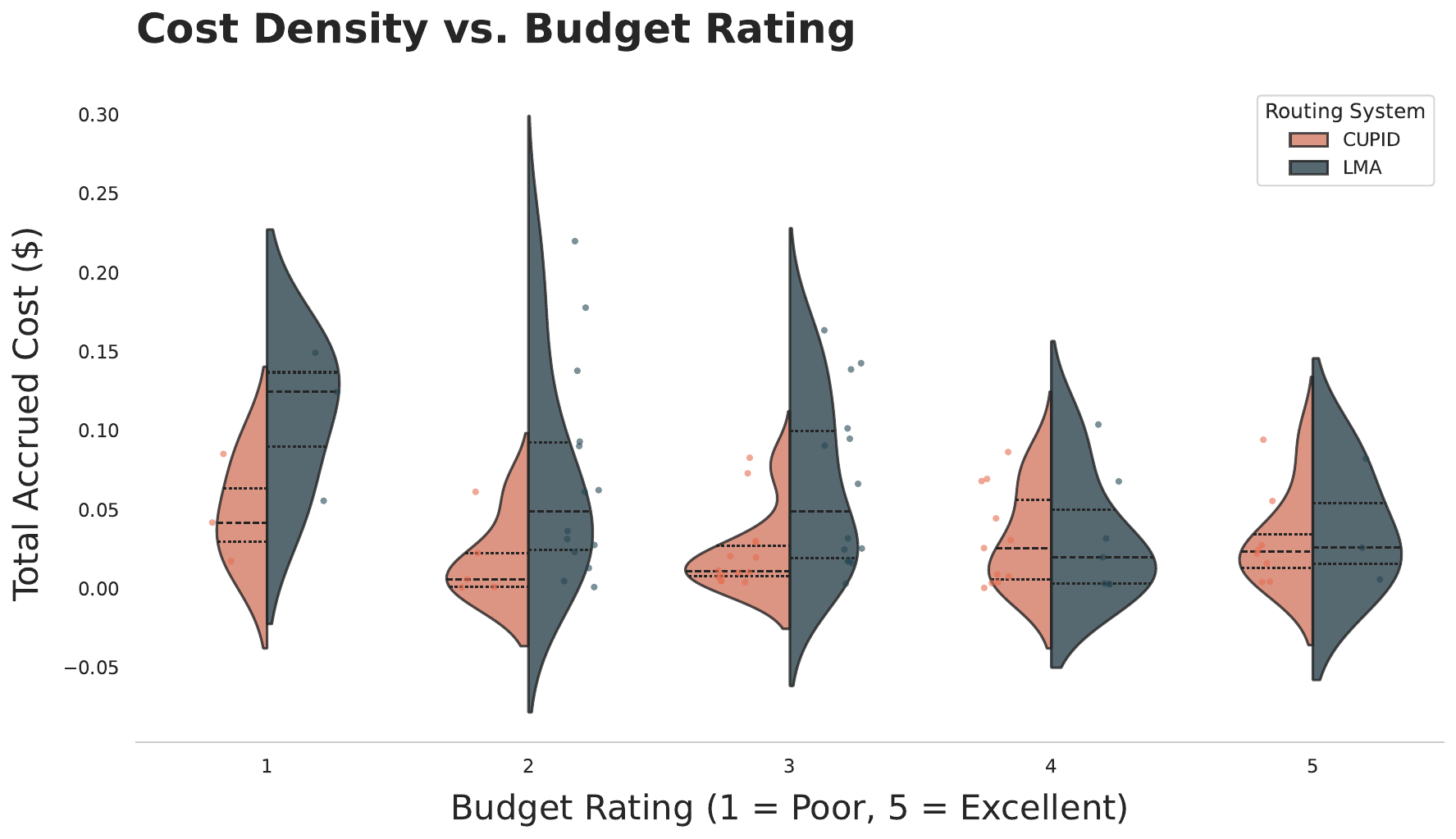}
    \caption{Text generation study.}
    \label{fig:app_total_cost_vs_budget_rating_text}
  \end{subfigure}
  \hfill
  \begin{subfigure}{0.49\linewidth}
    \centering
    \includegraphics[width=\linewidth]{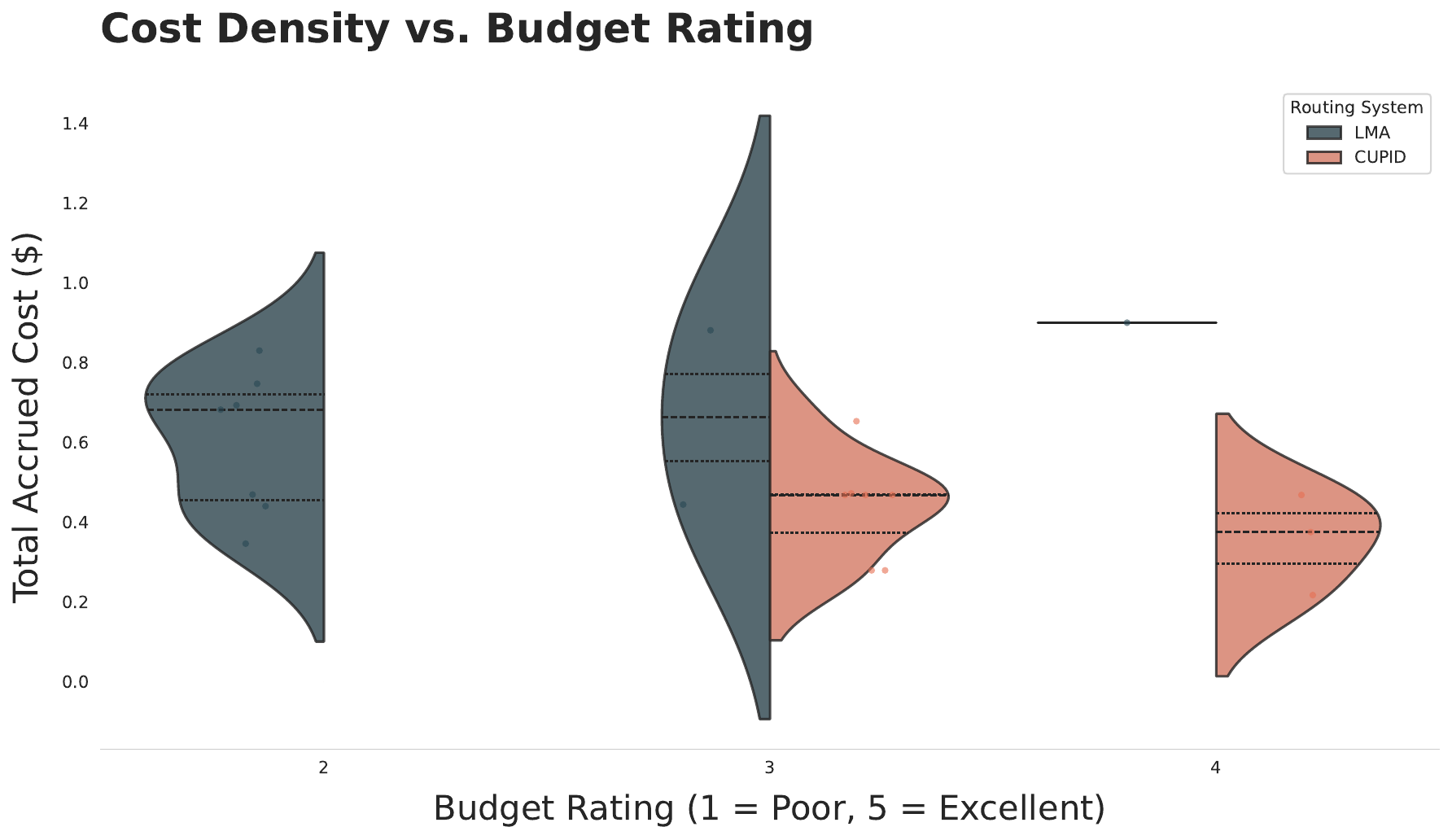}
    \caption{Image generation study.}
    \label{fig:app_total_cost_vs_budget_rating_image}
  \end{subfigure}
  \caption{\textbf{Objective spending vs. subjective budget rating.} Each point corresponds to one participant session. The x-axis is the budget rating (Likert 1-5),
  and the y-axis is the participant's budget-compliance rating for the corresponding system. This figure links
  the subjective budget endpoint to an objective measure of budget usage.}
  \label{fig:app_total_cost_vs_budget_rating}
\end{figure*}

\subsubsection{Per-paradigm breakdown}
\label{app:human-study:F52}

Table~\ref{tab:F3-humanstats} reports results by paradigm (H1--H4) and pooled cohorts.
All comparisons are \textbf{within-participant (paired)}: each participant evaluated both CUPID and the baseline under the same budget and round constraints.
We summarize outcomes with (i) the \textbf{Win/Draw/Loss (W/D/L)} percentage breakdown (visualized as stacked bars in the main text) and (ii) paired nonparametric statistics on the Likert differences (median $\Delta$, Wilcoxon $p$, and rank-biserial $r_{\mathrm{rb}}$).
Because per-paradigm sample sizes are modest (H1: $N=14$, H2: $N=12$, H3: $N=15$, H4: $N=10$), we interpret per-paradigm results as descriptive and emphasize consistent patterns across cohorts.

\paragraph{Overall pattern (pooled cohorts).}
Across both pooled cohorts, CUPID demonstrates a clear advantage on \textbf{budget compliance} but a mixed outcome on \textbf{quality}.
For budget compliance, CUPID wins substantially more often than it loses (text pooled: 56.1/31.7/12.2; all pooled: 58.8/29.4/11.8 W/D/L), with a positive median improvement (median $\Delta = 1.0$ in both pooled cohorts) and statistically significant paired differences (text pooled: $p=0.001$, $r_{\mathrm{rb}}=0.692$; all pooled: $p<0.001$, $r_{\mathrm{rb}}=0.730$).
In contrast, pooled \textbf{quality} differences are not consistent across participants: the median quality difference is $0.0$ (text pooled and all pooled), with near-zero effect sizes ($r_{\mathrm{rb}}=0.046$ and $0.118$) and non-significant Wilcoxon tests ($p=0.805$ and $p=0.496$).
Taken together, the pooled results suggest CUPID reliably improves perceived budget efficiency while maintaining, on average, comparable final-model quality.

\paragraph{H1: constraint-based selection (text).}
H1 is the most structured setting: participants were given explicit objectives (2--4 constraints) and model specifications were visible.
Here, CUPID tends to favor budget efficiency more strongly than quality: quality outcomes skew toward losses (35.7/7.1/57.1 W/D/L; median $\Delta=-1.0$, $p=0.537$, $r_{\mathrm{rb}}=-0.187$), while budget compliance strongly favors CUPID (50.0/42.9/7.1 W/D/L; median $\Delta=0.5$, $p=0.031$, $r_{\mathrm{rb}}=0.833$).
This pattern is consistent with a cost-aware strategy that participants perceived as more budget-efficient, but not uniformly better in perceived quality under explicit constraints.

\paragraph{H2: domain expert selection (text).}
H2 evaluates domain-appropriate performance, which is partially structured but often multi-faceted.
Quality again trends negative (16.7/33.3/50.0 W/D/L; median $\Delta=-0.5$, $p=0.289$, $r_{\mathrm{rb}}=-0.500$), whereas budget compliance trends positive (58.3/25.0/16.7 W/D/L; median $\Delta=1.0$, $p=0.180$, $r_{\mathrm{rb}}=0.556$).
While the direction matches the overall budget advantage, the H2 cohort is smaller and the variability in perceived domain quality appears higher, making conclusions less stable.

\paragraph{H3: open preference selection (text).}
H3 is the most open-ended text setting: participants selected based on subjective, latent objectives.
In this regime, CUPID tends to improve both outcomes: quality strongly favors CUPID (66.7/20.0/13.3 W/D/L; median $\Delta=1.0$, $p=0.058$, $r_{\mathrm{rb}}=0.590$), and budget compliance is significantly improved (60.0/26.7/13.3 W/D/L; median $\Delta=1.0$, $p=0.033$, $r_{\mathrm{rb}}=0.667$).
Although the quality result is marginal at this sample size, the combined directionality suggests CUPID is particularly effective when the target utility is implicit rather than pre-specified.

\paragraph{H4: open preference selection (image).}
H4 extends open preference selection to text-to-image generation.
CUPID shows a strong budget compliance advantage (70.0/20.0/10.0 W/D/L; median $\Delta=1.0$, $p=0.047$, $r_{\mathrm{rb}}=0.833$) and a positive but non-significant quality shift (50.0/40.0/10.0 W/D/L; median $\Delta=0.5$, $p=0.406$, $r_{\mathrm{rb}}=0.476$).
This indicates that the budget-efficiency benefit transfers to the image modality, while quality gains are less decisive at the current sample size.

\paragraph{Caveat on statistical interpretation.}
Unless stated otherwise, $p$-values are reported without correction for multiple comparisons; given small per-paradigm sample sizes, we treat per-paradigm $p$-values as supportive evidence rather than definitive proof and emphasize the pooled-cohort trends and W/D/L distributions.

\section{Experiment - Robustness}
\subsection{Model Pool}
\label{app:robust-model-pool}

The details of the model pool that is used for robustness analysis are shown in Table~\ref{tab:model_benchmarks}.

\begin{table*}[ht]
\caption{Models, Benchmark Performance, and Normalized Scores.}
\centering
\scriptsize
\setlength{\tabcolsep}{3.5pt}
\renewcommand{\arraystretch}{1.1}
\begin{tabular}{r p{6.2cm} p{3.6cm} r r r}
\toprule
ID & Model ID & Model & MMLU~Pro & AIME2025 & Score \\
\midrule
1 & \url{deepseek/deepseek-v3.2-speciale} & DeepSeek V3.2 Speciale & 86.3 & 96.7 & 100 \\
2 & \url{z-ai/glm-4.7} & GLM-4.7 & 85.6 & 95 & 98.24 \\
3 & \url{moonshotai/kimi-k2-thinking} & Kimi K2 Thinking & 84.8 & 94.7 & 97.93 \\
4 & \url{deepseek/deepseek-v3.2} & DeepSeek V3.2 & 86.2 & 92 & 95.14 \\
5 & \url{x-ai/grok-4-fast} & Grok 4 Fast & 85 & 89.7 & 92.76 \\
6 & \url{x-ai/grok-4.1-fast} & Grok 4.1 Fast & 85.4 & 89.3 & 92.35 \\
7 & \url{z-ai/glm-4.6} & GLM-4.6 & 82.9 & 86 & 88.93 \\
8 & \url{z-ai/glm-4.6v} & GLM-4.6V & 79.9 & 85.3 & 88.21 \\
9 & \url{anthropic/claude-haiku-4.5} & Claude 4.5 Haiku & 76 & 83.7 & 86.56 \\
10 & \url{minimax/minimax-m2.1} & MiniMax-M2.1 & 87.5 & 82.7 & 85.52 \\
11 & \url{z-ai/glm-4.5-air} & GLM-4.5-Air & 81.5 & 80.7 & 83.45 \\
12 & \url{minimax/minimax-m2} & MiniMax-M2 & 82 & 78.3 & 80.97 \\
13 & \url{z-ai/glm-4.5} & GLM-4.5 & 83.5 & 73.7 & 76.22 \\
14 & \url{z-ai/glm-4.5v} & GLM-4.5V & 78.8 & 73 & 75.49 \\
15 & \url{google/gemini-2.5-flash} & Gemini 2.5 Flash & 83.2 & 60.3 & 62.36 \\
16 & \url{moonshotai/kimi-k2-0905} & Kimi K2 0905 & 81.9 & 57.3 & 59.26 \\
17 & \url{google/gemini-2.5-flash-lite} & Gemini 2.5 Flash-Lite & 75.9 & 53.3 & 55.12 \\
18 & \url{openai/gpt-5-chat} & GPT-5 (ChatGPT) & 82 & 48.3 & 49.95 \\
19 & \url{openai/gpt-4.1-mini} & GPT-4.1 mini & 78.1 & 46.3 & 47.88 \\
20 & \url{mistralai/mistral-large-2512} & Mistral Large 3 & 80.7 & 38 & 39.3 \\
21 & \url{mistralai/devstral-2512} & Devstral 2 & 76.2 & 36.7 & 37.95 \\
22 & \url{openai/gpt-4.1} & GPT-4.1 & 80.6 & 34.7 & 35.88 \\
23 & \url{mistralai/ministral-8b-2512} & Ministral 3 8B & 64.2 & 31.7 & 32.78 \\
24 & \url{mistralai/ministral-14b-2512} & Ministral 3 14B & 69.3 & 30 & 31.02 \\
25 & \url{openai/chatgpt-4o-latest} & GPT-4o & 80.3 & 25.7 & 26.58 \\
26 & \url{openai/gpt-4.1-nano} & GPT-4.1 nano & 65.7 & 24 & 24.82 \\
27 & \url{google/gemma-3-27b-it} & Gemma 3 27B Instruct & 66.9 & 20.7 & 21.41 \\
28 & \url{meta-llama/llama-4-maverick} & Llama 4 Maverick & 80.9 & 19.3 & 19.96 \\
29 & \url{google/gemma-3-12b-it} & Gemma 3 12B & 59.5 & 18.3 & 18.92 \\
30 & \url{openai/gpt-4o-mini} & GPT-4o mini & 64.8 & 14.7 & 15.2 \\
31 & \url{meta-llama/llama-4-scout} & Llama 4 Scout & 75.2 & 14 & 14.48 \\
32 & \url{google/gemma-3-4b-it} & Gemma 3 4B & 41.7 & 12.7 & 13.13 \\
33 & \url{meta-llama/llama-3.3-70b-instruct} & Llama 3.3 70B Instruct & 71.3 & 7.7 & 7.96 \\
34 & \url{meta-llama/llama-3.1-8b-instruct} & Llama 3.1 8B Instruct & 47.6 & 4.3 & 4.45 \\
35 & \url{meta-llama/llama-3.1-70b-instruct} & Llama 3.1 70B Instruct & 67.6 & 4 & 4.14 \\
36 & \url{meta-llama/llama-3.2-3b-instruct} & Llama 3.2 3B Instruct & 34.7 & 3.3 & 3.41 \\
37 & \url{meta-llama/llama-3.2-1b-instruct} & Llama 3.2 1B Instruct & 20 & 0 & 0 \\
\bottomrule
\end{tabular}
\label{tab:model_benchmarks}
\end{table*}

\subsection{Setup}
\label{app:robust-setup}

Routing Model Prompt: Appendix~\ref{app:routing-mod}

Scorer: A wins if $\textrm{AIME}_{A} > \textrm{AIME}_{B}$

Dataset: Table~\ref{tab:model_benchmarks} will be shuffled using the seed of current run

Seed: 42 to 44 (3 runs)

Hyperparameter: Table~\ref{tab:hyperparameter}

\textbf{CUPID routing prompt:}
\begin{itemize}
    \item CUPID with noisy feedback: The context consists of both the AIME and MMLU-Pro benchmarks.
    \begin{itemize}
        \item Adversarial Feedback: ``Give me a model that is bad at math''
        \item Static Feedback: ``CUPID in the Model Zoo: Online Matchmaking for Selecting Your Dream LLM''
        \item Meaningless Feedback: ``abcdefghijklmnopqrstuvwxyz.''
    \end{itemize}
    \item CUPID with murky objectives: The context consists only of MMLU-Pro benchmark--``Give me a good math model''
    \item CUPID: The context consists of both the AIME and MMLU-Pro benchmarks--``Give me a good math model''
\end{itemize}

\end{document}